\documentclass{article} 
\usepackage{iclr2022_conference,times}

\usepackage{mathtools}
\usepackage[utf8]{inputenc} 
\usepackage[T1]{fontenc}    
\usepackage{booktabs}       
\usepackage{amsfonts}       
\usepackage{nicefrac}       
\usepackage{microtype}      
\usepackage{hyperref}
\usepackage{url}
\usepackage{amsmath}
\usepackage{subfigure}
\usepackage{graphicx}
\usepackage{bm}
\usepackage{amsthm}
\usepackage{xcolor}
\usepackage{multirow}
\usepackage{colortbl}
\usepackage{wrapfig}
\usepackage{adjustbox}
\usepackage{array}
\usepackage{floatrow}
\usepackage{float}

\usepackage[linesnumbered,lined,vlined,ruled]{algorithm2e}

\definecolor{mygray}{gray}{.9}

\DeclarePairedDelimiterX{\infdivx}[2]{(}{)}{%
  #1\;\delimsize\|\;#2%
}

\DeclareMathOperator{\bbP}{\mathbb{P}}
\DeclareMathOperator{\bbQ}{\mathbb{Q}}

\DeclareMathOperator{\bbE}{\mathbb{E}}

\SetCommentSty{mycommfont}
\SetKwInput{KwInput}{input}                
\SetKwInput{KwOutput}{output}              

\newtheorem{lemma}{Lemma}
\newtheorem{prop}{Proposition}

\newtheorem{definition}{Definition}
\newtheorem{theorem}{Theorem}


\renewcommand{\eqref}[1]{Eq.~\ref{eq:#1}}
\newcommand{\Nrm}{\mathcal{N}}

\newcommand{\figref}[1]{Fig.~\ref{fig:#1}}  
\newcommand{\secref}[1]{Sec.~\ref{sec:#1}}
\newcommand{\tabref}[1]{Table ~\ref{tab:#1}}  
\newcommand{\algoref}[1]{Algorithm~\ref{algo:#1}}  
\newcommand{\propref}[1]{Prop.~\ref{prop:#1}}

\newcommand{\lemmaref}[1]{Lemma.~\ref{lemma:#1}}
\newcommand{\defnref}[1]{Defn.~\ref{defn:#1}}
\newcommand{\thmref}[1]{Thm.~\ref{thm:#1}}
\newcommand{\suppsecref}[1]{App.~\ref{supp:#1}}  

\newcommand{\vx}{\mathbf{x}}

\newcommand{\Dat}{\mathcal{D}}

\newcommand{\vxloc}{\mathcal{X}}

\newcommand{\vphi}{\mathbf{\ensuremath{\bm{\phi}}}}

\newcommand{\vt}{\mathbf{t}}

 

\newcommand{\vz}{\mathbf{z}}

\newcommand{\Ep}{\mathbb{P}}
\newcommand{\Eq}{\mathbb{Q}}

\newcommand{\vy}{\mathbf{y}}

\title{PEARL: Data Synthesis via Private Embeddings and Adversarial Reconstruction Learning}


\author{Seng Pei Liew, Tsubasa Takahashi, Michihiko Ueno \\
LINE Corporation\\
\texttt{\{sengpei.liew,tsubasa.takahashi,michihiko.ueno\}@linecorp.com} 
}

%

\iclrfinalcopy 
\begin{document}

\maketitle

\begin{abstract}
We propose a new framework of synthesizing data using deep generative models in a differentially private manner.
Within our framework, sensitive data are sanitized with rigorous privacy guarantees in a one-shot fashion, such that training deep generative models is possible without re-using the original data.
Hence, no extra privacy costs or model constraints are incurred, in contrast to popular gradient sanitization approaches, which, among other issues, cause degradation in privacy guarantees as the training iteration increases.
We demonstrate a realization of our framework by making use of the characteristic function and an adversarial re-weighting objective, which are of independent interest as well.
Our proposal has theoretical guarantees of performance, and empirical evaluations on multiple datasets show that our approach outperforms other methods at reasonable levels of privacy.
\end{abstract}
\section{Introduction}
\label{sec:intro}
  
Synthesizing data under differential privacy (DP) (\cite{dwork2006differential,dwork2011firm,Dwork14}) enables us to share the synthetic data and generative model with rigorous privacy guarantees.
Particularly, DP approaches of data synthesis involving the use of deep generative models have received attention lately (\cite{DBLP:journals/corr/abs-2006-12101,DPGAN, DP_CGAN, DBLP:conf/sec/FrigerioOGD19, PATE_GAN, gs-wgan,DBLP:conf/aistats/HarderAP21}).

Typically, the training of such models utilizes gradient sanitization techniques (\cite{abadi2016deep}) that add noises to the gradient updates to preserve privacy.
While such methods are conducive to deep learning, due to composability, each access to data leads to degradation in privacy guarantees, and as a result, the training iteration is limited by the privacy budget. 
Recently, \cite{DBLP:conf/aistats/HarderAP21} has proposed DP-MERF, which first represents the sensitive data as random features in a DP manner and then learns a generator by minimizing the discrepancy between the (fixed) representation and generated data points.
DP-MERF can iterate the learning process of the generator without further consuming the privacy budget; however, it is limited in the learning and generalization capabilities due to its fixed representation.
\textit{In this work, we seek a strategy of training deep generative models privately that is able to resolve the aforementioned shortcomings, and is practical in terms of privacy (e.g., usable image data at $\epsilon \simeq 1$.) }

We propose a private learning framework called \textbf{PEARL} (Private Embeddings and Adversarial Reconstruction Learning).
In this framework, we have i) \textit{no limitation in learning iterations}, and ii) \textit{well-reconstruction capability}.
Towards those preferable properties, our framework first obtains (1) informative embedding of sensitive data and (2) auxiliary information (e.g., hyperparameters) useful for training, both in a differentially private manner, then (3) the generative model is trained implicitly like GANs via the private embedding and auiliary information, where the learning is based on a stochastic procedure that generates data, and (4) a critic distinguishing between the real and generated data. 
The overview of PEARL is illustrated in \figref{overview}.

\begin{figure} 
  \begin{center}
    \includegraphics[width=0.8\textwidth]{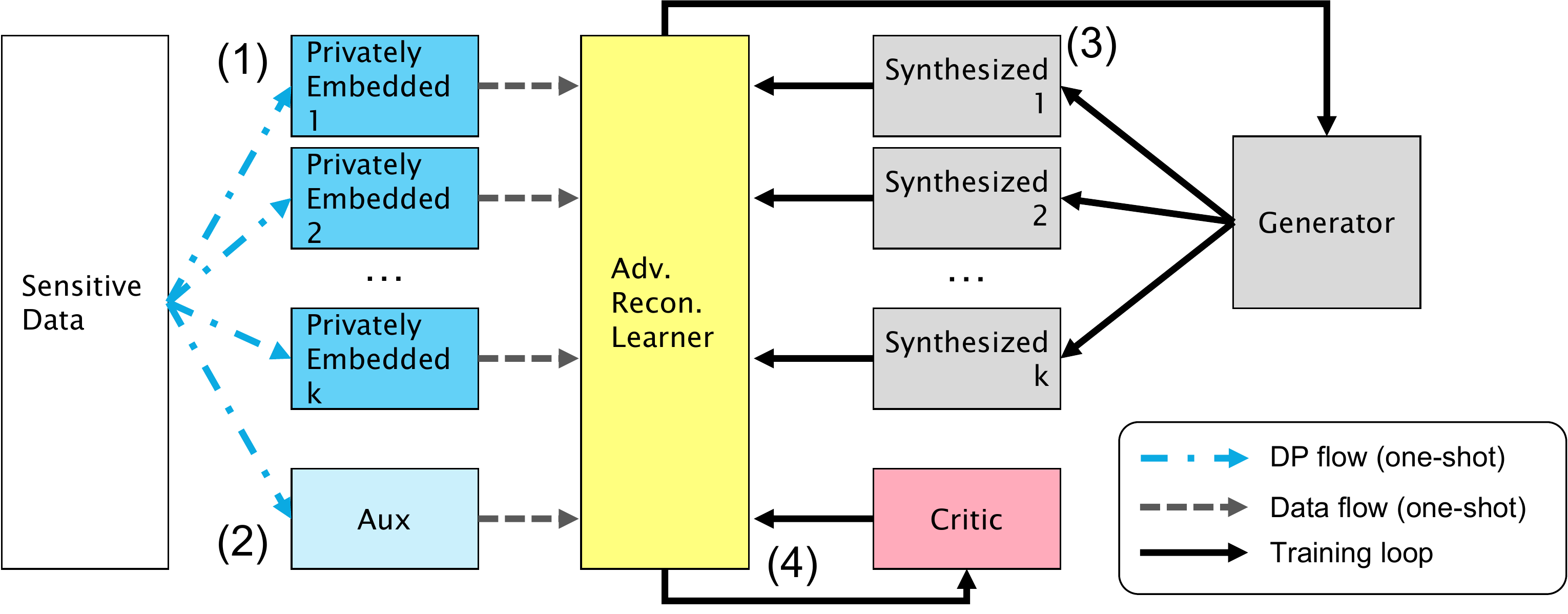}
  \end{center}
  \vspace{-.5em}
  \caption{PEARL is a private learning framework that has i) no limitation in learning iterations, and ii) well-reconstruction capability. Towards those preferable properties, our framework first obtains (1) private embedding and (2) auxiliary information from the sensitive data, then (3) trains a generator while (4) optimizing a critic to distinguish between the real and generated data.}
\label{fig:overview}
\end{figure}
As a concrete realization of PEARL, We first identify that the characteristic function (CF) representation of data can be sanitized as the private embedding of PEARL.
Consequently, it is possible to train deep generative models using an appropriately defined metric measuring the discrepancy between the real (but sanitized) and generated data distribution based on the CF without re-using the original data.
As will be explained in detail in later Sections, the generative modelling approach using CFs also involves sampling ``frequencies'' from an \textit{ad hoc} distribution, to project the data to the embedding. 
It is desirable to optimize the sampling distribution to better represent the data as an embedding, but the naive way of optimizing it would require re-accessing the data via sampling, coming at a cost of privacy budget.
Henceforth, we also propose to incorporate a privacy-preserving critic to optimize the sampling strategy, which, through re-weighting, chooses the best representation from a fixed samples of frequencies without extra privacy costs. 

To this end, we propose the following minimax optimization training objective:
\begin{align}\label{eq:max_d}
\underset{\theta\in\Theta}{\mathrm{inf}}\:
\underset{\omega\in\Omega}{\mathrm{sup}}\:
    \sum_{i=1}^{k} \frac{\omega(\vt_i)}{\omega_0(\vt_i)} \big| \widetilde{\Phi}_{\Ep_r}(\vt_i) - \widehat{\Phi}_{\Eq_{\theta}}(\vt_i) \big|^2.
\end{align}
See later parts for notations and details.
Theoretically, we show that our proposed objective has properties similar to those that are suited to training GANs, i.e., continuity and differentiability of the generator's parameters, and continuity in weak topology.
We also prove the consistency of our privacy-preserving sampling strategy at the asymptotic limit of infinite sampling.
Empirical evaluations show that PEARL is able to high-quality synthetic data at reasonable privacy levels.

\noindent{\bf Related works.}
Traditional methods of synthesizing data are mainly concerned with discrete data or data preprocessed to the discrete form (\cite{privbayes, priview, he2015dpt, chen2015dpdatapublication, cai2021data, zhang2021privsyn}), whereas we are interested in more general methods involving continuous data.
Deep generative models under the DP setting are suitable for this type of tasks (\cite{DBLP:journals/corr/abs-2006-12101,DPGAN, DP_CGAN, DBLP:conf/sec/FrigerioOGD19, PATE_GAN, gs-wgan,DBLP:conf/aistats/HarderAP21}).
The private training of deep generative models is usually performed using gradient sanitization methods.
An exception is DP-MERF (\cite{DBLP:conf/aistats/HarderAP21}), which is closest to our work.
There, random features used to approximate the maximum mean discrepancy (MMD) objective are privatized and utilized for training a generator.
PEARL, which, as a realization, uses CFs, may be viewed as a generalization of DP-MERF.
Additionally, PEARL has several distinctive features which are lacking in DP-MERF.
The first lies in the introduction of a privacy-preserving critic, which leads to an improvement of performance.
The second is the private selection of the parameter of the sampling distribution, which is also shown to be vital.
Moreover, DP-MERF uses non-characteristic kernels when treating tabular data, in contrast to ours, which is characteristic and has guarantees in convergence.
We finally note that generative models using CFs but only non-privately have been explored before (\cite{ansari2020characteristic,DBLP:conf/nips/LiYXM20}) .

\noindent{\bf Contributions.}
Our contribution in this paper is three-fold:
(i) We propose a general framework called PEARL, where, unlike gradient sanitization methods, the generator training process and iteration are unconstrained; reliance on ad-hoc (non-private) hyperparameter tuning is reduced by extracting hyperparameters (auxiliary information) privately.
(ii) We demonstrate a realization of our framework by making use of the characteristic function and an adversarial re-weighting objective.
(iii) Our proposal has theoretical guarantees of performance, and empirical evaluations show that our approach outperforms competitors at reasonable levels of privacy $(\epsilon \simeq 1)$.


\section{Preliminaries}
This Section gives a brief review of essential preliminaries about differential privacy, characteristic function and the related notations.
\subsection{Differential Privacy}
\begin{definition}[($\epsilon$, $\delta$)-Differential Privacy]
Given privacy parameters $\epsilon \geq 0$ and $\delta \geq 0$, a randomized mechanism, $\mathcal{M}: \mathcal{D} \rightarrow \mathcal{R}$ with domain $\mathcal{D}$ and range $\mathcal{R}$ 
satisfies ($\epsilon$, $\delta$)-differential privacy (DP) if for any two adjacent inputs $d, d' \in \mathcal{D}$ and for any subset of outputs $\mathcal{S}\subseteq \mathcal{R}$, the following holds:
\begin{align}
\Pr[\mathcal{M}(d) \in S] \leq e^\epsilon \cdot \Pr[\mathcal{M}(d') \in S] + \delta.
\end{align}
\end{definition}

We next consider concrete ways of sanitizing certain outputs with DP.
A typical paradigm of DP is applying the randomized mechanism, $\mathcal{M}$, to a certain deterministic function $f: \mathcal{D} \rightarrow \mathbb{R}$ such that the output of $f$ is DP.
The noise magnitude added by $\mathcal{M}$ is determined by the \textit{sensitivity} of $f$, defined as $ \Delta_{f} = \sup_{d, d' \in \mathcal{D}} \|f(d)-f(d')\|$, where $||\cdot||$ is a norm function defined on $f$'s output domain. $d$ and $d'$ are any adjacent pairs of dataset.
Laplacian and Gaussian mechanisms are the standard randomized mechanisms.
We primarily utilize the Gaussian mechanism in this paper~(\cite{Dwork14}):
\begin{definition} [Gaussian Mechanism]
Let $f: X \rightarrow \mathbb{R}$ be an arbitrary function with sensitivity $\Delta_{f}$.
The Gaussian Mechanism, $\mathcal{M}_\sigma$, parameterized by $\sigma$, adds noise to the output of $f$ as follows:
\begin{equation}
\label{defn:gaussian_mechanism}
  \mathcal{M_\sigma}(x) = f(x) + \mathcal{N}(0,\sigma^2 I).
\end{equation}
\end{definition}
\vspace{-.5em}
One of the most important properties of DP relevant to our work is the post-processing theorem (\cite{Dwork14}):
\begin{theorem}[Post-processing Theorem] 
\label{thm:post}
Let $\mathcal{M}: \mathcal{D} \rightarrow \mathcal{R}$ be ($\epsilon,\delta$)-DP and let $f:\mathcal{R}\rightarrow \mathcal{R'}$ be an arbitrary randomized function. Then, $f \circ \mathcal{M}: \mathcal{D} \rightarrow \mathcal{R'}$ is ($\epsilon,\delta$)-DP.
\end{theorem}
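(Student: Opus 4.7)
The plan is to prove the theorem in two stages: first for a deterministic post-processing function $f$, then extend to randomized $f$ by conditioning on its internal randomness. The guiding intuition is that any event about $f(\mathcal{M}(d))$ can be rewritten as an event about $\mathcal{M}(d)$ itself, so the differential-privacy guarantee on $\mathcal{M}$ transfers through $f$ unchanged.

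For the deterministic case, I would fix adjacent inputs $d, d' \in \mathcal{D}$ and an arbitrary (measurable) $S \subseteq \mathcal{R}'$, and define the preimage $T := f^{-1}(S) = \{r \in \mathcal{R} : f(r) \in S\}$. Then I would use the identity $\Pr[f(\mathcal{M}(d)) \in S] = \Pr[\mathcal{M}(d) \in T]$, apply the $(\epsilon,\delta)$-DP property of $\mathcal{M}$ to the subset $T$ to obtain $\Pr[\mathcal{M}(d) \in T] \leq e^{\epsilon}\Pr[\mathcal{M}(d') \in T] + \delta$, and finally rewrite the right-hand side back through the same preimage identity applied at $d'$. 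This short chain gives $(\epsilon,\delta)$-DP of $f \circ \mathcal{M}$ directly.

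For the randomized case, the standard move is to represent $f$ via an auxiliary random source $\xi$ drawn independently of $\mathcal{M}$'s randomness, so that $f(r) = \tilde{f}(r,\xi)$ for a deterministic $\tilde{f}$. Conditioning on $\xi = z$ reduces to the deterministic case, yielding pointwise $\Pr[\tilde{f}(\mathcal{M}(d),z) \in S] \leq e^{\epsilon}\Pr[\tilde{f}(\mathcal{M}(d'),z) \in S] + \delta$. Taking expectation over $\xi$ on both sides, linearity preserves the $e^{\epsilon}$ factor and the additive $\delta$, recovering $(\epsilon,\delta)$-DP for the composition. Independence of $\xi$ from $\mathcal{M}$'s internal noise is what makes the conditional argument valid.

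The main subtlety, rather than a true obstacle, is measurability bookkeeping when $\mathcal{R}$ and $\mathcal{R}'$ are general (non-discrete) spaces: I would need the preimage $T$ to be measurable and I would need a Fubini-type justification to interchange the expectation over $\xi$ with the probability over $\mathcal{M}$. Both are standard under the usual measurability conventions for randomized mechanisms, so the overall argument is essentially a one-line reduction once the preimage trick and the independence decomposition of $f$ are in place.
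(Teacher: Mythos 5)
Your proof is correct: the preimage identity $\Pr[f(\mathcal{M}(d)) \in S] = \Pr[\mathcal{M}(d) \in f^{-1}(S)]$ settles the deterministic case, and conditioning on the independent auxiliary randomness (equivalently, viewing a randomized $f$ as a mixture of deterministic functions) extends it, with linearity of expectation preserving both the $e^{\epsilon}$ factor and the additive $\delta$. The paper itself states this theorem without proof, citing it as a standard result from Dwork and Roth (2014), and your argument is precisely the canonical proof given in that reference, so there is nothing to reconcile between the two.
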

It ensures that the DP-sanitized data can be re-used without further consuming privacy budgets.

\subsection{Characteristic Functions}\label{sec_cfintro}
\vspace{-.5em}
Characteristic function (CF) is widely utilized in statistics and probability theory, and perhaps is best known to be used to prove the central limit theorem (\cite{williams1991probability}).
The definition is as follows.
\begin{definition}[Characteristic Function]
Given a random variable $X \subseteq \mathbb{R}^d$ and $\mathbb{P}$ as the probability measure associated with it, and $\vt \in \mathbb{R}^d$, the corresponding characteristic function (CF) is given by
\begin{equation}
\label{eq:chf}
\Phi_{\mathbb{P}}(\vt) = \mathbb{E}_{\mathbf{x}\sim \mathbb{P} }[e^{i\vt \cdot \mathbf{x}}] = \int_{\mathbb{R}^d} e^{i\vt\cdot \mathbf{x}}d\mathbb{P}.
\end{equation}
\end{definition}
Here, $i$ is the imaginary number. 
From the signal processing point of view, this mathematical operation is equivalent to the Fourier transformation, and $\Phi_{\Ep}(\vt)$ is the Fourier transform at frequency $\vt$.
It is noted that we deal with the discrete approximation of CFs in practice. That is, given a dataset with $n$ i.i.d. samples, $\{\vx_j\}^n_{j=1}$ from $\mathbb{P}$,  the empirical CF is written as 
$\widehat{\Phi}_{\mathbb{P}}(\vt) = \frac{1}{n} \sum_{i=j}^n e^{i\vt\cdot \mathbf{x}_j}$.
We next introduce characteristic function distance (CFD) (\cite{heathcote1972test,chwialkowski2015fast}):
\begin{definition}[Characteristic Function Distance]
Given two distributions $\mathbb{P}$ and $\mathbb{Q}$ of random variables residing in $\mathbb{R}^d$, and $\omega$ a sampling distribution on $\vt\in \mathbb{R}^d$, the squared characteristic function distance (CFD) between $\mathbb{P}$ and $\mathbb{Q}$ is computed as:
\begin{equation}\label{eq:cfd}
\mathcal{C}^2(\mathbb{P}, \mathbb{Q}) =
\mathbb{E}_{\vt\sim \omega(\vt) }[ \big| \Phi_{\mathbb{P}}(\vt) - \Phi_{\mathbb{Q}}(\vt) \big|^2]
=
\int_{\mathbb{R}^d} \big| \Phi_{\mathbb{P}}(\vt) - \Phi_{\mathbb{Q}}(\vt) \big|^2 \omega(\vt) d\vt.
\end{equation}
\end{definition}
\paragraph{Notations.} Let us make a short note on the notations before continuing. 
Let $k$ be the number of $\vt$ drawn from $\omega$ and $\Ep$ be the probability measure of a random variable.
We group the CFs associated to $\Ep$ of different frequencies, $(\widehat{\Phi}_{\Ep}({\bf t}_1),\dots,\widehat{\Phi}_{\Ep}({\bf t}_k))^{\top}$ more compactly as $ \widehat{\vphi}_{\Ep}({\bf x})$.
To make the dependence of $\widehat{\vphi}_{\Ep}({\bf x})$ on the sampled data explicit, we also use the following notation: $\widehat{\vphi}_{\Ep}({\bf x}) = \frac{1}{n}\sum_{j=1}^{n}\widehat{\vphi}_{\Ep}(\vx_j)$.
We notice that $\|\widehat{\vphi}_{\Ep}({\bf x})\|_2 \equiv \sqrt{  \sum_{m=1}^k |\widehat{\Phi}_{\Ep}({\bf t}_m)|^2} = \sqrt{  \sum_{m=1}^k  | \sum_{l=1}^n  e^{i \vt_m \cdot \vx_l} /n |^2} \leq \sqrt{  \sum_{m=1}^k  | \sum_{l=1}^n  1/n |^2} = \sqrt{k}$, where the norm is taken over the (complex) frequency space.
With a slight abuse of notation, we abbreviate $\widehat{\vphi}_{\Ep}$ as $\widehat{\vphi}$ when there is no ambiguity in the underlying probability measure associated with the CF.

\section{Generative Model of PEARL}

Let us describe generative modelling under the PEARL framework.
The realization is summarized by the following Proposition:
\begin{prop}
\label{prop:dpcf}
Let the real data distribution be $\Ep_r$, and the output distribution of an implicit generative model $G_\theta$ by $\Eq_\theta$.
Let $n$ be the total number of real data instances and consider releasing $k$ CFs from the dataset.
Then, $G_\theta$ trained to optimize the empirical CFD, $\underset{\theta\in\Theta}{\mathrm{min}} \:
    \widehat{\mathcal{C}}^2(\Ep_{r}, \Eq_{\theta})$ with the CF sanitized according to the Gaussian mechanism (\defnref{gaussian_mechanism}) with sensitivity $2\sqrt{k}/n$ satisfies $(\epsilon, \delta)$-DP, where $\sigma \geq \sqrt{2 \log (1.25/\delta)}/\epsilon$.
\end{prop}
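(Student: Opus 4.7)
The plan is to reduce the claim to three standard ingredients: (i) a sensitivity bound for the empirical CF vector, (ii) the Gaussian mechanism's privacy guarantee, and (iii) the post-processing theorem (Theorem~\ref{thm:post}). The key observation is that, in the training objective $\widehat{\mathcal{C}}^2(\Ep_r,\Eq_\theta)$, the real dataset enters \emph{only} through the empirical CF vector $\widehat{\vphi}_{\Ep_r}(\vx)$; the term $\widehat{\Phi}_{\Eq_\theta}$ is computed from fresh samples of the generator $G_\theta$ and does not touch the sensitive data. So it suffices to show that the released sanitized embedding $\widetilde{\vphi}_{\Ep_r} = \widehat{\vphi}_{\Ep_r}(\vx) + \mathcal{N}(0,\sigma^2 I)$ is $(\epsilon,\delta)$-DP; once this is established, every subsequent use of $\widetilde{\vphi}_{\Ep_r}$ in optimizing $\theta$ is post-processing and incurs no additional cost.

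Next I would bound the $\ell_2$ sensitivity of the map $\vx\mapsto \widehat{\vphi}_{\Ep_r}(\vx) = \tfrac{1}{n}\sum_{j=1}^n \widehat{\vphi}(\vx_j)$. Fix two adjacent datasets $\vx,\vx'$ of size $n$ differing only in a single entry, say the $j_0$-th. Then
\begin{equation*}
\widehat{\vphi}_{\Ep_r}(\vx) - \widehat{\vphi}_{\Ep_r}(\vx') = \tfrac{1}{n}\bigl(\widehat{\vphi}(\vx_{j_0}) - \widehat{\vphi}(\vx'_{j_0})\bigr).
\end{equation*}
Treating the complex vector as a real vector of dimension $2k$ (stacking real and imaginary parts), each coordinate $e^{i\vt_m\cdot \vx_j}$ has unit modulus, so $\|\widehat{\vphi}(\vx_{j_0})\|_2 = \sqrt{\sum_{m=1}^k |e^{i\vt_m\cdot \vx_{j_0}}|^2} = \sqrt{k}$, and similarly for $\vx'_{j_0}$. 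The triangle inequality then yields
\begin{equation*}
\bigl\|\widehat{\vphi}_{\Ep_r}(\vx) - \widehat{\vphi}_{\Ep_r}(\vx')\bigr\|_2 \;\leq\; \frac{2\sqrt{k}}{n},
\end{equation*}
matching the claimed sensitivity $\Delta = 2\sqrt{k}/n$.

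Finally, I would invoke the standard Gaussian mechanism guarantee (Definition~\ref{defn:gaussian_mechanism}) for a function with $\ell_2$-sensitivity $\Delta$: adding $\mathcal{N}(0,\sigma^2 I)$ noise with $\sigma \geq \Delta\sqrt{2\log(1.25/\delta)}/\epsilon$ yields $(\epsilon,\delta)$-DP, which in the paper's normalization corresponds to $\sigma \geq \sqrt{2\log(1.25/\delta)}/\epsilon$ after absorbing $\Delta$. Combining with the post-processing step above, the generator $G_\theta$ trained on $\widetilde{\vphi}_{\Ep_r}$ (and on arbitrary samples from $\Eq_\theta$) satisfies $(\epsilon,\delta)$-DP with respect to the original dataset.

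There is no real obstacle here; the proof is essentially a bookkeeping exercise assembling three standard tools. The only point requiring minor care is that the CF is complex-valued, so one should be explicit that the Gaussian noise is added to the $2k$ real coordinates of $\widehat{\vphi}_{\Ep_r}$ (or equivalently, independent Gaussians to real and imaginary parts), and that the sensitivity bound above is in terms of this same $\ell_2$ norm. Everything else follows directly from the inequality $|e^{i\vt\cdot \vx}|=1$ and the triangle inequality.
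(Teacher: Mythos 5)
Your proposal is correct and follows essentially the same route as the paper's own proof: bound the $\ell_2$ sensitivity of the empirical CF vector via cancellation on adjacent datasets, the unit-modulus bound $\|\widehat{\vphi}(\cdot)\|_2 \leq \sqrt{k}$, and the triangle inequality; then apply the Gaussian mechanism and conclude by the post-processing theorem since the generator term never touches the sensitive data. Your explicit remark that the complex embedding should be treated as a real vector of dimension $2k$ (with the $\ell_2$ norm preserved under this identification) is a small point of rigor that the paper leaves implicit, but it does not change the argument.
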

\begin{proof}
In the following we give a detailed explanation of the above Proposition.
The first step of PEARL is projecting the data to the CF as in \eqref{chf}, where the number of embeddings is the number of frequency drawn from $\omega(\vt)$. 

We note that CF has several attractive properties. 
CF is uniformly continuous and bounded, as can be seen from its expression in \eqref{chf}.
Unlike the density function, the CF of a random variable always exists, and the uniqueness theorem implies that two distributions are identical if and only if the CFs of the random variables are equal (\cite{lukacs1972survey}).

The CF is sanitized with DP by applying the Gaussian mechanism (\defnref{gaussian_mechanism}) to $\widehat{\phi}(\vx)$:
\begin{align}\label{eq:noise_rf}
\widetilde{{\vphi}}(\vx) = \widehat{\vphi}(\vx)  + \Nrm(0, \Delta_{\widehat{\vphi}(\vx)}^2\sigma^2 I),
\end{align}
where we write the sanitized CF as $\widetilde{{\vphi}}(\vx)$; $\Delta_{\widehat{\vphi}(\vx)}$ denotes the sensitivity of the CF, $\sigma$ denotes the noise scale which is determined by the privacy budget, ($\epsilon, \delta$).

The calculation of sensitivity is tractable (no ad-hoc clipping commonly required by gradient sanitization methods) 
Without loss of generality, consider two neighboring datasets of size $n$ where only the last instance differs ($\vx_n \neq \vx'_n$). 
The sensitivity of $\widehat{\vphi}(\vx)$ may then be calculated as
\begin{align*}
  \Delta_{\widehat{\vphi}(\vx)} &= \max_{\Dat, \Dat'} \left\| \tfrac{1}{n}\sum_{j=1}^{n}\widehat{\vphi}(\vx_j) - \tfrac{1}{n}\sum_{j=1}^{n}\widehat{\vphi}(\vx'_j) \right\|_2
  = \max_{\vx_n, \vx_n'}\left\| \tfrac{1}{n}\widehat{\vphi}(\vx_n) - \tfrac{1}{n}\widehat{\vphi}(\vx'_n) \right\|_2  = \tfrac{2\sqrt{k}}{n},
\end{align*}
where we have used triangle inequality and $\|\widehat{\vphi}(\cdot) \|_2 \leq \sqrt{k}$.
It can be seen that the sensitivity is proportional to the square root of the number of embeddings (releasing more embeddings requires more DP noises to be added to $\widehat{\vphi}$) but inversely proportional to the dataset size, which is important for controlling the magnitude of noise injection at practical privacy levels, as will be discussed in later Sections.

We would like to train a generative model where the CFs constructed from the generated data distribution, $\mathcal{Y} \subseteq \mathbb{R}^d$, matches those (sanitized) from the real data distribution, $\vxloc \subseteq \mathbb{R}^d$.
A natural way of achieving this is via implicit generative modelling (\cite{MACKAY199573,goodfellow2014generative}).
We introduce a generative model parametrized by $\theta$, $G_\theta: \mathcal{Z} \to \mathbb{R}^d$, which takes a low-dimensional latent vector $z \in \mathcal{Z}$ sampled from a pre-determined distribution (e.g., Gaussian distribution) as the input. 

In order to quantify the discrepancy between the real and generated data distribution, we use the CFD defined in \eqref{cfd}.
Empirically, when a finite number of frequencies, $k$, are sampled from $\omega$, $\mathcal{C}^2(\Ep, \Eq)$ is approximated by 
\begin{equation}\label{eq:ecfd}
    \widehat{\mathcal{C}}^2(\Ep, \Eq) = 
\frac{1}{k} \sum_{i=1}^{k} \big| \widehat{\Phi}_{\Ep}(\vt_i) - \widehat{\Phi}_{\Eq}(\vt_i) \big|^2
 \equiv \left\|  \widehat{\vphi}_\Ep({\bf x})-  \widehat{\vphi}_\Eq({\bf x}) \right\|_2^2,
\end{equation}
where $\widehat{\Phi}_{\Ep}(\vt)$ and $\widehat{\Phi}_{\Eq}(\vt)$ are the empirical CFs evaluated from i.i.d. samples of distributions $\Ep$ and $\Eq$ respectively.
The training objective of the generator is to find the optimal $\theta\in\Theta$ that minimizes the empirical CFD: $     \underset{\theta\in\Theta}{\mathrm{min}} \:
     \widehat{\mathcal{C}}^2(\Ep_{r}, \Eq_{\theta})$.
It can be shown via uniqueness theorem that as long as $\omega$ resides in $\mathbb{R}^d$, $\mathcal{C}(\mathbb{P}, \mathbb{Q}) = 0 \iff \mathbb{P} = \mathbb{Q}$ (\cite{Sriperumbudur2011}).
This makes CFD an ideal distance metric for training the generator.

\noindent {\bf Optimization procedure.}
The generator parameter, $\theta$, is updated as follows. 
$\widehat{\vphi}_{\Ep_r}({\bf x})$ is first sanitized to obtain $\widetilde{\vphi}_{\Ep_r}({\bf x})$, as in \eqref{noise_rf}.
This is performed only for once (one-shot).
Then, at each iteration, $m$ samples of $\vz$ are drawn to calculate $\widehat{\vphi}_{\Eq_\theta}({\bf x})$.
Gradient updates on $\theta$ are performed by minimizing the CFD, $\left\|  \widetilde{\vphi}_{\Ep_r}({\bf x})-  \widehat{\vphi}_{\Eq_\theta}({\bf x}) \right\|_2$.

We note that only the first term, $ \widetilde{\vphi}_{\Ep_r}({\bf x})$, has access to the real data distribution, $\mathcal{X}$, of which privacy is of concern.  
Then, by \thmref{post}, $G_\theta$ trained with respect to $ \widetilde{\vphi}_{\Ep_r}({\bf x})$ is DP.
Furthermore, unlike gradient sanitization methods, the training of $G_\theta$ is not affected by network structure or training iterations.
Once the sanitized CF is released, there is no additional constraints due to privacy on the training procedure.
\end{proof}

\noindent\textbf{DP release of auxiliary information.}
Auxiliary information, e.g., hyperparameters regarding to the dataset useful for generating data with better quality may be obtained under DP.
In our case, one may extract auxiliary information privately from the dataset to select good parameters for the sampling distribution $\omega(\vt)$.
This will be discussed in more detail in the next Sections.

Another example is the modelling of tabular data. 
Continuous columns of tabular data consisting of multiple modes may be modelled using Gaussian mixture models (GMMs) (\cite{NEURIPS2019_254ed7d2}).
GMMs are trainable under DP using a DP version of the expectation-maximization algorithm (\cite{DBLP:conf/aistats/ParkFCW17}).
The total privacy budget due to multiple releases of information is accounted for using the R\'{e}nyi DP composition. See \suppsecref{previous} for the definition of R\'{e}nyi DP.

\section{Adversarial Reconstruction Learning}
\label{sec:critic}
This Section is devoted to proposing a privacy-preserving critic for optimizing $\omega(\vt)$, while giving provable guarantees of performance.

Back to \eqref{cfd} and \eqref{ecfd}, we note that choosing a ``good'' $\omega(\vt)$ or a ``good'' set of sampled frequencies is vital at helping to discriminate between $\Ep$ and $\Eq$. 
For example, if the difference between $\Ep$ and $\Eq$ lies in the high-frequency region, one should choose to use $\vt$ with large values to, in the language of two-sample testing, improve the test power. 


\noindent {\bf Adversarial objective.}
If the resulting empirical CFD remains small due to under-optimized $\omega(\vt)$, while the two distributions still differ significantly, the generator cannot be optimally trained to generate high-quality samples resembling the real data. 
Hence, we consider training the generator by, in addition to minimizing the empirical CFD, maximizing the empirical CFD using an \textit{adversarial objective} which acts as a \textit{critic}, where the empirical CFD is maximized by finding the best sampling distribution.
We consider a training objective of the form 
\begin{equation}
\label{eq:minimax}
\underset{\theta\in\Theta}{\mathrm{inf}}\:
  \underset{\omega \in \Omega}{\mathrm{sup}}\:  
  \widehat{\mathcal{C}_{\omega}}^2(\Ep_{r}, \Eq_{\theta}),
  \end{equation}
where $\Omega$ is some set of probability distribution space of which the sampling distribution $\omega$ lives in.

\noindent {\bf Privacy-preserving optimization.}
It is intractable to directly optimize $\omega$ in the integral form as in \eqref{cfd}.
In this work, we opt for a \textit{ privacy-preserving one-shot} sampling strategy where once the private data is released (by sampling from $\omega$), optimization is performed without further spending the privacy budget.
We believe that such a new formulation of distribution learning is of independent interest as well.

\noindent{\bf Effectiveness of the critic.}
To further motivate why it is preferable to introduce an adversarial objective as in \eqref{minimax}, we present a simple demonstration through the lens of two-sample testing (\cite{chwialkowski2015fast, DBLP:conf/nips/Jitkrittum0CG16}) using synthetic data generated from Gaussian distributions. 
We generate two unit-variance multivariate Gaussian distributions $\Ep,\Eq$, where all dimensions but one have zero mean.
We conduct a two-sample test using the CF to distinguish between the two distributions, which gets more difficult as the dimensionality increases.
We test if the null hypothesis where the samples are drawn from the same distribution is rejected.
Higher rejection rate indicates better test power.

Note that the first dimension (where the distribution differs) of the frequency used to construct CFs is the most discriminative dimension for distinguishing the distributions.
We consider three sets of frequencies: ``unoptimized'', ``normal'', and ``optimized'', where the set of ``unoptimized'' frequencies is optimized with re-weighting.
More details can be found in \suppsecref{simple}.

\begin{wrapfigure}{}{0.5\textwidth}
  \begin{center}
    \includegraphics[width=0.85\textwidth]{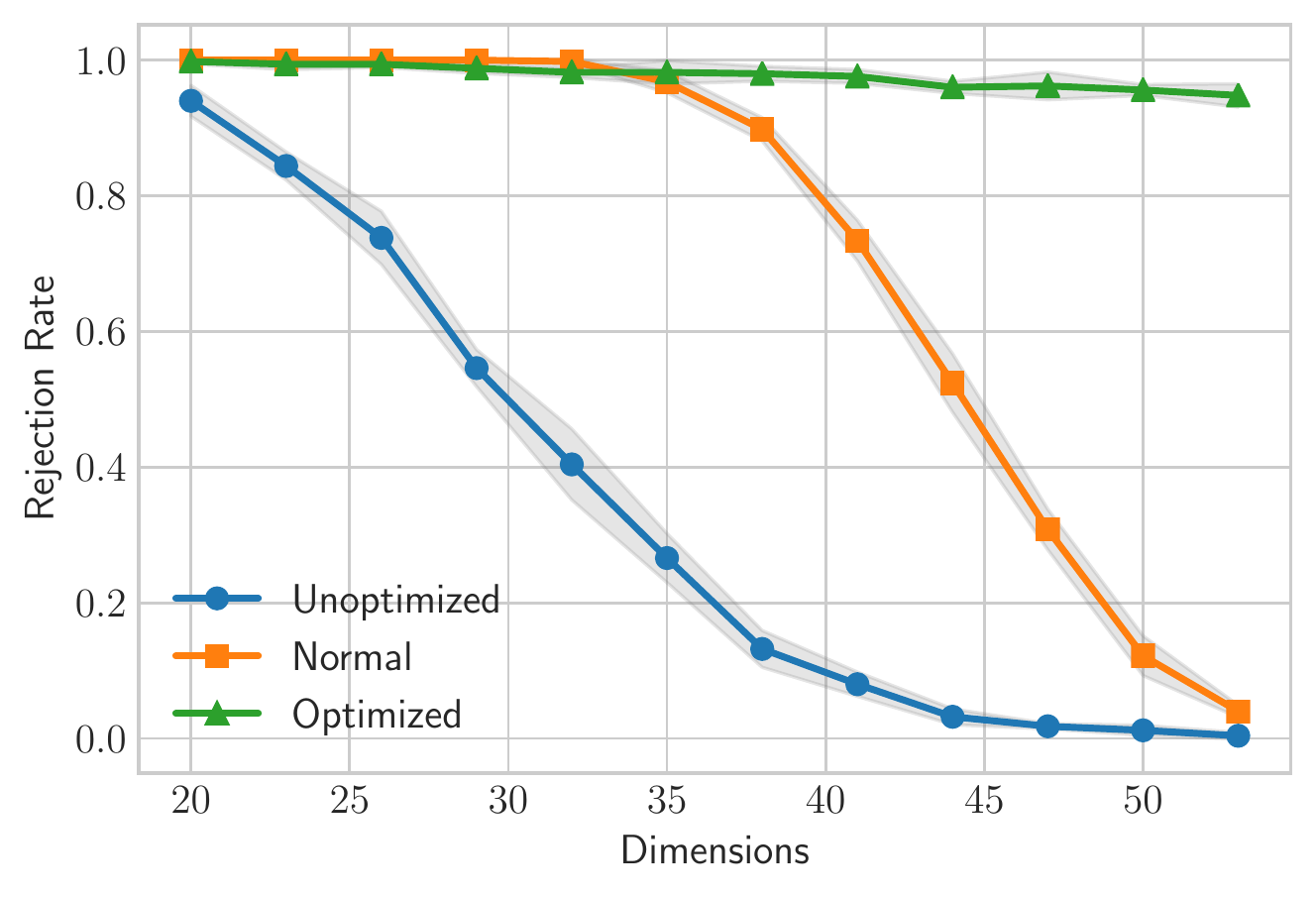}
  \end{center}
  \vspace{-.5em}
  \caption{Increased test power upon optimization (green) in two-sample test.}
      \label{fig:twosample}
\end{wrapfigure}
\figref{twosample} shows the hypothesis rejection rate versus the number of dimensions for the three cases considered above.
As can be observed, the ``optimized'' case gives overall the best test power.
While this experiment is somewhat contrived, it can be understood that although both ``unoptimized'' and ``optimized'' sets of frequencies contain the discriminative $\vt_0$, the re-weighting procedure of selecting the most discriminative CF improves the test power significantly.
Even without re-sampling from a ``better'' $\omega(\vt)$, re-weighting the existing frequencies can improve the test power.
The fact that re-weighting can improve the test power is crucial privacy-wise because the altenative method, re-sampling, causes degradation in privacy.

\noindent{\bf Proposal.} 
Recall that the empirical CFD, $\widehat{\mathcal{C}}^2(\Ep_r, \Eq_{\theta}) = 
\frac{1}{k} \sum_{i=1}^{k} \big| \widetilde{\Phi}_{\Ep_r}(\vt_i) - \widehat{\Phi}_{\Eq_{\theta}}(\vt_i) \big|^2$, is obtained by drawing $k$ frequencies from a base distribution $\omega_0$.
Our idea is to find a (weighted) set of frequencies that gives the best test power from the drawn set. 
We propose \eqref{max_d} as the optimization objective, restated below:
\begin{align*}
\underset{\theta\in\Theta}{\mathrm{inf}}\:
\underset{\omega\in\Omega}{\mathrm{sup}}\:
    \sum_{i=1}^{k} \frac{\omega(\vt_i)}{\omega_0(\vt_i)} \big| \widetilde{\Phi}_{\Ep_r}(\vt_i) - \widehat{\Phi}_{\Eq_{\theta}}(\vt_i) \big|^2.
\end{align*}
Note that the generator trained with this objective still satisfies DP as given in \propref{dpcf} due to \thmref{post}.
The following Lemma ensures that the discrete approximation of the inner maximization of \eqref{max_d} approaches the population optimum as the number of sampling frequency increases ($k \rightarrow \infty$):
\begin{lemma} \label{lemma:importance}
Let  $\omega_0$ be any probability distribution defined on $\mathcal{R}^d$, and let $f:\mathcal{R}^d\rightarrow \mathcal{R'}$  be any function. Also let $\vt \in \mathcal{R}^d$ and $\omega^*$ be the maximal distribution of $\omega$ with respect to ${\mathbb{E}}_{\omega}[f(\vt)] \equiv \int f(\vt) \omega (\vt) d\vt$.
Assume that the empirical approximation $\widehat{\mathbb{E}}_{\omega}[f(\vt)] \rightarrow {\mathbb{E}}_{\omega}[f(\vt)]$ at the asymptotic limit for any $\omega$.
Then, $\widehat{\mathbb{E}}_{\omega_0}[f(\vt)\frac{\omega^*(\vt)}{\omega_0(\vt)}] \rightarrow {\mathbb{E}}_{\omega^*}[f(\vt)]$ at the asymptotic limit as well. 
\end{lemma}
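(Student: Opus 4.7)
The plan is to recognize Lemma \ref{lemma:importance} as essentially the standard consistency statement for importance sampling, so the proof reduces to a change-of-measure identity followed by a direct invocation of the stated asymptotic hypothesis.

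First I would introduce the auxiliary function $g(\vt) := f(\vt)\,\omega^*(\vt)/\omega_0(\vt)$, which is well-defined under the implicit absolute-continuity assumption $\omega^* \ll \omega_0$ (I would state this explicitly as a prerequisite, since without it the ratio is meaningless and the whole importance-sampling scheme fails). The key algebraic observation is then
\begin{equation*}
\mathbb{E}_{\omega_0}[g(\vt)]
= \int_{\mathbb{R}^d} f(\vt)\,\frac{\omega^*(\vt)}{\omega_0(\vt)}\,\omega_0(\vt)\,d\vt
= \int_{\mathbb{R}^d} f(\vt)\,\omega^*(\vt)\,d\vt
= \mathbb{E}_{\omega^*}[f(\vt)],
\end{equation*}
i.e.\ the population expectation of $g$ under $\omega_0$ equals the population expectation of $f$ under $\omega^*$.

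Second, I would apply the hypothesis $\widehat{\mathbb{E}}_{\omega}[h(\vt)] \to \mathbb{E}_{\omega}[h(\vt)]$, which I interpret (as the lemma evidently intends) as valid for any fixed integrable test function $h$ and any probability measure $\omega$ on $\mathbb{R}^d$. Taking $\omega = \omega_0$ and $h = g$ gives
\begin{equation*}
\widehat{\mathbb{E}}_{\omega_0}\!\left[f(\vt)\,\frac{\omega^*(\vt)}{\omega_0(\vt)}\right] \;\longrightarrow\; \mathbb{E}_{\omega_0}[g(\vt)]
\end{equation*}
in the asymptotic limit. Chaining this with the identity from the previous step yields the claim.

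The main obstacle is not the calculation itself, which is essentially a one-line change-of-variables, but rather pinning down the regularity assumptions that make the argument rigorous: specifically (i) the absolute continuity $\omega^* \ll \omega_0$ so that the Radon--Nikodym-style ratio $\omega^*/\omega_0$ exists pointwise, and (ii) a statement of the hypothesis strong enough to cover the possibly unbounded reweighted integrand $g = f \cdot \omega^*/\omega_0$ rather than just $f$ itself. A standard way to dispatch (ii) is to ask that $\mathbb{E}_{\omega_0}[g(\vt)^2] < \infty$, whereupon an appeal to the strong law of large numbers (or Chebyshev, for convergence in probability) finishes the argument. In practice, these conditions are compatible with $f(\vt) = |\widetilde{\Phi}_{\Ep_r}(\vt) - \widehat{\Phi}_{\Eq_\theta}(\vt)|^2$ being uniformly bounded by $4$, so the reweighted function is integrable whenever the likelihood ratio has a finite second moment under $\omega_0$, which is the usual practical condition for importance sampling to behave well.
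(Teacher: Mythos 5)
Your proof is correct and takes essentially the same route as the paper's: the change-of-measure identity $\mathbb{E}_{\omega^*}[f(\vt)] = \mathbb{E}_{\omega_0}\left[f(\vt)\,\omega^*(\vt)/\omega_0(\vt)\right]$ followed by a direct appeal to the assumed empirical-to-population convergence, applied under $\omega_0$ to the reweighted integrand. The only difference is that you state explicitly the regularity caveats (absolute continuity $\omega^* \ll \omega_0$, and a hypothesis strong enough to cover the reweighted function $f\,\omega^*/\omega_0$ rather than $f$ alone) that the paper's terse ``it is then clear'' step leaves implicit.
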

The proof is in \suppsecref{proof}, and is based on importance sampling.
Empirically, we find that treating $\omega(\vt_i)$ as a free parameter and optimizing it directly does not lead to improvement in performance.
This may be due to the optimization procedure focusing too much on uninformative frequencies that contain merely noises due to DP or sampling.
We perform \textit{parametric} optimization instead, that is, e.g., we perform optimization with respect to $\{\bm{\mu},\bm{\sigma}\}$ if $\omega$ is of the Gaussian form, $\mathcal{N}(\bm{\mu}, \bm{\sigma}^2)$.


\noindent{\bf Performance guarantees.} 
Let us discuss the theoretical properties of \eqref{minimax}.
The objective defined in \eqref{minimax} shares beneficial properties similar to those required to train good GANs, first formulated in (\cite{Arjovsky2017WassersteinG}).
First, the generator learns from a distance continuous and differentiable almost everywhere within the generator's parameters.
Second, the distance is continuous in weak topology, and thus provides informative feedback to the generator (different from, e.g., the Jensen-Shannon divergence, which does not satisfy this property).
We make assumptions similar to those given in (\cite{Arjovsky2017WassersteinG}), and state the first theorem as follows. 
\begin{theorem}
\label{thm:diff}
Assume that $G_\theta(z)$ is locally Lipschitz with respect to $(\theta, z)$; there exists $L((\theta, z)$ satisfying $\mathbb{E}_{z}\left[L(\theta, z)\right] < \infty$; and $\sup_{\omega \in \Omega} \:\mathbb{E}_{\omega}\left[|\vt|\right] < \infty$ for all $\vt$. Then, the function~$\mathrm{sup}_{\omega \in \Omega}\:\mathcal{C}_{\omega}^2(\mathbb{P}_{{r}}, \mathbb{Q}_{\theta})$ is continuous in $\theta \in \Theta$ everywhere, and differentiable in $\theta \in \Theta$ almost everywhere.
\end{theorem}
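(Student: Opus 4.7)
The plan is to adapt the reasoning of Theorem~1 of \cite{Arjovsky2017WassersteinG}, replacing the Wasserstein-side estimate with a direct Lipschitz bound on the characteristic function. The key identity is $\Phi_{\Eq_\theta}(\vt)=\mathbb{E}_z\bigl[e^{\,i\vt\cdot G_\theta(z)}\bigr]$, which transfers local Lipschitz-ness of $G_\theta$ in $\theta$ into local Lipschitz-ness of $\theta\mapsto\Phi_{\Eq_\theta}(\vt)$ for every fixed $\vt$, at the price of only a linear factor $|\vt|$. Once this is in hand, the strategy is to show that $\theta\mapsto\mathcal{C}_\omega^2(\Ep_r,\Eq_\theta)$ is locally Lipschitz with a constant that is \emph{uniform} over $\omega\in\Omega$, push the supremum through, and invoke Rademacher's theorem.

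First I would show the uniform local Lipschitz property. Since each characteristic function has modulus at most $1$, the elementary estimate $\bigl||a|^2-|b|^2\bigr|\le|a-b|(|a|+|b|)\le 4|a-b|$ yields
\begin{equation*}
\bigl||\Phi_{\Ep_r}(\vt)-\Phi_{\Eq_{\theta_1}}(\vt)|^2-|\Phi_{\Ep_r}(\vt)-\Phi_{\Eq_{\theta_2}}(\vt)|^2\bigr|\le 4\,\bigl|\Phi_{\Eq_{\theta_1}}(\vt)-\Phi_{\Eq_{\theta_2}}(\vt)\bigr|.
\end{equation*}
Combining the pointwise bound $|e^{i\vt\cdot x_1}-e^{i\vt\cdot x_2}|\le|\vt|\,|x_1-x_2|$ with the hypothesis $|G_{\theta_1}(z)-G_{\theta_2}(z)|\le L(\theta,z)\,\|\theta_1-\theta_2\|$ on a neighborhood of $\theta$, and pulling the modulus inside the $z$-expectation, I would obtain
\begin{equation*}
\bigl|\Phi_{\Eq_{\theta_1}}(\vt)-\Phi_{\Eq_{\theta_2}}(\vt)\bigr|\le|\vt|\,\mathbb{E}_z\!\left[L(\theta,z)\right]\,\|\theta_1-\theta_2\|.
\end{equation*}
Integrating against $\omega$ gives
\begin{equation*}
\bigl|\mathcal{C}_\omega^2(\Ep_r,\Eq_{\theta_1})-\mathcal{C}_\omega^2(\Ep_r,\Eq_{\theta_2})\bigr|\le 4\,\mathbb{E}_\omega\!\left[|\vt|\right]\,\mathbb{E}_z\!\left[L(\theta,z)\right]\,\|\theta_1-\theta_2\|,
\end{equation*}
and the joint finiteness assumptions $\sup_{\omega\in\Omega}\mathbb{E}_\omega[|\vt|]<\infty$ and $\mathbb{E}_z[L(\theta,z)]<\infty$ collapse the right-hand constant into a single $C>0$ valid for every $\omega\in\Omega$ on a fixed neighborhood of $\theta$.

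The last step is to pass the supremum through. The standard two-line argument (for each $\omega$ write $\mathcal{C}_\omega^2(\Ep_r,\Eq_{\theta_1})\le\mathcal{C}_\omega^2(\Ep_r,\Eq_{\theta_2})+C\|\theta_1-\theta_2\|\le\sup_{\omega'\in\Omega}\mathcal{C}_{\omega'}^2(\Ep_r,\Eq_{\theta_2})+C\|\theta_1-\theta_2\|$, then take $\sup_\omega$ on the left and use symmetry in $\theta_1,\theta_2$) shows that $\sup_{\omega\in\Omega}\mathcal{C}_\omega^2(\Ep_r,\Eq_\theta)$ is itself locally $C$-Lipschitz in $\theta$, hence continuous everywhere on $\Theta$. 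Rademacher's theorem then delivers differentiability almost everywhere. The main obstacle in the argument is precisely the uniformity in $\omega$: the factor $|\vt|$ inherited from $|e^{i\vt\cdot x_1}-e^{i\vt\cdot x_2}|\le|\vt||x_1-x_2|$ could a priori blow up when $\omega$ concentrates on large frequencies, and the hypothesis $\sup_{\omega}\mathbb{E}_\omega[|\vt|]<\infty$ is tailored exactly to absorb this. Everything else --- the modulus-$1$ bound on CFs, the interchange of $\mathbb{E}_z$ with the modulus, and the appeal to Rademacher --- is routine.
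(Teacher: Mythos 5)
Your proposal is correct and follows essentially the same route as the paper's proof: the same chain of estimates (the modulus bound $|\Phi|\le 1$ giving the factor $4$, the difference-of-squares manipulation, $|e^{ia}-e^{ib}|\le|a-b|$, Jensen and Cauchy--Schwarz, the local Lipschitz hypothesis on $G_\theta$), followed by the supremum exchange and Rademacher's theorem. The only difference is organizational---you establish a Lipschitz constant uniform in $\omega$ and pass the supremum through at the end, whereas the paper first bounds the difference of suprema by the supremum of differences and then argues at a maximizing $\omega^*$; your ordering is marginally cleaner since it never assumes the supremum is attained.
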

Note that the local Lipschitz assumptions are satisfied by commonly used neural network components, such as fully connected layers and ReLU activations.
The continuity and differentiability conditions with respect to $\theta$ stated above allow $G_\theta$ to be trained via gradient descent.
The theorem related to continuity in weak topology is the following:
\begin{theorem} \label{thm:cont}
Let $\mathbb{P}$ be a distribution on $\mathcal{X}$ and $ (\mathbb{P}_{n})_{n \in \mathbb{N}}$ be a sequence of distributions on $\mathcal{X}$. 
    Under the assumption $\sup_{\omega \in \Omega}\: \mathbb{E}_{\omega(\vt)}\left[\|\vt\|\right] < \infty$,
    the function~$\mathrm{sup}_{\omega \in \Omega}\:\mathcal{C}_{\omega}^2(\mathbb{P}_{n},\mathbb{P})$ is continuous in the weak topology, i.e., if $\mathbb{P}_n \xrightarrow{D} \mathbb{P}$, then $\mathrm{sup}_{\omega \in \Omega}\:\mathcal{C}_{\omega}^2(\mathbb{P}_{n},\mathbb{P}) \xrightarrow{D} 0$, where $\xrightarrow{D}$ denotes convergence in distribution.
\end{theorem}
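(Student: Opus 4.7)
The plan is to control $\sup_{\omega \in \Omega} \mathcal{C}^2_\omega(\mathbb{P}_n, \mathbb{P})$ by splitting the integration region into a large ball and its complement, and showing that both pieces admit bounds \emph{independent of} $\omega$. Throughout, I rely on two elementary facts about the integrand: it is pointwise bounded by $4$ since characteristic functions take values in the closed unit disc, and it converges to $0$ pointwise by L\'{e}vy's continuity theorem, which identifies weak convergence $\mathbb{P}_n \xrightarrow{D} \mathbb{P}$ with pointwise convergence of the corresponding CFs.

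For the tail $\{\|\vt\| > R\}$, I would apply Markov's inequality together with the hypothesis $M := \sup_{\omega \in \Omega}\mathbb{E}_\omega[\|\vt\|] < \infty$ to obtain, uniformly in $\omega$ and $n$,
\begin{equation*}
\int_{\|\vt\| > R} |\Phi_{\mathbb{P}_n}(\vt) - \Phi_{\mathbb{P}}(\vt)|^2\, \omega(\vt)\, d\vt \;\leq\; 4\,\omega(\|\vt\| > R) \;\leq\; \frac{4M}{R}.
\end{equation*}
For the core region $\{\|\vt\| \leq R\}$, I would use that $\omega$ is a probability measure to pull out an $L^\infty$ bound,
\begin{equation*}
\int_{\|\vt\| \leq R} |\Phi_{\mathbb{P}_n}(\vt) - \Phi_{\mathbb{P}}(\vt)|^2\, \omega(\vt)\, d\vt \;\leq\; \sup_{\|\vt\| \leq R} |\Phi_{\mathbb{P}_n}(\vt) - \Phi_{\mathbb{P}}(\vt)|^2,
\end{equation*}
which again does not depend on $\omega$.

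The crux is then showing that the right-hand side tends to zero as $n \to \infty$, i.e.\ that $\Phi_{\mathbb{P}_n} \to \Phi_{\mathbb{P}}$ \emph{uniformly on compacts}. This is a classical consequence of weak convergence: since $\mathbb{P}_n \xrightarrow{D} \mathbb{P}$ implies tightness of $\{\mathbb{P}_n\}$, the family $\{\Phi_{\mathbb{P}_n}\}$ is equicontinuous on $\mathbb{R}^d$ (uniform continuity at the origin controls it globally via the standard CF inequality), and combining equicontinuity with pointwise convergence, Arzel\`{a}--Ascoli upgrades this to uniform convergence on each ball $\{\|\vt\| \leq R\}$.

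Putting the pieces together, given $\varepsilon > 0$ I would first fix $R$ with $4M/R < \varepsilon/2$, then choose $n$ large enough that $\sup_{\|\vt\| \leq R} |\Phi_{\mathbb{P}_n}(\vt) - \Phi_{\mathbb{P}}(\vt)|^2 < \varepsilon/2$, yielding $\sup_{\omega \in \Omega} \mathcal{C}^2_\omega(\mathbb{P}_n, \mathbb{P}) < \varepsilon$. The main obstacle I anticipate is precisely the uniformity step: naive dominated convergence only gives decay pointwise in $\omega$, so both halves of the argument must be set up so that the bounds are $\omega$-free. The moment hypothesis $\sup_\omega \mathbb{E}_\omega[\|\vt\|] < \infty$ is exactly what makes the tail bound uniform, while equicontinuity from tightness is what makes the compact-part bound uniform; without either, the supremum could fail to vanish even when each individual $\mathcal{C}^2_\omega(\mathbb{P}_n, \mathbb{P})$ does.
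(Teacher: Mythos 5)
Your proof is correct, but it takes a genuinely different route from the paper's. The paper's proof never truncates the frequency domain: it bounds $\mathcal{C}_{\omega^*}^2(\mathbb{P}_n,\mathbb{P}) \leq 2\,\mathbb{E}_{\omega^*}[\|\vt\|]\,\mathbb{E}_{\vx_n,\vx}[\|\vx_n-\vx\|]$ by writing the squared modulus as a product, applying $|Ae^{i\alpha}-Be^{i\beta}|\leq|A|+|B|$, Jensen's inequality over a joint law of $(\vx_n,\vx)$, and the Lipschitz bound $|e^{ia}-e^{ib}|\leq|a-b|$ (the same toolkit as its proof of Theorem~4.1), and then invokes ``weak convergence equivalence'' to send $\mathbb{E}[\|\vx_n-\vx\|]\to 0$. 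That last step implicitly requires a coupling under which the expected distance vanishes, i.e.\ Wasserstein-1 convergence, which follows from weak convergence only with extra structure (compact or bounded $\mathcal{X}$, or uniform integrability) --- harmless in this paper's setting where data lie in a bounded domain, but a genuine restriction otherwise. Your argument sidesteps coupling entirely: the tail is killed uniformly in $\omega$ by Markov's inequality plus the hypothesis $\sup_{\omega\in\Omega}\mathbb{E}_\omega[\|\vt\|]<\infty$, and the compact part by uniform-on-compacts convergence of the characteristic functions (pointwise convergence from weak convergence, upgraded via tightness-induced equicontinuity and Arzel\`a--Ascoli). This makes your proof valid for arbitrary distributions on $\mathbb{R}^d$ without boundedness assumptions, and it isolates exactly why the uniform moment hypothesis is needed (an $\omega$-free tail bound); the cost is that you must import the classical locally-uniform convergence result, whereas the paper's chain of elementary inequalities is shorter and structurally parallel to its other theorem. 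One cosmetic note: the direction of L\'evy's continuity theorem you need (weak convergence implies pointwise CF convergence) is the trivial direction, following directly from testing against $\cos(\vt\cdot\vx)$ and $\sin(\vt\cdot\vx)$.
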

Weakness is desirable as the easier (weaker) it is for the distributions to converge, the easier it will be for the model to learn to fit the data distribution.
The core ideas used to prove both of these theorems are the fact that the difference of the CFs (which is of the form $e^{ia}$) can be bounded as follows: $|e^{ia}-e^{ib}| \leq |a-b|$, and showing that the function is locally Lipschitz, which ensures the desired properties of continuity and differentiability.
See \suppsecref{proof} for the full proofs.

\vspace{-.5em}
\section{Experiments}
\label{sec:exp}
\vspace{-.8em}
\subsection{Experimental Setup}
To test the efficacy of PEARL, we perform empirical evaluations on three datasets, namely MNIST (\cite{lecun2010mnist}), Fashion-MNIST (\cite{xiao2017fashion}) and Adult (\cite{Dua:2019}).
Detailed setups are available in \suppsecref{detail}.

\noindent {\bf Training Procedure.}
As our training involves minimax optimization (\eqref{max_d}),
we perform gradient descent updates based on the minimization and maximization objectives alternately. 
We use a zero-mean diagonal standard-deviation Gaussian distribution, $\mathcal{N}(\bm{0}, \mathrm{diag}(\bm{\sigma}^2)$ as the sampling distribution, $\omega$.
Maximization is performed with respect to $\bm{\sigma}$.
Let us give a high-level description of the full training procedure: draw $k$ frequencies from $\omega$, calculate the CFs with them and perform DP sanitization, train a generator with the sanitized CFs using the minimax optimization method.
The pseudo-code of the full algorithm is presented in \suppsecref{train}.
We further give the computational complexity analysis of our algorithm in \suppsecref{complex}.

\noindent{\bf DP release of auxiliary information.}
We also note that applying maximization (re-weighting) on a randomly selected set of frequencies would not work well.
We initially fix the inverse standard deviation of the base distribution, $\omega_0$, to be the DP estimate of the mean of the pairwise distance of the data, to obtain (privately) a spectrum of frequencies that overlaps with the "good" frequencies. 
This is motivated by the median heuristic (\cite{garreau2018large}). Mean is estimated instead of median as its sensitivity is more tractable when considering neighboring datasets. 
We obtain $\Delta = 2\sqrt{d}/n$ as its sensitivity, where $d$ and $n$ are the data dimension and the number data instances respectively.
See \suppsecref{dpmean} for the derivation.
\footnote{We note that DP-MERF also uses a sampling distribution. Since privacy budget is not allocated for calculating the parameters of the sampling distribution there, we fix its sampling distribution to be $\mathcal{N}(\bm{0}, \mathrm{diag}(\bm{1}))$.}
Using this DP estimate, we also investigate as a ablation study the generator performance with minimization objective only (w/o critic).  
The full PEARL framework includes the use of the critic to find the "best" frequencies among the selected spectrum to distinguish the distributions (and overall perform a minimax optimization). 

\noindent {\bf Evaluation Metrics.}
In the main text and \suppsecref{result}, we show qualitative results, i.e., synthetic images (image dataset) and histograms (tabular dataset) generated with PEARL.
Furthermore, for image datasets, the Fr\'echet Inception Distance (FID) (\cite{heusel2017gans}) and Kernel Inception Distance (KID) (\cite{binkowski2018demystifying}) are used to evaluate the quantitative performance. 
For tabular data, we use the synthetic data as the training data of 10 scikit-learn classifiers (\cite{scikit-learn}) and evaluate the classifiers' performances on real test data.
The performance indicates how well synthetic data generalize to the real data distribution and how useful synthetic data are in machine learning tasks.
ROC (area under the receiver operating characteristics curve) and PRC (area under the precision recall curve) are the evaluation metrics.
Definitions of FID and KID are in \suppsecref{metric}.
 \vspace{-.2em}
\subsection{Evaluation Details}
\label{sec:eval}
\noindent {\bf MNIST and Fashion-MNIST.}
Privacy budget is allocated equally between the sanitization of CFs and the release of auxiliary information. \footnote{We expect that CF sanitization requires more privacy allocation as it is involved in doing the heavy lifting of training the model.
We find that allocating around 80-90\% of the budget to CF sanitization can increase the performance by around 15\%.
Nevertheless, we utilize an equal share of privacy budget for fairness reasons, as tuning the budget allocation would require re-using the private data and this can lead to privacy leakage.}
On a single GPU (Tesla V100-PCIE-32GB), training MNIST (with 100 epochs) requires less than 10 minutes.

Some of the generated images of ours and baseline models are shown in \figref{mnist}, and more (including baselines') can be found in \suppsecref{result}.
At the non-private limit, the image quality is worse than other popular non-private approaches such as GANs due to two reasons. 
First, projecting data to a lower dimension causes information loss. 
Second, our architecture does not have a discriminator-like network as in the vanilla GAN framework. 
However, we notice that the quality of the images does not drop much as the privacy level increases (except at $\epsilon = 0.2$, where the quality starts to degrade visibly) because the noise added to the CFs is small as it scales inversely proportional to the total sample size.
It also indicates that our approach works particularly well at practical levels of privacy. 
\begin{figure*}[!t]
\vspace{8pt}
\centering
    \hspace{-0.1cm}
    \subfigure[$\epsilon = 0.2$ (MNIST)]{
    \includegraphics[width=0.21\columnwidth]{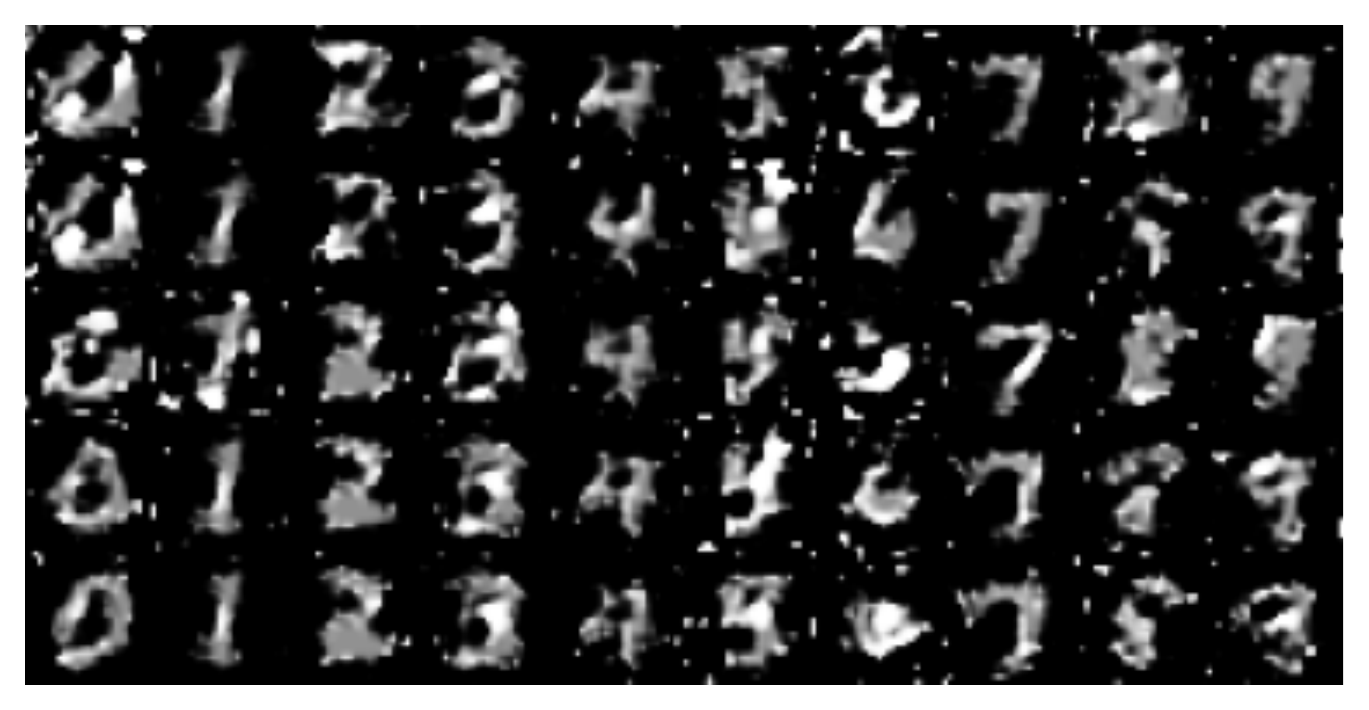}
    }
    \hspace{-0.1cm}
    \subfigure[$\epsilon = 1$ (MNIST)]{
    \includegraphics[width=0.21\columnwidth]{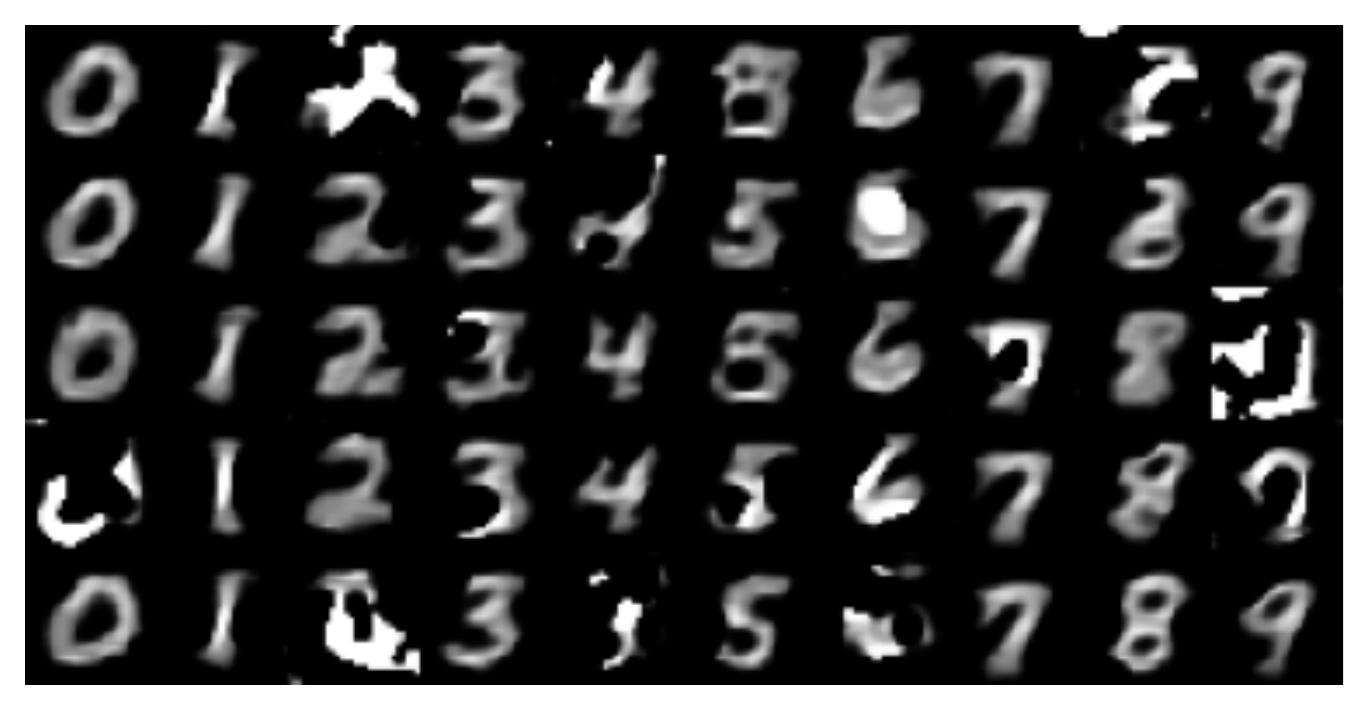}
    }
    \subfigure[$\epsilon = 10$ (MNIST)]{
    \includegraphics[width=0.21\columnwidth]{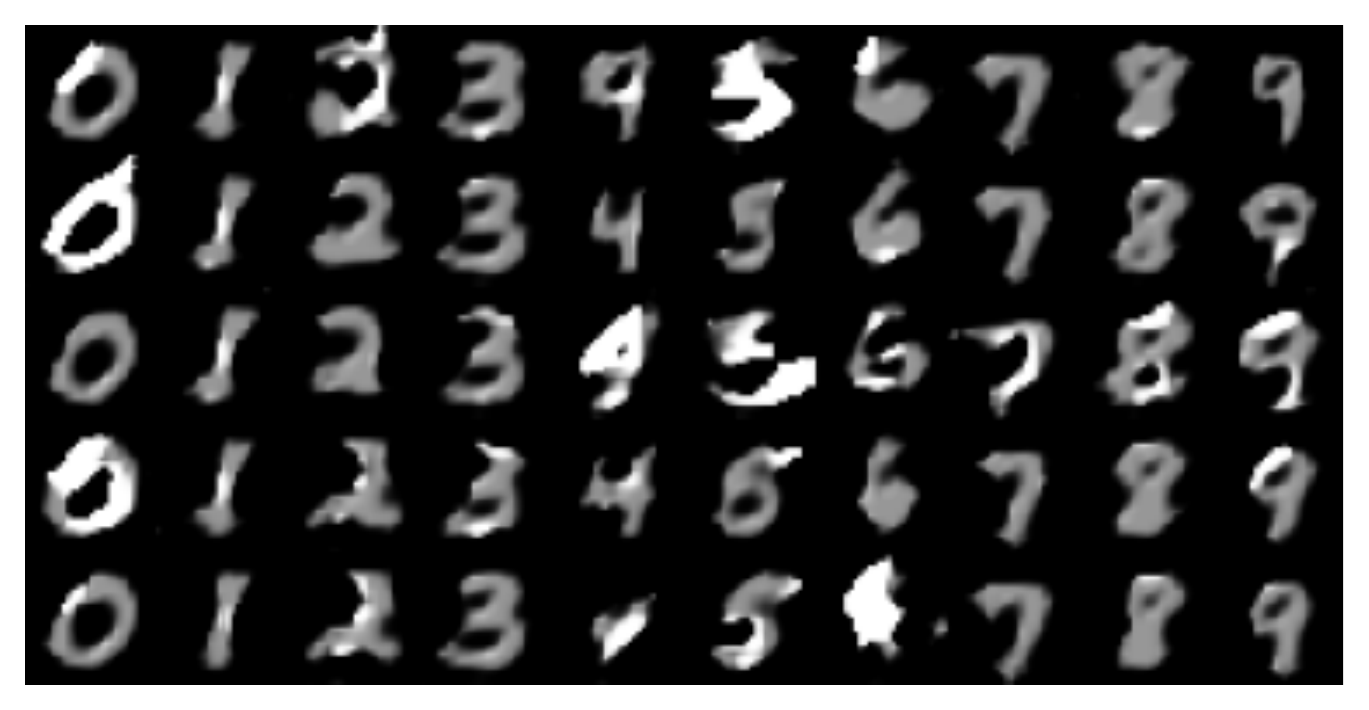}
    }
    \hspace{-0.1cm}
     \subfigure[$\epsilon = \infty$ (MNIST)]{
    \includegraphics[width=0.21\columnwidth]{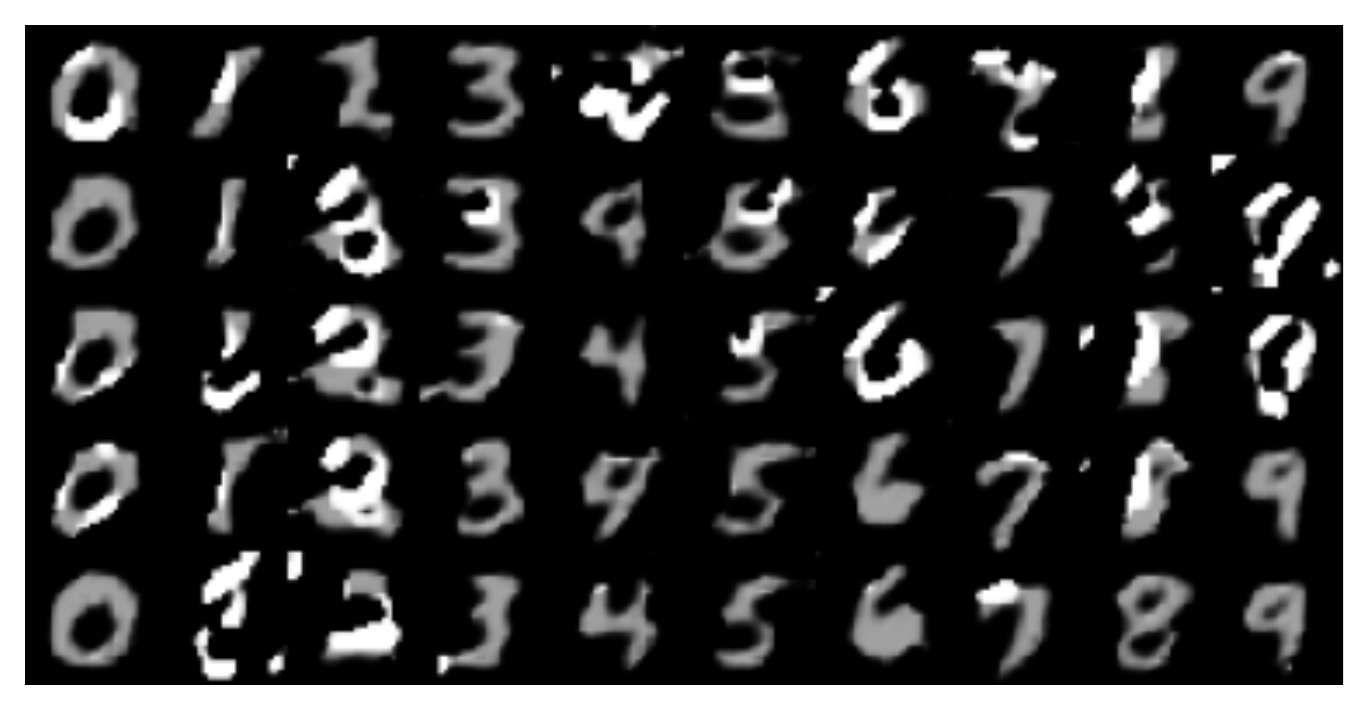}
    }
    \hspace{-0.1cm}
    \subfigure[$\epsilon = 0.2$ (FMNIST)]{
    \includegraphics[width=0.21\columnwidth]{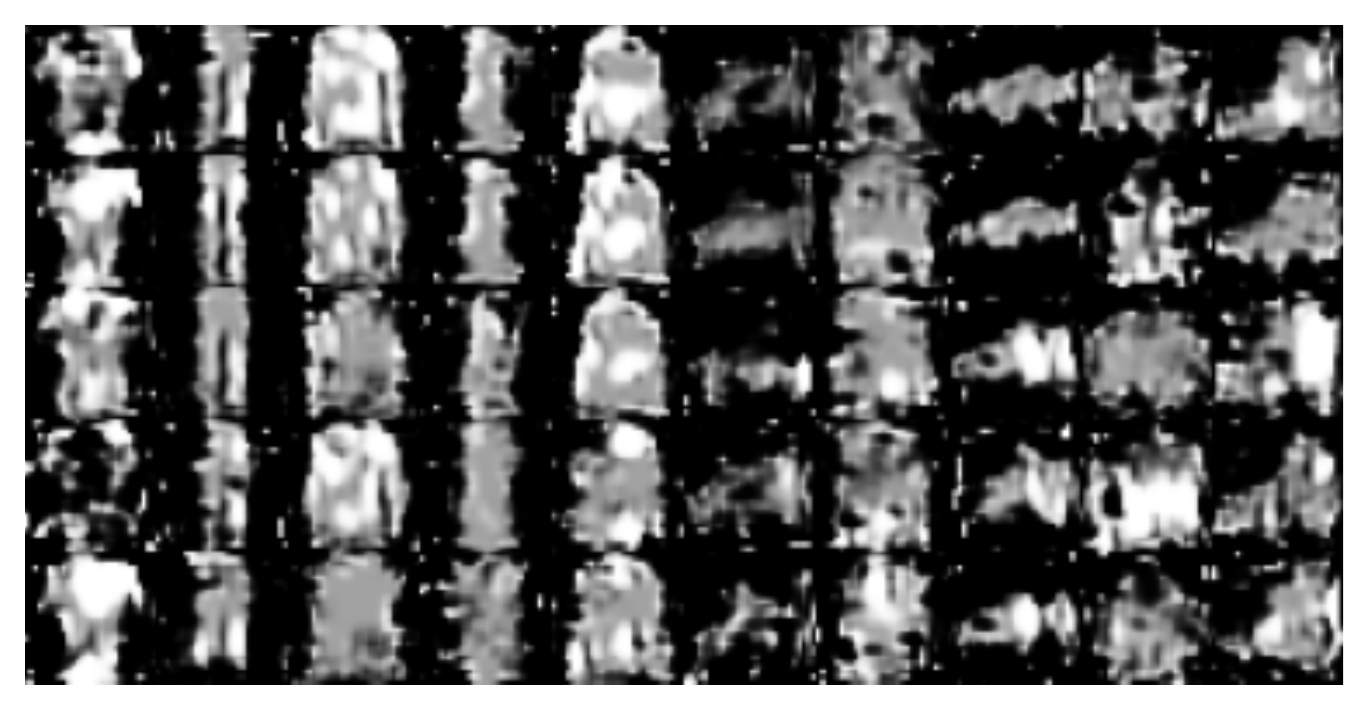}
    }
    \hspace{-0.1cm}
    \subfigure[$\epsilon=1$ (FMNIST)]{
    \includegraphics[width=0.21\columnwidth]{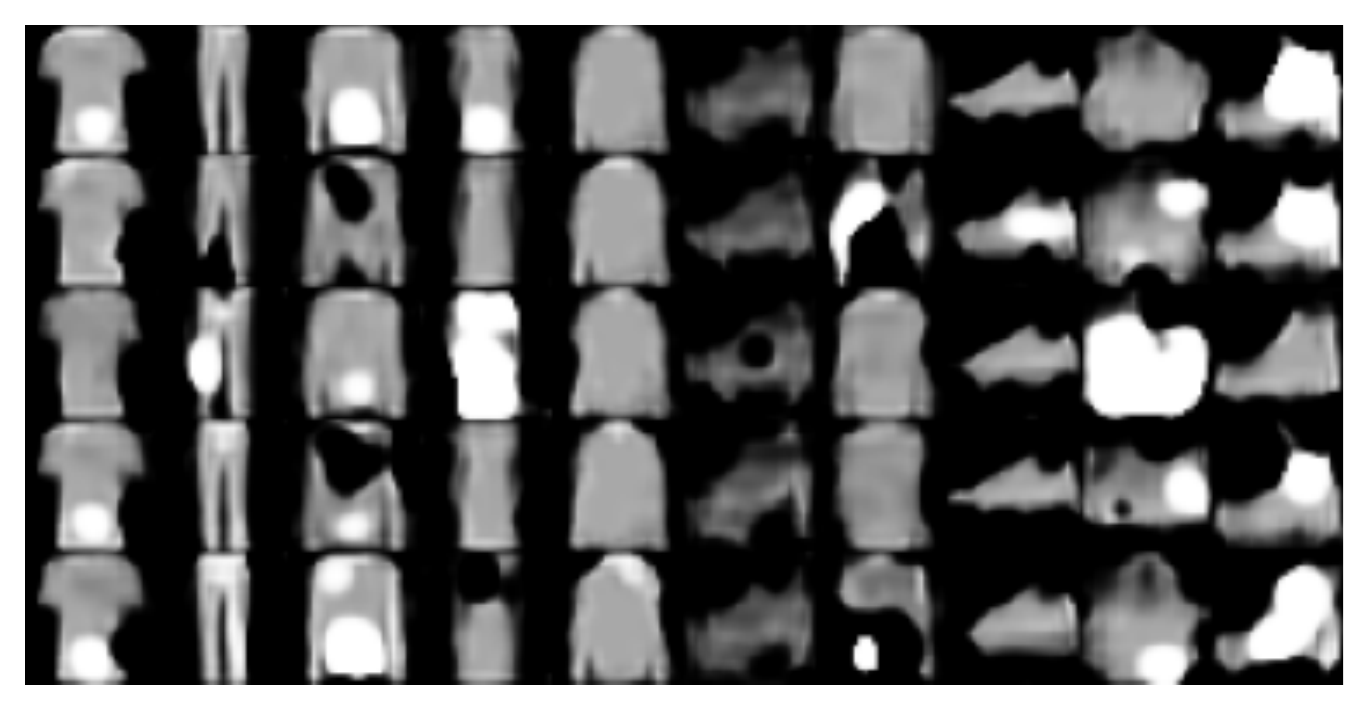}
    }
    \hspace{-0.1cm}
     \subfigure[$\epsilon = 10$ (FMNIST)]{
    \includegraphics[width=0.21\columnwidth]{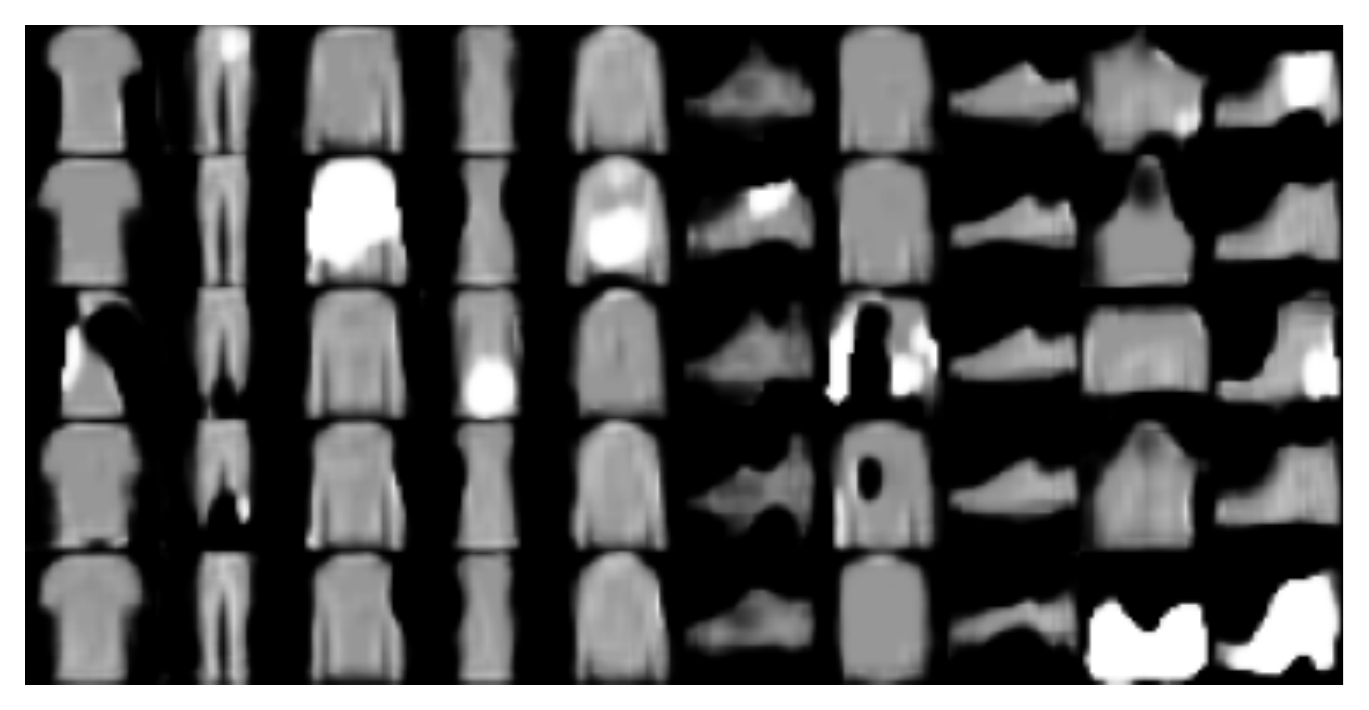}
    }
    \hspace{-0.1cm}
      \subfigure[$\epsilon=\infty$ (FMNIST)]{
    \includegraphics[width=0.21\columnwidth]{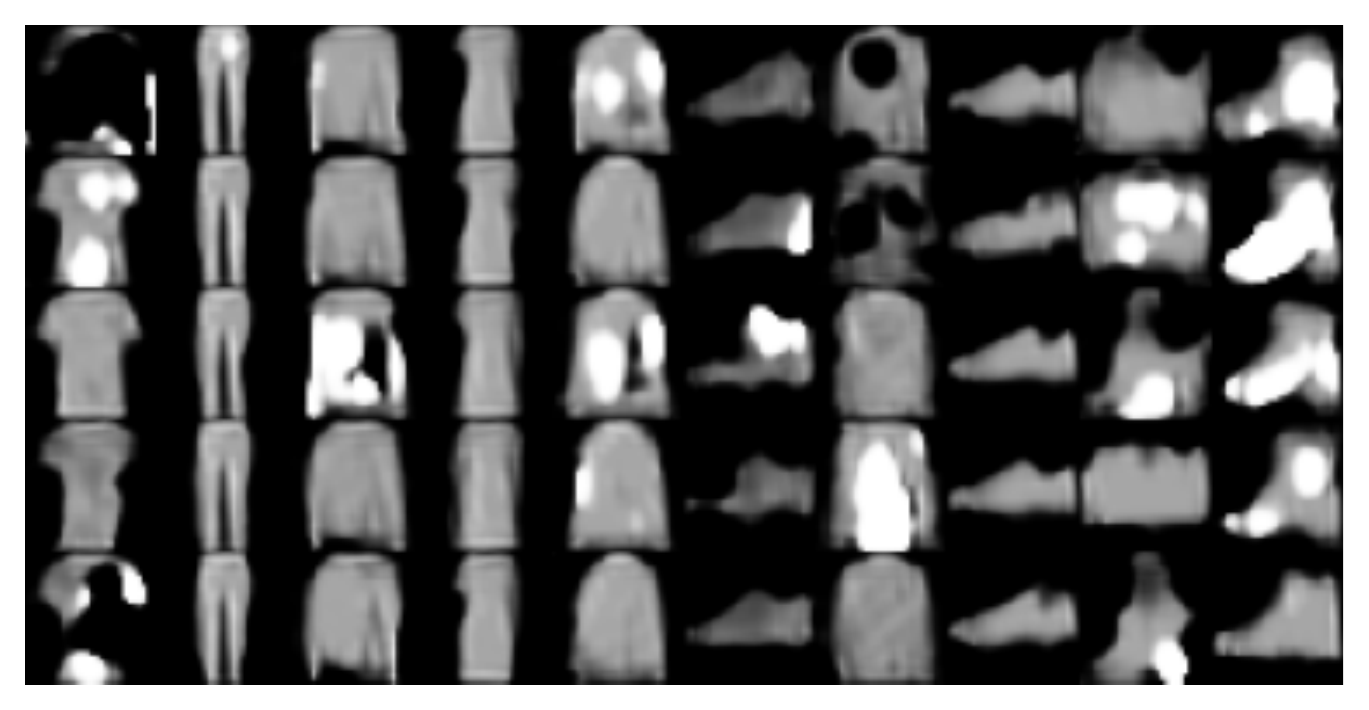}
    }
     \vspace{-6pt}
     \caption{Generated MNIST and Fashion-MNIST samples for various values of $\epsilon$ and $\delta = 10^{-5}$.
    }
\label{fig:mnist}
\end{figure*}
\begin{table}[t]
\centering
\caption{FID and KID (lower is better) on image datasets at $(\epsilon, \delta) = (1, 10^{-5})$.}
 \vspace{-.5em}
\label{tab:mnist-quant}
\scalebox{0.8}{
\begin{tabular}{llllllll}
\toprule
Datasets& Metrics & DPGAN&DPCGAN&DP-MERF & {\bf Ours} (w/o critic) & {\bf Ours}   \\
\midrule
MNIST & FID & $22.1 \pm 1.01$& $17.3 \pm 2.90$ & $49.9 \pm 0.22$  & $3.79 \pm 0.06$ &  $\bf 3.52 \pm  0.06$ \\
 & KID $(\times 10^3)$ &  $573 \pm 41.2$& $286 \pm 69.5$& $148 \pm 46.2$ & $77.8 \pm 9.88$  & $\bf 70.5 \pm 10.3$\\ 
 \midrule
Fashion-MNIST & FID  &$16.6 \pm 1.34$&$14.61 \pm 1.75$ & $37.0 \pm 0.15$&  $1.99 \pm 0.04$ & $\bf 1.92 \pm  0.04$ \\
 & KID $(\times 10^3)$  &$535\pm 61.5$& $425 \pm 64.2$& $1220 \pm 36.1$& $\bf 24.0 \pm 6.90$  & $26.9 \pm 6.80$\\ 
 \bottomrule
\end{tabular}}
\vspace{-6pt}
\end{table}

We now provide quantitative results by performing evaluation at $(\epsilon, \delta) = (1, 10^{-5})$.
Note that we focus on this high privacy region despite the fact that many previous studies experimented with $\epsilon \simeq 10$, because recent analyses showed that realistic attacks can lead to privacy violations at low privacy (\cite{jagielski2020auditing,nasr2021adversary}).
Particularly, the claim ``values of $\epsilon$ that offer no meaningful theoretical guarantees'' ($\epsilon \gg 1$) can be chosen in practice has been ``refuted'' in general (\cite{nasr2021adversary}).
This motivates us to perform evaluation at $\epsilon$ with meaningful theoretical guarantees ($\epsilon \simeq 1$).

At this privacy region, we compare PEARL with models using gradient sanitization and DP-MERF (\cite{DBLP:conf/aistats/HarderAP21}) as other methods do not produce usable images at single-digit $\epsilon$.
\footnote{In the gradient sanitized generative model literature, GS-WGAN (\cite{gs-wgan}) is known to be the state-of-the-art, but we are unable to train a stable model at $\epsilon = 1$. Thus, we make comparisons with other standard methods, namely DPGAN (\cite{DPGAN}) and DPCGAN (\cite{DP_CGAN}).}
We run the experiment five times (with different random seeds each time), and for each time, 60k samples are generated to evaluate the FID and KID.
In \tabref{mnist-quant}, the FID and KID (average and error) of DP-MERF, DPGAN, PEARL without critic, and PEARL are shown.  
It can be seen that PEARL outperforms DP-MERF significantly, and the critic leads to improvement in the scores.

\begin{table}[]
    \centering
\caption{Quantitative results for the Adult dataset evaluated at $(\epsilon, \delta) = (1, 10^{-5})$.}
    \label{tab:adult}
    \scalebox{0.8}{
\begin{tabular}{lcccccc}
\toprule
&   \multicolumn{2}{c}{Real data}   &  \multicolumn{2}{c}{DP-MERF}  &   \multicolumn{2}{c}{Ours} \\
\cmidrule(lr){2-3}\cmidrule(lr){4-5}\cmidrule(lr){6-7}
    &         ROC &         PRC &     ROC  &     PRC &   ROC &   PRC \\
\cmidrule(lr){1-1}\cmidrule(lr){2-3}\cmidrule(lr){4-5}\cmidrule(lr){6-7}
 LR    &  0.788 &  0.681 &  $0.661 \pm 0.059$ & $0.542 \pm 0.041$ & $\bf 0.752 \pm 0.009$ & $\bf 0.641 \pm 0.015$ \\
 Gaussian NB   &  0.629 &  0.511 & $0.587 \pm 0.079$ & $0.491 \pm 0.06$  & $\bf 0.661 \pm 0.036$ & $\bf 0.537 \pm 0.028$ \\
 Bernoulli NB  &  0.769 &  0.651 & $0.588 \pm 0.056$ & $0.488 \pm 0.04$  & $\bf 0.763 \pm 0.008$ & $\bf 0.644 \pm 0.009$ \\
 Linear SVM             &  0.781 &  0.671 & $0.568 \pm 0.091$ & $0.489 \pm 0.067$ & $\bf 0.752 \pm 0.009$ & $\bf 0.640 \pm 0.015$  \\
 Decision Tree          &  0.759 &  0.646 & $\bf 0.696 \pm 0.081$ & $0.576 \pm 0.063$ & $0.675 \pm 0.03$  & $\bf 0.582 \pm 0.028$ \\
 LDA                    &  0.782 &  0.670  & $0.634 \pm 0.060$  & $0.541 \pm 0.048$ & $\bf 0.755 \pm 0.005$ & $\bf 0.640 \pm 0.007$  \\
 Adaboost               &  0.789 &  0.682 & $0.642 \pm 0.097$ & $0.546 \pm 0.071$ & $\bf 0.709 \pm 0.031$ & $\bf 0.628 \pm 0.024$ \\
 Bagging                &  0.772 &  0.667 & $0.659 \pm 0.06$  & $0.538 \pm 0.042$ & $\bf 0.687 \pm 0.041$ & $\bf 0.601 \pm 0.039$ \\
 GBM                    &  0.800   &  0.695 & $0.706 \pm 0.069$ & $0.586 \pm 0.047$ & $\bf 0.709 \pm 0.029$ & $\bf 0.635 \pm 0.025$ \\
 MLP &  0.776 &  0.660  & $0.667 \pm 0.088$ & $0.558 \pm 0.063$ & $\bf 0.744 \pm 0.012$ & $\bf 0.635 \pm 0.015$ \\
\cmidrule(lr){1-1}\cmidrule(lr){2-3}\cmidrule(lr){4-5}\cmidrule(lr){6-7}
 \textbf{Average}                &  0.765  &  0.654 & $0.641 \pm 0.044$ & $0.536 \pm 0.034$ & $\bf 0.721 \pm 0.035$ & $\bf 0.618 \pm 0.033$ \\
\bottomrule
\end{tabular}}
\end{table}

\begin{figure}[H]
  \begin{center}
  \vspace{-.5em}
 \subfigure[``marital-status'']{ \includegraphics[width=0.23\textwidth]{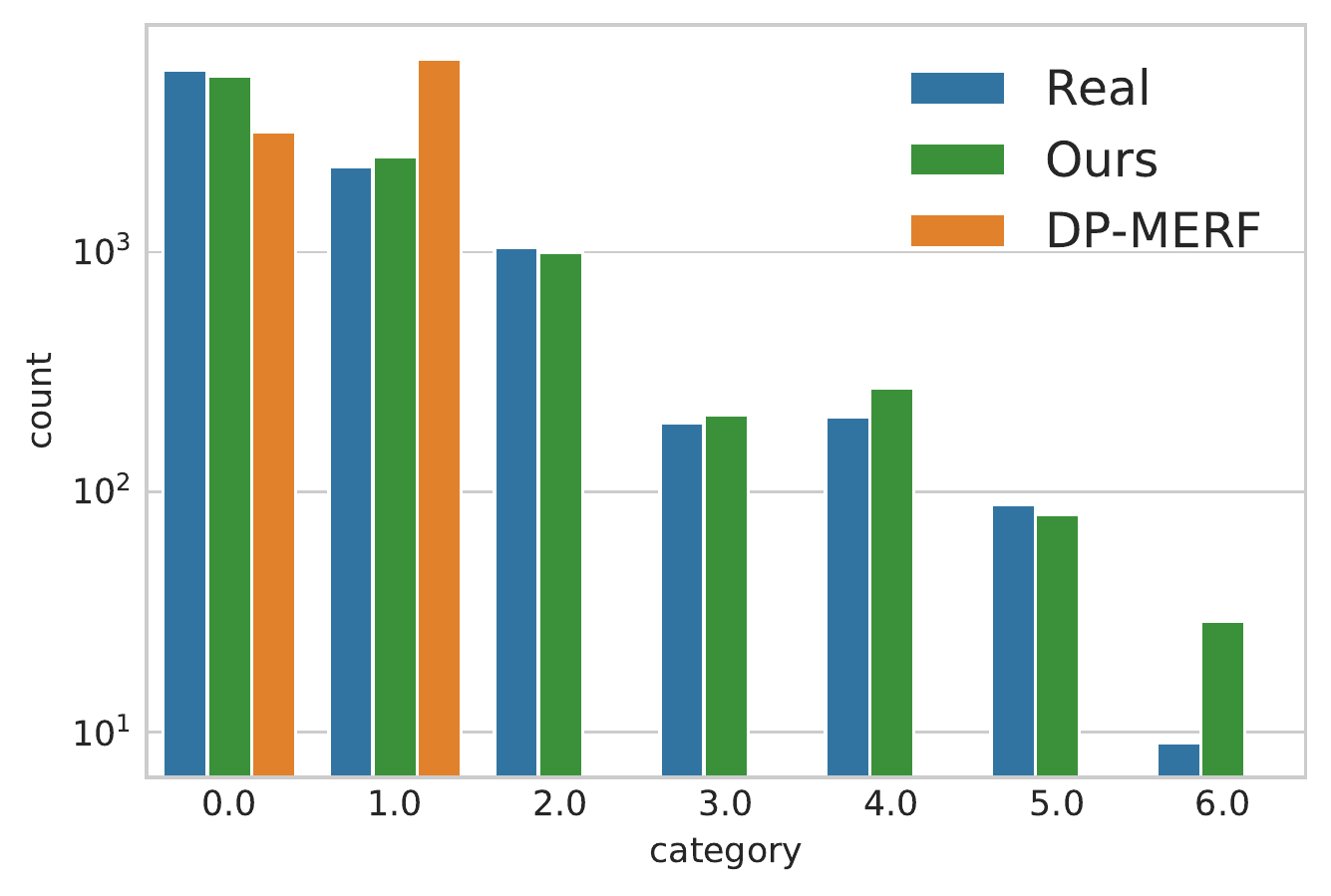}}
 \subfigure[``occupation'']{ \includegraphics[width=0.23\textwidth]{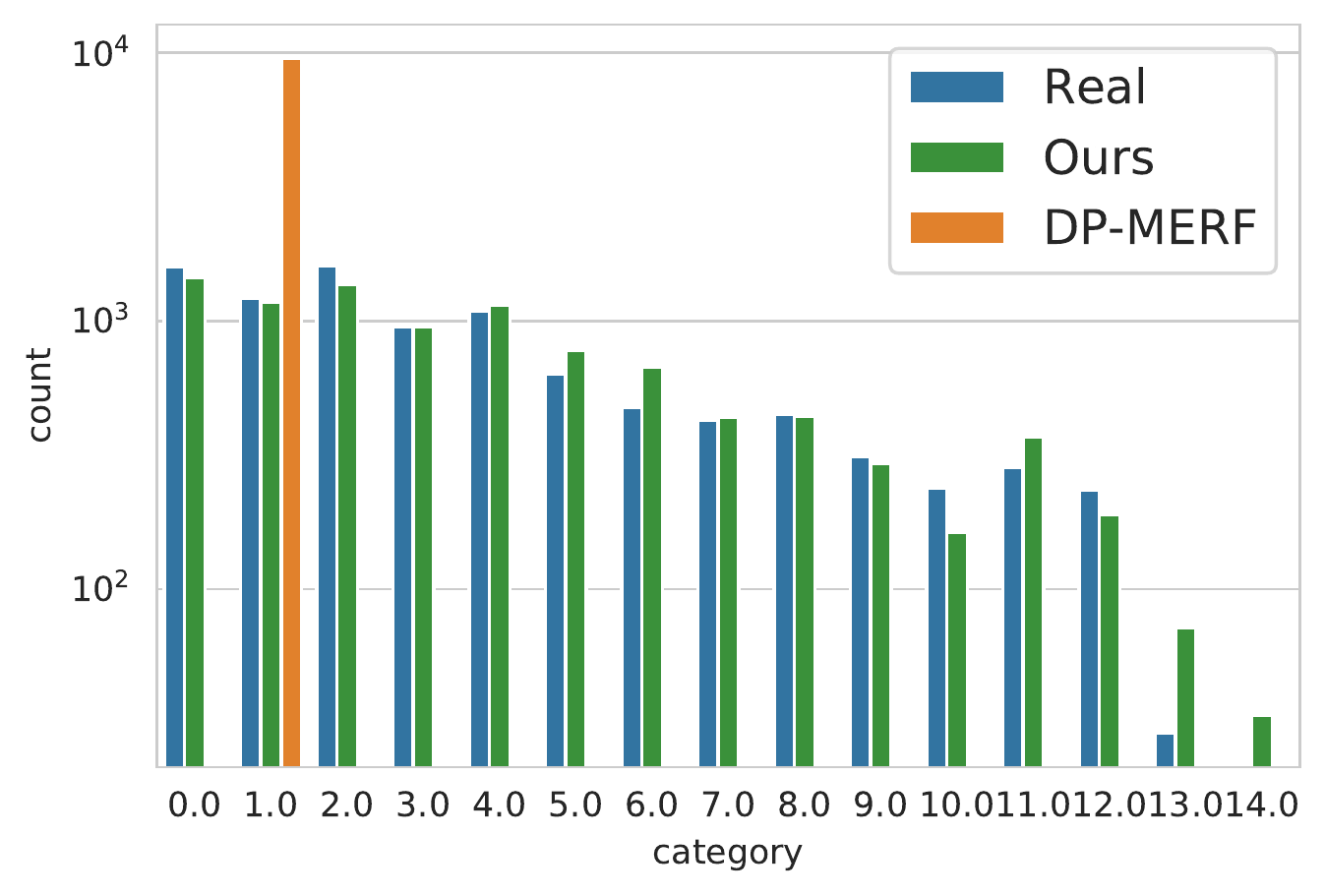}}
 \subfigure[``relationship'']{ \includegraphics[width=0.23\textwidth]{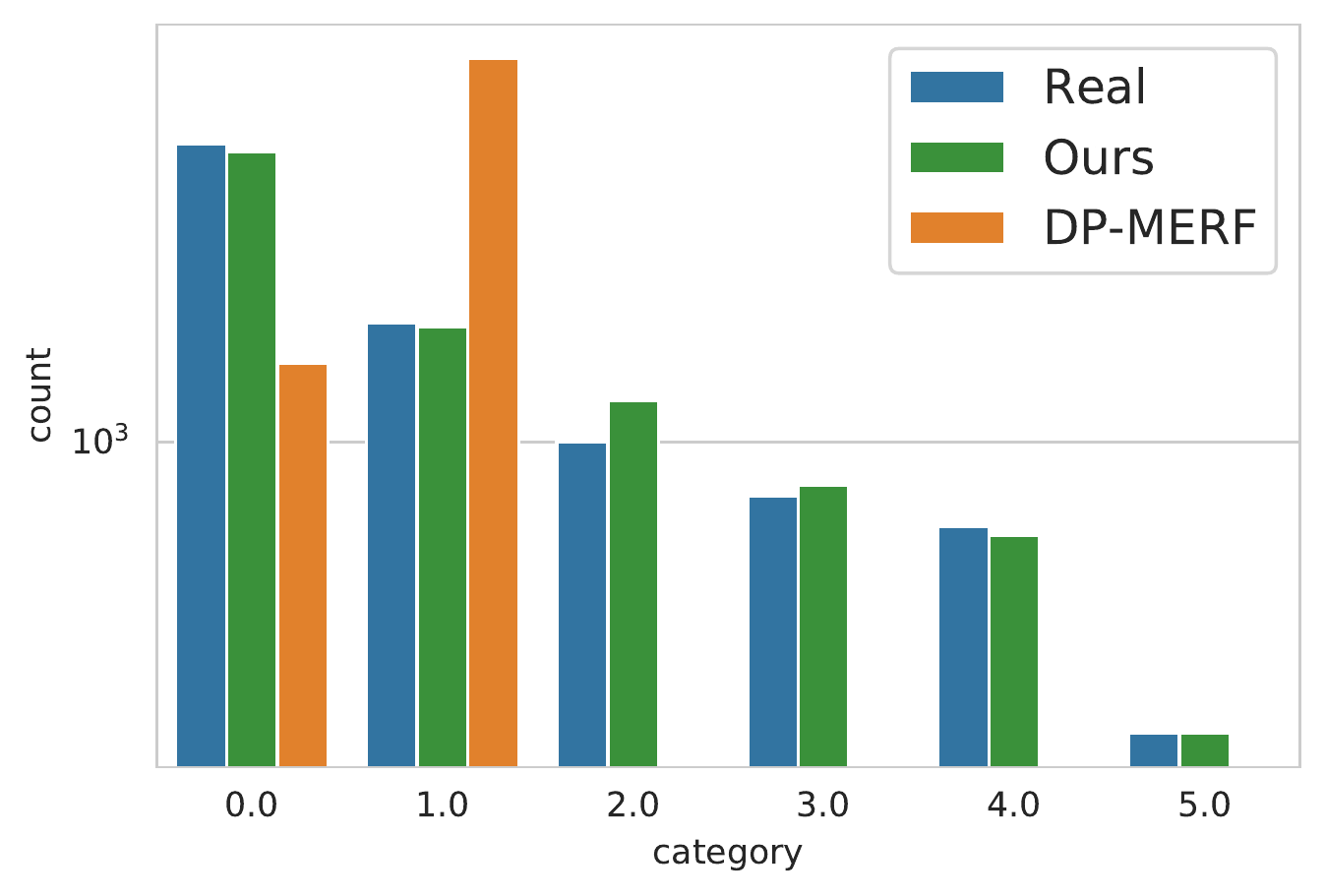}}
 \subfigure[``race'']{ \includegraphics[width=0.23\textwidth]{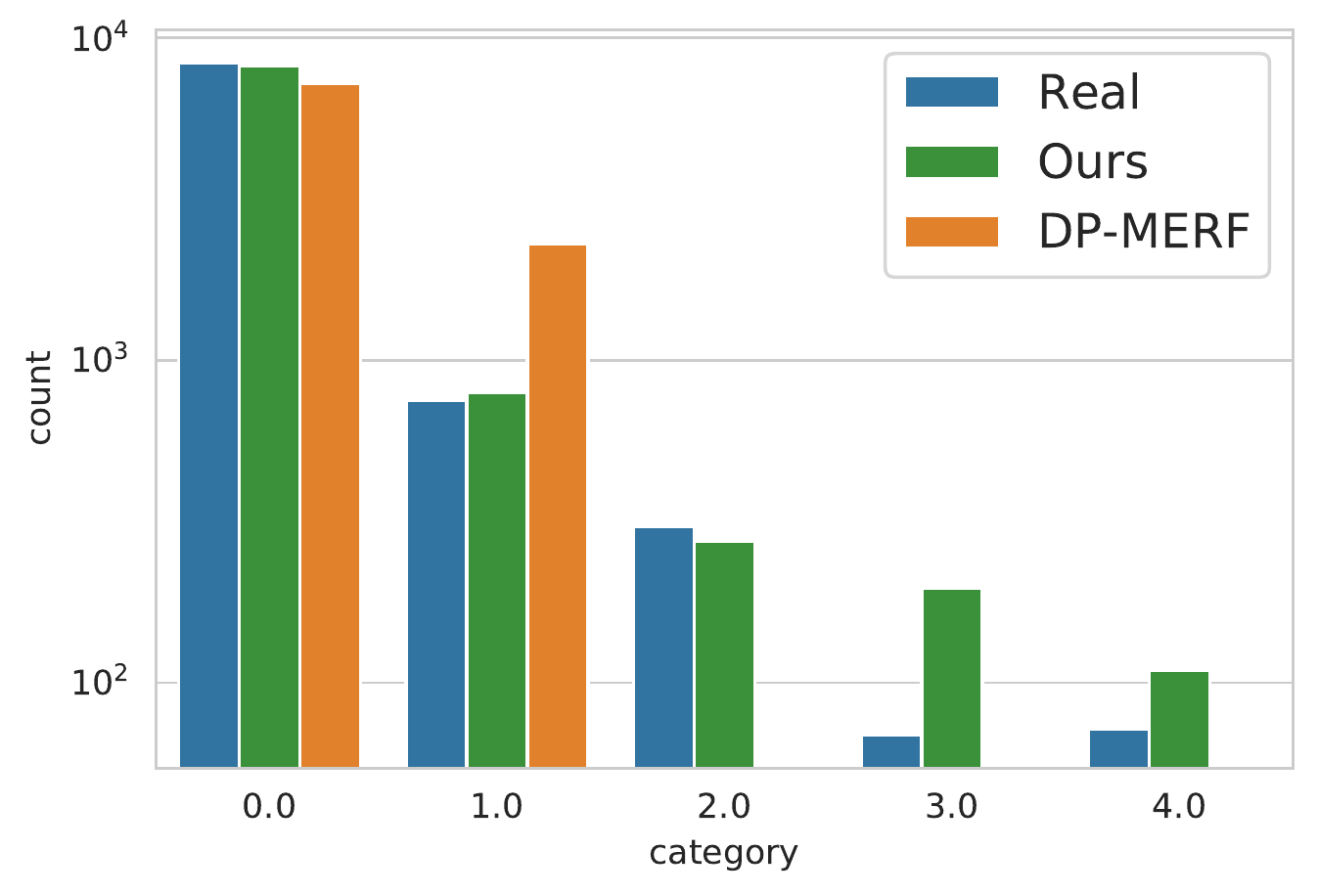}}
  \subfigure[``age'']{ \includegraphics[width=0.23\textwidth]{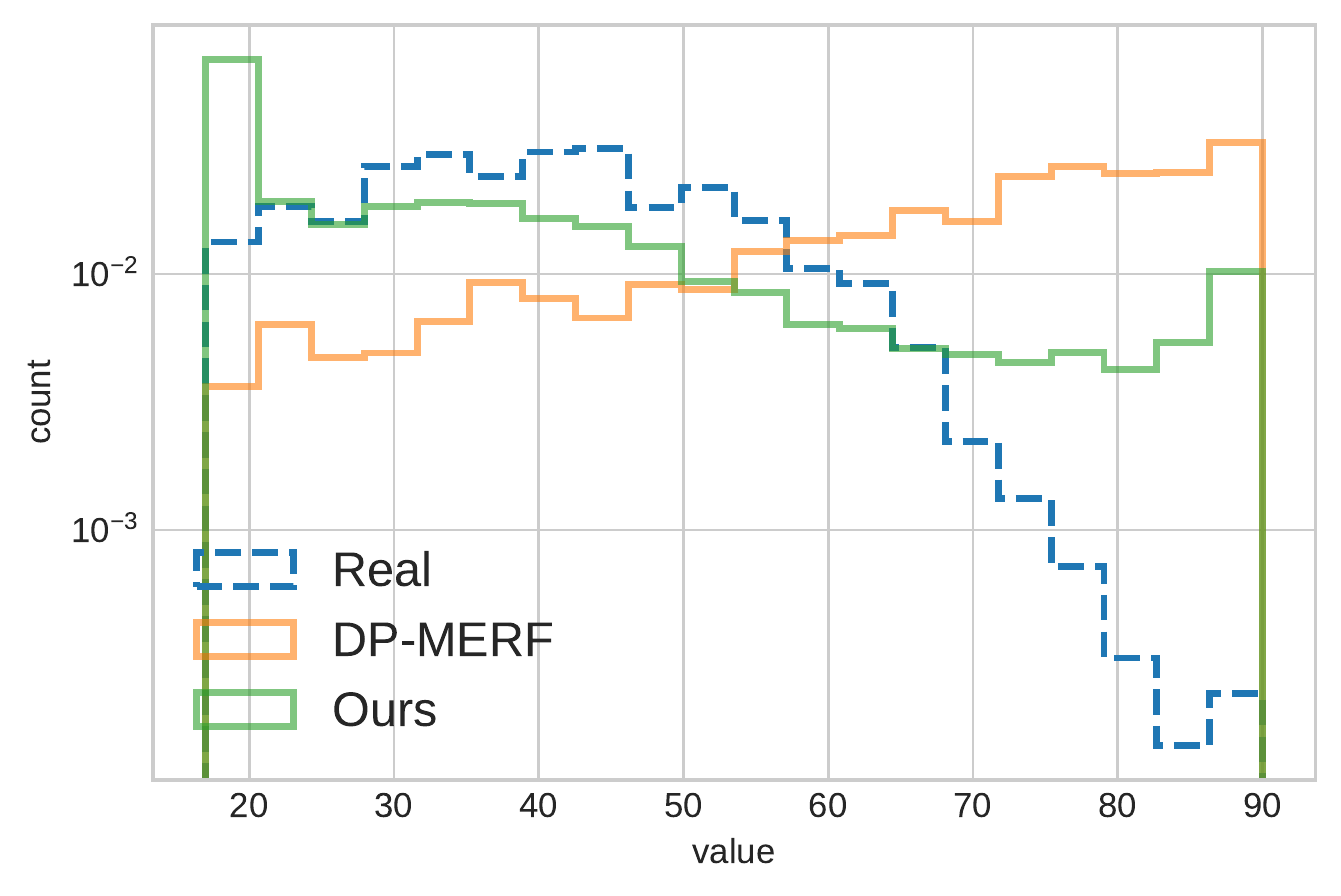}}
 \subfigure[``capital-loss'']{ \includegraphics[width=0.23\textwidth]{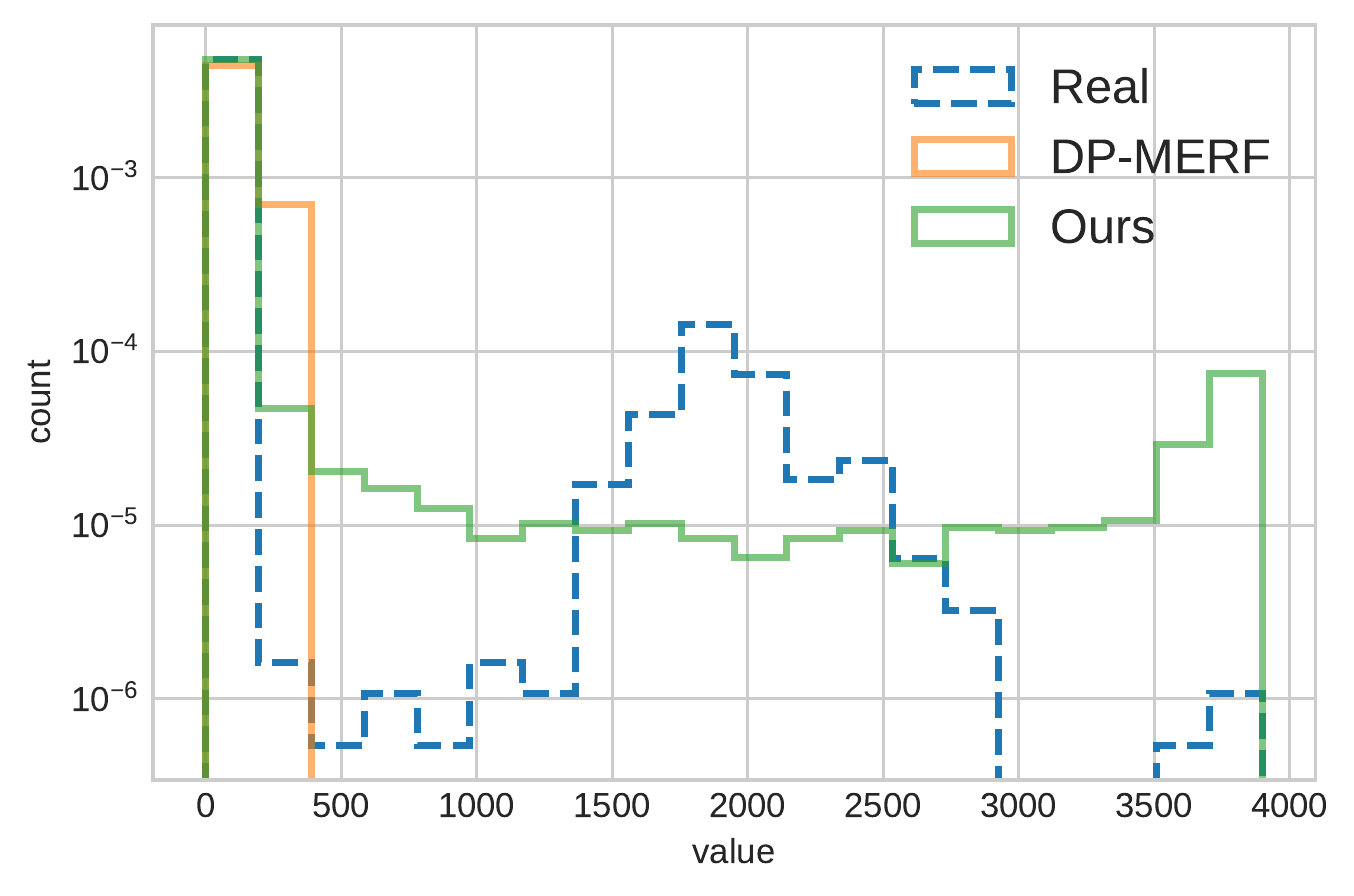}}
 \subfigure[``capital-gain'']{ \includegraphics[width=0.23\textwidth]{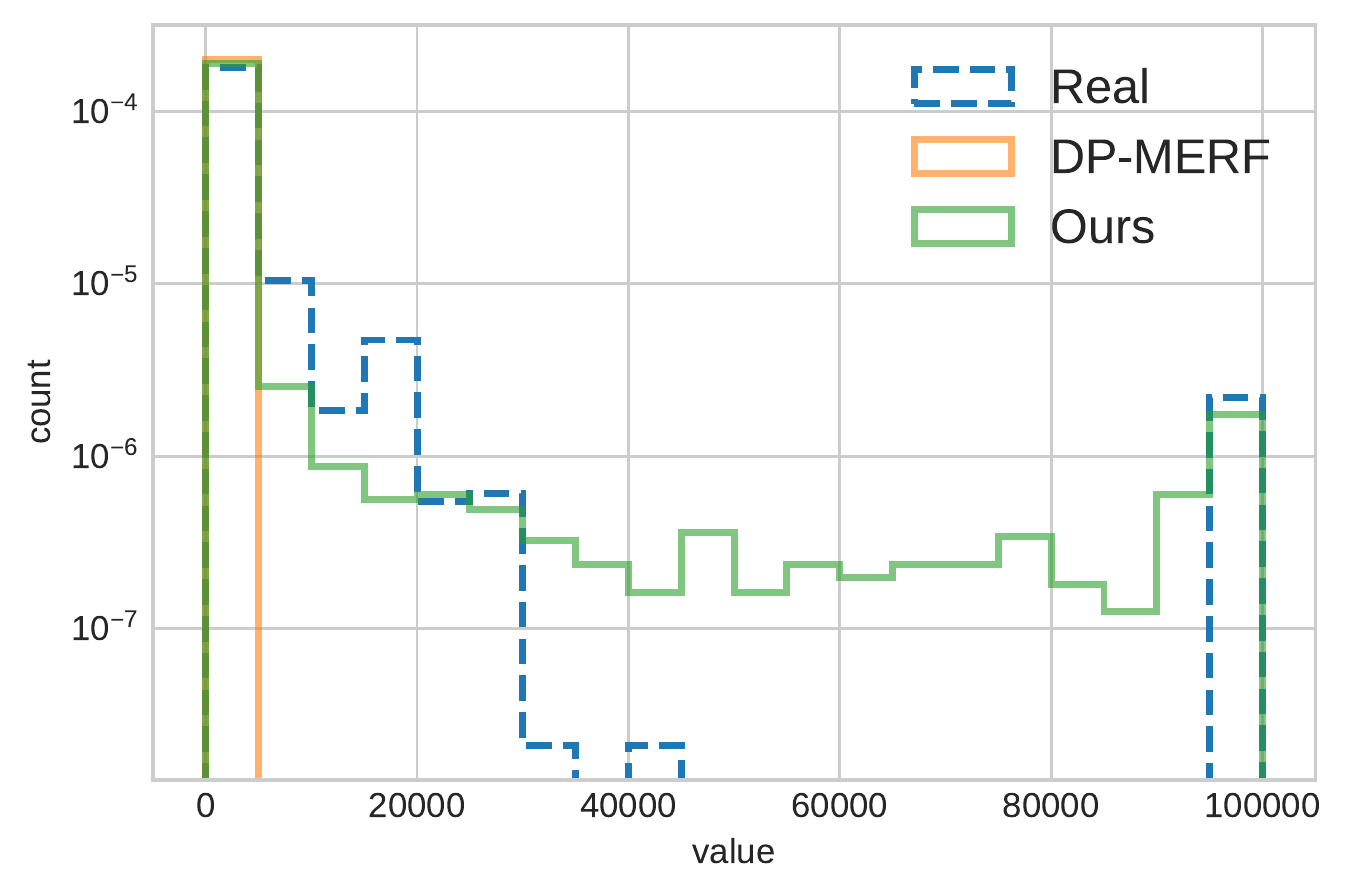}}
 \subfigure[``fnlwgt'']{
 \includegraphics[width=0.23\textwidth]{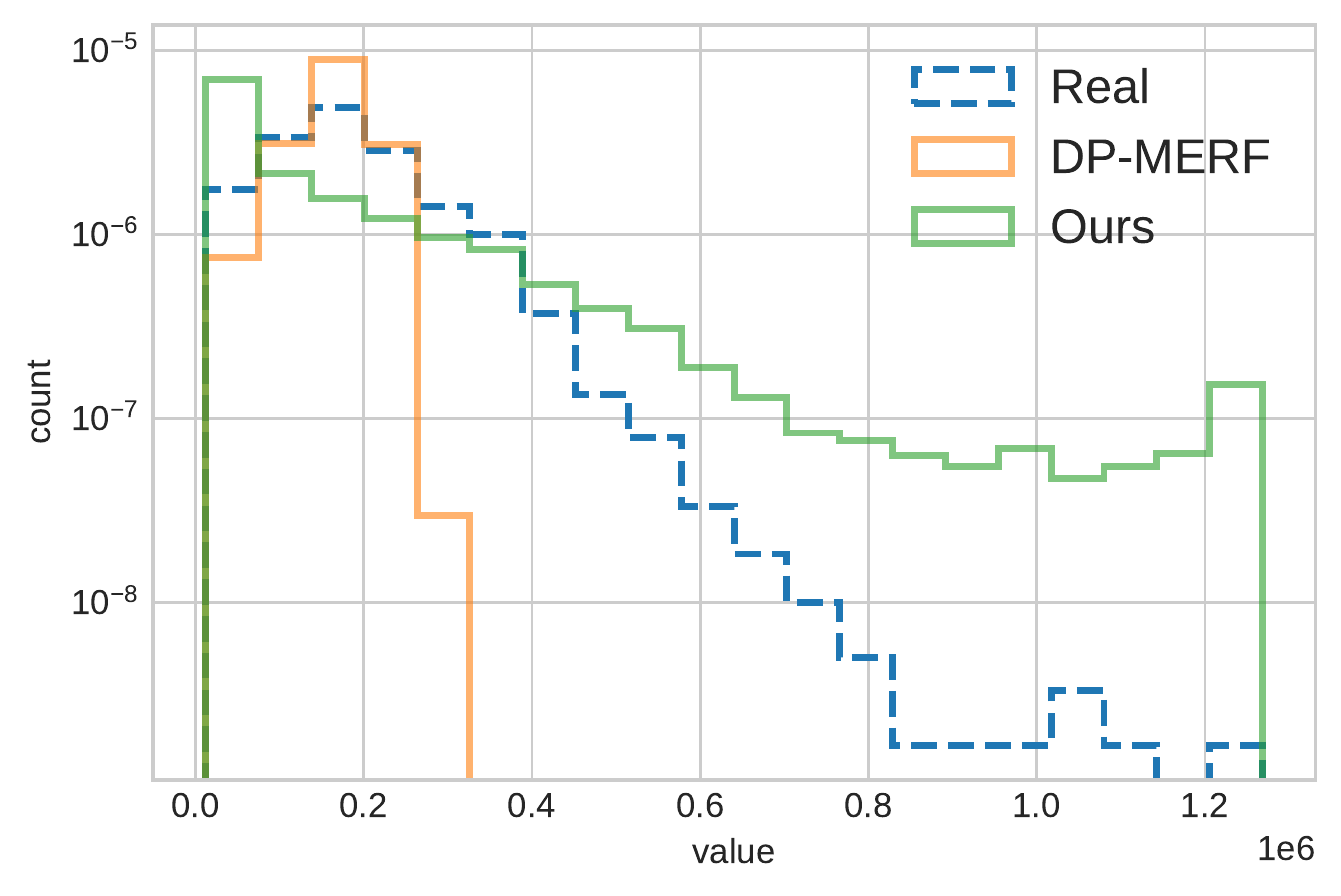}}
 \caption{Histogram plots for the the Adult dataset. Evaluation is performed at $(\epsilon, \delta) = (1, 10^{-5})$.}
      \label{fig:adult}
        \end{center}
\end{figure}

\noindent {\bf Adult.}
The Adult dataset consists of continuous and categorical features. As data pre-processing, 
continuous features are scaled to $[0,1]$.
We also compare our results with DP-MERF, as DP-MERF performs best among other existing methods (e.g., DPCGAN achieves ROC and PRC around 0.5; see \tabref{adult_dpcgan}.) 
Again, auxiliary information and CFs share equal budget of privacy, and we perform evaluation at $(\epsilon, \delta) = (1, 10^{-5})$.
11k synthetic samples are generated for evaluation.

Histogram plots of the attributes comparing the real and synthetic data is shown in \figref{adult}.
As can be observed in the Figure, PEARL is able to model the real data better than DP-MERF, e.g., covering more modes in categorical variables with less discrepancy in the frequency of each mode.
PEARL is also able to better model the descending trend of the continuous variable ``age''.
The average ROC and PRC scores (average and error) are shown in \tabref{adult}.
ROC and PRC scores based on PEARL's training data are shown to be closer to those based on real training data compared to DP-MERF.
More histogram plots in higher resolution are available in \suppsecref{result}.
Also, in \suppsecref{result}, we present experimental results of another tabular dataset (Credit), as well as evaluations with other metrics/tasks using the synthetic data. 
PEARL still performs favorably under these different conditions.
Overall, we have demonstrated that PEARL is able to produce high-quality synthetic data at practical privacy levels.

\section{Conclusion}
\label{sec:conclusion}
We have developed a DP framework to synthesize data with deep generative models.
Our approach provides synthetic samples at practical privacy levels, and sidesteps difficulties encountered in gradient sanitization methods.
While we have limited ourselves to characteristic functions, 
it is interesting to adopt and adapt other paradigms to the PEARL framework as a future direction. 

\bibliography{iclr2022_conference}
\bibliographystyle{iclr2022_conference}

\appendix
\newpage
\section{Additional Definitions and Previous Results}
\label{supp:previous}
\begin{definition}[R\'{e}nyi differential privacy]
\label{def:RDP}
A randomized mechanism  $\mathcal{M}$ is said to satisfy $ \varepsilon$-R\'{e}nyi differential privacy of order $\lambda$, when
\begin{equation*}
    D_\lambda (\mathcal{M}(d) \Vert \mathcal{M}(d')) = \frac{1}{\lambda -1} \log \mathbb{E}_{x\sim \mathcal{M}(d)}\left[ \left( \frac{Pr[\mathcal{M}(d)=x]}{Pr[\mathcal{M}(d')=x]}\right)^{\lambda -1}\right] \leq \varepsilon
\end{equation*}
is satisfied for any adjacent datasets $d$ and $d'$.
Here, $D_\lambda (P\Vert Q) = \frac{1}{\lambda -1} \log \mathbb{E}_{x \sim Q} [(P(x) / Q(x))^\lambda]$ is the R\'{e}nyi divergence. 
Furthermore, a $\varepsilon$-RDP mechanism of order $\lambda$ is also $(\varepsilon+\frac{\log{1/\delta}}{\lambda -1},\delta)$-DP.
\end{definition}
Next, we note that the Gaussian mechanism is $(\lambda,\frac{\lambda \Delta_2 f^2}{2\sigma^2})$-RDP (\cite{mironov2017renyi}).
The particular advantage of using RDP is that it gives a convenient way of tracking the privacy costs when a sequence of mechanisms is applied.
More precisely, the following theorem holds  (\cite{mironov2017renyi}):
\begin{theorem} [RDP Composition] For a sequence of mechanisms $M_1, ...,M_k$ s.t. $M_i$ is $(\lambda,\varepsilon_i)$-RDP $\forall i$, the composition $M_1 \circ ... \circ M_k$ is $(\lambda,\sum_i \varepsilon_i)$-RDP.
\end{theorem}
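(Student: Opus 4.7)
The plan is to reduce the RDP composition theorem to the fundamental additivity property of Rényi divergence on product measures, then proceed by induction on $k$. First I would recall the key analytic fact: for any two pairs of distributions on a product space, $D_\lambda(P_1 \otimes P_2 \,\|\, Q_1 \otimes Q_2) = D_\lambda(P_1 \,\|\, Q_1) + D_\lambda(P_2 \,\|\, Q_2)$. This follows directly from unrolling the definition of Rényi divergence given in Definition~\ref{def:RDP} and the fact that the likelihood ratio of a product factorizes, so the moment of order $\lambda$ factorizes, and the logarithm then splits additively.

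Next I would set up the inductive claim. Fix any pair of adjacent datasets $d, d' \in \mathcal{D}$. For the base case $k=1$, the conclusion is just the assumption that $M_1$ is $(\lambda,\varepsilon_1)$-RDP. For the inductive step, write the composed output as the joint random variable $(Y_{1:k-1}, Y_k)$ where $Y_{1:k-1}$ is the output of $M_1 \circ \cdots \circ M_{k-1}$ and $Y_k$ is the output of $M_k$. In the non-adaptive setting these are independent, so the product-measure identity immediately gives
\begin{equation*}
D_\lambda\bigl(M_{1:k}(d) \,\|\, M_{1:k}(d')\bigr) \;=\; D_\lambda\bigl(M_{1:k-1}(d) \,\|\, M_{1:k-1}(d')\bigr) + D_\lambda\bigl(M_k(d) \,\|\, M_k(d')\bigr),
\end{equation*}
and applying the inductive hypothesis to the first term and the RDP guarantee of $M_k$ to the second yields the bound $\sum_{i=1}^{k} \varepsilon_i$. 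Since $d, d'$ were arbitrary adjacent datasets, this is exactly the $(\lambda, \sum_i \varepsilon_i)$-RDP guarantee for the composition.

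The one subtle point worth highlighting (rather than a true obstacle) is the adaptive case, where $M_k$ is allowed to depend on the outputs $y_{1:k-1}$ of the earlier mechanisms. There the product decomposition is replaced by the chain-rule inequality $D_\lambda(P_{XY}\,\|\,Q_{XY}) \le D_\lambda(P_X\,\|\,Q_X) + \sup_{x} D_\lambda(P_{Y\mid X=x}\,\|\,Q_{Y\mid X=x})$, which is proved by the same factorization of likelihood ratios followed by taking the essential supremum over the conditioning variable before integrating. Since the RDP hypothesis on $M_k$ holds uniformly over any fixed history $y_{1:k-1}$, the supremum is still bounded by $\varepsilon_k$, and the induction goes through unchanged. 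Thus the main work reduces to (i) the one-line product-measure identity for Rényi divergence and (ii) a clean induction, with the only care needed being the handling of adaptivity via conditional divergences.
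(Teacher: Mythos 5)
Your proposal is correct, and there is nothing in the paper to compare it against: the paper states this theorem without proof, importing it as a known result from \cite{mironov2017renyi}. Your argument --- factorizing the order-$\lambda$ moment to get additivity of R\'{e}nyi divergence over product measures, inducting on $k$, and handling adaptivity via the conditional chain rule with an essential supremum over histories (valid since $M_k$ is $(\lambda,\varepsilon_k)$-RDP uniformly over any fixed history) --- is precisely the standard proof in that reference, so it fills the citation faithfully rather than taking a different route.
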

We use the \textit{autodp} package to keep track of the privacy budget (\cite{pmlr-v89-wang19b}).

\section{Proofs}
\label{supp:proof}
\subsection{Proof of \thmref{diff}}
\begin{proof}
Let $\mathbb{P}_{r}$ denote the real data distribution, $\mathbb{Q}_{\theta}$ denote the data distribution generated by $G_\theta$ using a latent vector, $z$ sampled from a pre-determined distribution, and $\omega$ a sampling distribution. $|\cdot|$ denotes the modulus of $\cdot$. Then, the CFD between the distributions is
\begin{align*}
    \mathcal{C}_{\omega}^2(\mathbb{P}_{r}, \mathbb{Q}_{\theta}) = \mathbb{E}_{\vt\sim\omega(\vt)}\left[\left|\Phi_{\mathbb{P}_{r}}(\vt) - \Phi_{\mathbb{Q}_{\theta}}(\vt)\right|^2\right],
\end{align*}
where $\Phi_{\mathbb{P}}(\vt) = \mathbb{E}_{\mathbf{x}\sim\mathbb{P}}\left[e^{i \vt \cdot \mathbf{x}}\right]$ is the CF of $\mathbb{P}$.
For notational brevity, we write $\Phi_{\mathbb{P}_{r}}(\vt)$ as $\Phi_{r}(\vt)$, and $\Phi_{\mathbb{Q}_{\theta}}(\vt)$ as $\Phi_\theta(\vt)$  in the following.

We consider the optimized CFD, $\underset{\omega\in\Omega}{\sup} \mathcal{C}_{\omega}^2(\mathbb{P}_{r}, \mathbb{Q}_{\theta})$, and would like to show that it is locally Lipschitz, \footnote{A function $f$ is locally Lipschitz if there exist constants $\delta \geq 0$ and $M \geq 0$ such that $|x-y| < \delta \rightarrow |f(x) - f(y)|\leq M\cdot |x-y|$ for all $x,y$.} which subsequently means that it is continuous. Since the Radamacher's theorem (\cite{federer2014geometric}) implies that any locally Lipschitz function is differentiable almost everywhere, the claim of differentiability is also justified once the Lipschitz locality is proven.

We first note that the difference of two maximal functions' values is smaller or equal to the maximal difference of the two functions.
Then, for any $\theta$ and $\theta'$,
\begin{align}
    \Big| \underset{\omega\in\Omega}{\sup}\mathcal{C}_{\omega}^2(\mathbb{P}_{r}, \mathbb{Q}_{\theta}) - 
   \underset{\omega\in\Omega}{\sup} \mathcal{C}_{\omega}^2(\mathbb{P}_{r}, \mathbb{Q}_{\theta'})
    \Big| &\leq
    \underset{\omega\in\Omega}{\sup} \Big|
    \mathcal{C}_{\omega}^2(\mathbb{P}_{r}, \mathbb{Q}_{\theta}) - 
    \mathcal{C}_{\omega}^2(\mathbb{P}_{r}, \mathbb{Q}_{\theta'})
    \Big|.
\label{eq:th1.0}
\end{align}
We note that the absolute difference of any two complex values represented in terms of its real-number amplitude ($A,B$) and phase ($\alpha,\beta$) satisfies $|Ae^{i\alpha}-Be^{i\beta}|\leq |A|+|B|$. 
Writing the maximal $\omega$ as $\omega^*$,
the RHS of \eqref{th1.0} can be written as
\begin{align}
&\Big|   \mathcal{C}_{\omega^*}^2(\mathbb{P}_{r}, \mathbb{Q}_{\theta}) - 
    \mathcal{C}_{\omega^*}^2(\mathbb{P}_{r}, \mathbb{Q}_{\theta'})
    \Big|  \label{eq:th1.1} \\
    &=
     \mathbb{E}_{\vt\sim\omega^*(\vt)}\left[\left(\mathcal{C}_{\omega^*}(\mathbb{P}_{r}, \mathbb{Q}_{\theta}) 
     + \mathcal{C}_{\omega^*}(\mathbb{P}_{r}, \mathbb{Q}_{\theta'})
     \right)
    \left(
    \mathcal{C}_{\omega^*}(\mathbb{P}_{r}, \mathbb{Q}_{\theta})
    - \mathcal{C}_{\omega^*}(\mathbb{P}_{r}, \mathbb{Q}_{\theta'})
    \right)\right]
      \nonumber  \\
    &=
     \mathbb{E}_{\vt\sim\omega^*(\vt)}\left[(| 
    \Phi_{r}(\vt) - \Phi_{\theta}(\vt)| + | 
    \Phi_{r}(\vt) - \Phi_{\theta'}(\vt)| )
    (| 
    \Phi_{r}(\vt) - \Phi_{\theta}(\vt)| - | 
    \Phi_{r}(\vt) - \Phi_{\theta'}(\vt)| )\right]
      \nonumber  \\
        &\leq   
    \mathbb{E}_{\vt\sim\omega^*(\vt)}\left[(2|\Phi_{r}(\vt)|+|\Phi_{\theta}(\vt)| + |\Phi_{\theta'}(\vt)| ) (| 
    \Phi_{r}(\vt) - \Phi_{\theta}(\vt)| - | 
    \Phi_{r}(\vt) - \Phi_{\theta'}(\vt)| ) \right]
       \nonumber   \\
          &\overset{(a)}{\leq} 
   4 \mathbb{E}_{\vt\sim\omega^*(\vt)}\left[| 
    \Phi_{r}(\vt) - \Phi_{\theta}(\vt)| - | 
    \Phi_{r}(\vt) - \Phi_{\theta'}(\vt)| \right]
  \nonumber  \\
     &\overset{(b)}{\leq} 
   4 \mathbb{E}_{\vt\sim\omega^*(\vt)}\left[| 
     \Phi_{\theta}(\vt)- \Phi_{\theta'}(\vt)|  \right],  \nonumber
\end{align}
where we have used $(a)$ $|\Phi_{\mathbb{P}}| \leq 1$, $(b)$ triangle inequality. 
By interpreting a complex number as a vector on a 2D plane, and using trigonometric arguments, one can deduce that $|e^{ia}-e^{ib}| = 2 \: \textrm{sin} (|a - b|/2) \leq |a-b|$.
Then,
\begin{align}
4 \mathbb{E}_{\vt\sim\omega^*(\vt)}\left[| 
     \Phi_{\theta}(\vt)- \Phi_{\theta'}(\vt)|  \right]
   &= 4 \mathbb{E}_{\vt\sim\omega^*(\vt)}
    \left[
    |\mathbb{E}_{z} [e^{i \vt \cdot G_{\theta}(z) }] - 
    \mathbb{E}_{z} [e^{i \vt \cdot  G_{\theta'}(z)}] |
       \right]
        \label{eq:th1.2}  \\
   & \overset{(c)}{\leq}
    4 \mathbb{E}_{\vt\sim\omega^*(\vt)}
    \left[
    \mathbb{E}_{z} [ |\vt \cdot G_{\theta}(z)  -\vt \cdot G_{\theta'}(z) |]
       \right]
       \nonumber  \\
    & \overset{(d)}{\leq} 4 \mathbb{E}_{\vt\sim\omega^*(\vt)}
     \mathbb{E}_{z}
    \left[
      |\vt| \cdot |G_{\theta}(z)  - G_{\theta'}(z)|
       \right]. \nonumber
       \end{align}
In $(c)$, we have also used the Jensen inequality. In $(d)$, the Cauchy-Schwarz inequality has been applied.
As we are assuming that $G_\theta$ is a $L(\theta,z)$-Lipschitz function, we have
\begin{align}
\mathbb{E}_{\vt\sim\omega^*(\vt)}
     \mathbb{E}_{z}
    \left[
      |\vt| \cdot |G_{\theta}(z)  - G_{\theta'}(z)|
       \right] 
       &\leq
     4 \mathbb{E}_{\vt\sim\omega^*(\vt)} [|\vt|] \cdot \mathbb{E}_{z}[L(\theta,z)]\cdot |\theta - \theta'|.
       \end{align}
Since we are also assuming that $\mathbb{E}_{\vt\sim\omega^*(\vt)} [|\vt|], \mathbb{E}_{z} [L(\theta,z)]\leq \infty$, we have shown that $\underset{\omega\in\Omega}{\sup}\mathcal{C}_{\omega}$ is locally Lipschitz, as required. It is therefore continuous and differentiable almost everywhere, as discussed above. 
\end{proof}

\subsection{Proof of \thmref{cont}}
\begin{proof}
We denote $\mathbf{x}_n \sim \mathbb{P}_n$ and $\mathbf{x} \sim \mathbb{P}$, and $\omega^*$ the maximal function of $\omega$. 
Notice that 
 \begin{align}
\mathcal{C}_{\omega^*}^2(\mathbb{P}_{n}, \mathbb{P}) &= \bbE_{\vt\sim\omega^*(\vt)}\left[\left|\mathbb{E}_{\mathbf{x}_n}\left[e^{i \vt\cdot \mathbf{x}_n}\right] - \mathbb{E}_{\mathbf{x}}\left[e^{i \vt\cdot \mathbf{x}}\right]\right|^2\right] 
\nonumber \\
&= 
\bbE_{\vt\sim\omega^*(\vt)}\left[\left|\mathbb{E}_{\mathbf{x}_n}\left[e^{i \vt\cdot \mathbf{x}_n}\right] - \mathbb{E}_{\mathbf{x}}\left[e^{i \vt\cdot \mathbf{x}}\right]\right|\cdot\left|\mathbb{E}_{\mathbf{x}_n}\left[e^{i \vt\cdot \mathbf{x}_n}\right] - \mathbb{E}_{\mathbf{x}}\left[e^{i \vt\cdot \mathbf{x}}\right]\right|\right] 
\nonumber \\
&\overset{(a)}{\leq} 
2 \bbE_{\vt\sim\omega^*(\vt)}\left[\left|\mathbb{E}_{\mathbf{x}_n}\left[e^{i \vt\cdot \mathbf{x}_n}\right] - \mathbb{E}_{\mathbf{x}}\left[e^{i \vt\cdot \mathbf{x}}\right]\right|\right] 
\nonumber \\
&\overset{(b)}{\leq} 
2 \bbE_{\vt\sim\omega^*(\vt)}  \mathbb{E}_{\mathbf{x}_n,\mathbf{x}}
\left[\left| e^{i \vt\cdot \mathbf{x}_n}- e^{i \vt\cdot \mathbf{x}} \right|\right]
\nonumber \\
&\overset{(c)}{\leq} 
2 \bbE_{\vt\sim\omega^*(\vt)} \left[\left| \vt\right|\right] \mathbb{E}_{\mathbf{x}_n,\mathbf{x}}
\left[ \left| \mathbf{x}_n- \mathbf{x}\right| \right]. \label{eq:th2.1}
 \end{align}
 Here, $(a)$ uses $|Ae^{i\alpha}-Be^{i\beta}|\leq |A|+|B|$ as argued above \eqref{th1.1}; $(b)$ uses Jensen inequality; $(c)$ uses the argument $|e^{ia}-e^{ib}| = 2 \: \textrm{sin} (|a - b|/2) \leq |a-b|$, given above \eqref{th1.2}.
 Then, by weak convergence equivalence (\cite{klenke2013probability}), the RHS of \eqref{th2.1} approaches zero as $\bbP_n \xrightarrow{D} \bbP$, hence proving the theorem. 
 \end{proof}
 
\subsection{Proof of \lemmaref{importance}}
 \begin{proof}
Recall that for any two distributions, $\omega(\vt)$, $\omega_0(\vt)$ and any function  $f(\vt)$, 
\begin{align*}
\mathbb{E}_{\omega}[f(\vt)] &= \int f(\vt) \omega(\vt) d\vt \\
&= \int f(\vt) \frac{\omega(\vt)}{\omega_0(\vt)} \omega_0(\vt) d\vt \\
&= \mathbb{E}_{{\omega}_0}[f(\vt)\frac{\omega(\vt)}{\omega_0(\vt)}].
\end{align*}
Hence, 
$\mathbb{E}_{\omega(\vt)}[f(\vt)] = \mathbb{E}_{{\omega}_0}[f(\vt)\frac{\omega(\vt)}{\omega_0(\vt)}]$.
Let $\omega^*$ be the maximal probability distribution. It is then clear that $\widehat{\mathbb{E}}_{\omega}[f(\vt)] \rightarrow \mathbb{E}_{\omega}[f(\vt)]$ implies $\widehat{\mathbb{E}}_{{\omega}_0}[f(\vt)\frac{\omega^*(\vt)}{\omega_0(\vt)}] \rightarrow \mathbb{E}_{\omega^*}[f(\vt)]$, as desired. 
\end{proof}
\section{Experimental Setup of Two-sample Testing on Synthetic Data}
\label{supp:simple}
Let us describe in more detail the experiment presented in \secref{critic}.
Data are generated from two unit-variance multivariate Gaussian distributions $\Ep,\Eq$, where all dimensions but one have zero mean ($\Ep \sim \mathcal{N}(\textbf{0}_d,\textbf{I}_d), \, \Eq \sim \mathcal{N}((1,0,\dots,0)^\top,\textbf{I}_d)$).
We wish to conduct a two-sample test using the CF to distinguish between the two distributions, which gets more difficult as the dimensionality increases.
We test if the null hypothesis where the samples are drawn from the same distribution is rejected.

Three sets of frequencies are considered. 
The number of frequncies in each set is set to 20.
The first set is an ``unoptimized'' set of frequencies.
The first dimension of all but one frequency has the value of zero.
Other dimensions have values generated randomly from a zero-mean unit-variance multivariate Gaussian distribution.
We denote the frequency with non-zero value in the first dimension by $\vt_0$ without loss of generality. 
A ``normal'' set of frequencies is also considered for comparison, where the frequencies of all dimensions are sampled randomly from a multivariate Gaussian distributions.
Finally, we consider an ``optimized'' set of frequencies, where from the ``unoptimized'' set of frequencies, only $\vt_0$ is selected to be used for two-sample testing.
In other words, we re-weight the set of frequencies such that all but $\vt_0$ has zero weight.
1,000 samples are generated from each of $\Ep$ and $\Eq$.
We repeat the problem for 100 trials to obtain the rejection rate (and repeat the whole experiment 5 times to get the error bar).

\section{DP Estimate of the Mean of Pairwise Distance}
\label{supp:dpmean}
Median heuristic is applied widely in kernel methods applications to determine the bandwidth of the radial basis function (RBF) kernels (\cite{garreau2018large}). 
The bandwidth is taken to be the median of all pairwise distances of data samples.
Here we give a DP estimation of the mean instead as the calculation of mean is more tractable.
Let $\vx$ be samples of a certain data distribution of dimension $d$ and assume that the values lie in $[0,1]^d$. 
Given $n$ samples, there is a total of $n(n-1)/2$ pairwise distance pairs. Then, the mean of the pairwise distance of samples is
\begin{align*}
  \overline{D}_n(\vx) =  \frac{2}{n(n-1)}\sum_{i\neq j}^n\|\vx_i - \vx_j\|_2.
\end{align*}
where $\|\cdot\|_2$ indicates the Euclidean norm.

Consider a pair of neighboring datasets, $\Dat, \Dat'$. 
Without loss of generality, let $\vx_n \neq \vx'_n$ and $\vx_i = \vx'_i$ for $i\neq n$.
Then, the sensitivity of $\overline{D}_n(\vx)$ is 
\begin{align*}
  \Delta_{\overline{D}_n(\vx)} &= \max_{\Dat, \Dat'} \left\|  \frac{2}{n(n-1)}\sum_{i\neq j}^n\|\vx_i - \vx_j\|_2 
  -  \frac{2}{n(n-1)}\sum_{i\neq j}^n\|\vx'_i - \vx'_j\|_2 \right\|_2 \\
  &\overset{(a)}{=} \frac{2}{n(n-1)} \max_{\Dat, \Dat'} \left\|  \sum_{i= 1}^{n-1}\|\vx_i - \vx_n\|_2  
  -  \sum_{i= 1}^{n-1}\|\vx_i - \vx'_n\|_2 \right\|_2 \\
  &\overset{(b)}{=}   \frac{2}{n(n-1)} \cdot (n-1) \sqrt{d} \\
    &= \frac{2 \sqrt{d}}{n}.
\end{align*}
In (a), we cancel out all terms unrelated to $\vx_n$ or $\vx'_n$.
In (b), we use the fact that $\|\vx_i - \vx_n\|_2$ and  $\|\vx_i - \vx_n'\|_2$ lie in $[0,\sqrt{d}]$.
After obtaining the sensitivity, one can then applies the Gaussian mechanism as in \eqref{noise_rf} to obtain the DP estimate of the mean of the pairwise distance of samples. 
\section{Training Algorithm}
\label{supp:train}
The pseudo-code of the proposed training algorithm is given in \algoref{pearl}.

\IncMargin{1.5em}
\begin{algorithm}[h]
\caption{PEARL Training}
\label{algo:pearl}
\SetAlgoLined
\SetKwInput{KwInput}{Input}
\SetKwInput{KwResult}{Output}
\SetKw{KwRet}{Return}
\KwInput{Sensitive data $\{\vx\}^n_{i=1}$, differential privacy noise scale $\sigma_{\rm DP}$, number of frequencies $k$, base sampling distribution variance $\bm{\sigma}_0$, training iterations $T$, learning rates $\eta_C$ and  $\eta_G$, number of generator iterations per critic iteration $n_{gen}$, batch size $B$, latent distribution $P_z$}
 \KwResult{Differentially private generator $G_\theta$}
 
 Obtain auxiliary information (e.g., base sampling distribution variance $\bm{\sigma}_0$)\;
 Sample frequencies $\{\vt\}^k_{i=1}$ with $\vt \sim \mathcal{N}(\bm{0},\textrm{diag}(\bm{\sigma}_0))$\;
\For{$i$ \emph{\textbf{in}} $\{1,...,k\}$}{
$\widehat{\Phi}_{\Ep}(\vt_i) = \frac{1}{n} \sum_{j=1}^n e^{i\vt_i\cdot \mathbf{x}_j}$


$\widetilde{\Phi}_{\Ep}(\vt_i) = \widehat{\Phi}_{\Ep}(\vt_i) + \Nrm(0, \Delta_{\widehat{\vphi}(\vx)}^2\sigma_{\rm DP}^2 I) $
}
Accumulate privacy cost $\epsilon$\; 
$\widetilde{\phi}(\vx) \leftarrow (\widetilde{\Phi}_{\Ep}({\bf t}_1),\dots,\widetilde{\Phi}_{\Ep}({\bf t}_k))^{\top}$
 
 Initialize generator $G_\theta$, sampling distribution variance $\bm{\sigma}$ \;
\For{$step$ \emph{\textbf{in}} $\{1,...,T\}$}{
\For{$t$ \emph{\textbf{in}} $\{1,...,n_{gen}\}$}{
     Sample batch $\{\vz_i\}_{i=1}^B$ with $\vz_i\sim P_z$\;
          \For{$i$ \emph{\textbf{in}} $\{1,...,k\}$}{
    $\widehat{\Phi}_{\Eq}(\vt_i) = \frac{1}{B} \sum_{j=1}^B e^{i\vt_i\cdot G_\theta(\mathbf{z}_j)}$
             

    }
$\widehat{\phi}(\vz) \leftarrow (\widehat{\Phi}_{\Eq}({\bf t}_1),\dots,\widehat{\Phi}_{\Eq}({\bf t}_k))^{\top}$

 $\theta \leftarrow \theta - \eta_G 
     \cdot 
     \nabla_{\theta} \widehat{\mathcal{C}_{\omega}}^2(\widetilde{\phi}(\vx),\widehat{\phi}(\vz))$
}

     Sample batch $\{\vz_i\}_{i=1}^B$ with $\vz_i\sim P_z$\;
     \For{$i$ \emph{\textbf{in}} $\{1,...,k\}$}{
     $\widehat{\Phi}_{\Eq}(\vt_i) = \frac{1}{B} \sum_{j=1}^B e^{i\vt_i\cdot G_\theta(\mathbf{z}_j)}$
     
     }
$\widehat{\phi}(\vz) \leftarrow (\widehat{\Phi}_{\Eq}({\bf t}_1),\dots,\widehat{\Phi}_{\Eq}({\bf t}_k))^{\top}$

     $\bm{\sigma} \leftarrow \bm{\sigma} + \eta_C
     \cdot 
     \nabla_{\bm{\sigma}} \widehat{\mathcal{C}_{\omega}}^2(\widetilde{\phi}(\vx),\widehat{\phi}(\vz))$
}
 
\KwRet{$G_{\theta}$}
\end{algorithm}
\DecMargin{1.5em}

\section{Computational Complexity Analysis}
\label{supp:complex}
PEARL is trained based on deep learning methods, which is efficient computation-wise using a GPU.
Nevertheless, let us give a computational complexity analysis in a traditional sense by ignoring (partially) the parallel computing capability.

\noindent \textbf{Time complexity.}
 We sample $k$ frequencies and calculate its inner product with data points (of dimension $d$), make a summation with respect to $n$ data points to build $k$ CFs. 
Since inner product takes $d$ steps, summation $n$ steps, and we do it for $k$ times, the time complexity of generating CFs is $nkd$.
We additionally train the generative model, requiring $t_{tr}$.
Note that unlike graphical models like PrivBayes, which gradually add nodes to the Bayesian network, our method can be parallelized efficiently with GPU.  
As we sample $n$ records from the generative model during data generation, the  time complexity of dataset generation is $n$ times the inference time, $t_{inf}$.
The total training and inference time is $O(nkd + nt_{inf} + t_{tr})$.

\noindent \textbf{Space complexity.}
We need to store the k CF values during training. We also need to store the generative model with $m$ parameters. The space complexity is $O(k+m)$.

\section{Evaluation Metrics}
\label{supp:metric}
We utilize evaluation metrics commonly used to evaluate GAN's performance, namely Fr\'echet Inception Distance (FID) (\cite{heusel2017gans}) and Kernel Inception Distance (KID) (\cite{binkowski2018demystifying}).
FID corresponds to computing the Fr\'echet distance between the Gaussian fits of the Inception features obtained from real and fake distributions.
KID is the calculation of the MMD of the Inception features between real and fake distributions using a polynomial kernel of degree 3.

The precision definitions are as follows.
Let $\{{\vx^r}_i\}_{i=1}^n$ be samples from the real data distribution $\bbP_r$ and $\{{\vx^\theta}_i\}_{i=1}^m$ be samples from the generated data distribution $\bbQ_\theta$. The corresponding feature vectors extracted from a pre-trained network (LeNet in our case) are $\{{\vz^r}_i\}_{i=1}^n$ and $\{{\vz^\theta}_i\}_{i=1}^m$ respectively. The FID and KID are defined as
\begin{align}
\mathrm{FID}(\mathbb{P}_r, \mathbb{Q}_\theta) =& \|\mu_r - \mu_\theta\|^2_2 + \mathrm{Tr} (\Sigma_r + \Sigma_\theta - 2 (\Sigma_r \Sigma_\theta)^{1/2}),\label{eq:fid}\\
\mathrm{KID}(\mathbb{P}_r, \mathbb{Q}_\theta) =& \frac{1}{n(n-1)}\sum_{i=1}^n\sum_{j=1,j\neq i}^n\left[\kappa(\vz^r_i,\vz^r_j)\right]\nonumber\\
&+ \frac{1}{m(m-1)}\sum_{i=1}^m\sum_{j=1,j\neq i}^m\left[\kappa(\vz^\theta_i,\vz^\theta_j)\right]\label{eq:kid}\\
&-\frac{2}{mn}\sum_{i=1}^n\sum_{j=1}^m\left[\kappa(\vz^r_i,\vz^\theta_j)\right], \nonumber
\end{align}
where ($\mu_r$, $\Sigma_r$) and ($\mu_\theta$, $\Sigma_\theta$) are the sample mean \& covariance matrix of the inception features of the real and generated data distributions, and $\kappa$ is a polynomial kernel of degree 3:
\begin{align}
\kappa(\vx, \vy) = \left(\frac{1}{d} \vx \cdot \vy  + 1\right)^3,
\end{align}
where $d$ is the dimensionality of the feature vectors.
We compute FID with 10 bootstrap resamplings and KID by sampling 100 elements without replacement from the whole generated dataset.  

\section{Implementation Details}
\label{supp:detail}
\noindent {\bf Datasets.}
For MNIST and Fashion-MNIST, we use the default train subset of the \verb|torchvision|  \footnote{\url{https://pytorch.org/vision/stable/index.html}} library for training the generator, and the default subset for evaluation. 
For Adult, we follow the preprocessing procedure in \cite{DBLP:conf/aistats/HarderAP21} to make the dataset more balanced by downsampling the class with the most number of samples.

\noindent {\bf Neural networks.}
The generator for image datasets has the following network architecture:

\begin{itemize}
    \item fc $\rightarrow$ bn $\rightarrow$ fc $\rightarrow$ bn $\rightarrow$ upsamp $\rightarrow$ relu $\rightarrow$ upconv $\rightarrow$ sigmoid, 
\end{itemize}
where fc, bn, upsamp, relu, upconv, sigmoid refers to fully connected, batch normalization, 2D bilinear upsampling, ReLU, up-convolution, and Sigmoid layers respectively.

For tabular dataset, we use the following architecture:
\begin{itemize}
    \item fc $\rightarrow$ bn $\rightarrow$ relu $\rightarrow$ fc $\rightarrow$ bn $\rightarrow$ relu  $\rightarrow$ fc $\rightarrow$ tanh/softmax,
\end{itemize}
where tanh and softmax are the Tanh and softmax layers respectively.
Network output corresponding to the continuous attribute is passed through the Tanh layer, whereas network output corresponding to the categorical attribute is passed through the softmax layer, where the category with the highest value from the softmax output is set to be the generated value of the categorical attribute. 
We train both networks conditioned on the class labels.
For DP-MERF, we use the same architectures for fair comparisons.

\noindent {\bf Hyperparameters.}
We use Adam optimizer with learning rates of 0.01 for both the minimization and maximization objectives.
Batch size is 100 (1,100) for the image datasets (tabular dataset).
The number of frequencies is set to 1,000 (3,000) for MNIST and tabular datasets (Fashion-MNIST).
The training iterations are 6,000, 3,000, and 8,000 for MNIST, Fashion-MNIST, and tabular datasets respectively.

\section{Additional Results}
\label{supp:result}
\subsection{ Additional tasks and dataset for tabular data}
We here present more results relevant to PEARL's generative modeling of tabular data.
We first compare PEARL with DPCGAN in \tabref{adult_dpcgan} using the same classifiers as in the main text. 

Three additional metrics/tasks are utilized:
\noindent {\bf Range query}. 1000 range queries containing three attributes are sampled randomly.
The average $l_1$ error between the original and synthetic datasets is evaluated. 
\noindent {\bf Marginal release}. All 2-way marginals are calculated and the average $l_1$ error is evaluated.
\noindent {\bf MMD}. 
The max mean discrepancy (MMD) between the original and synthetic dataset is evaluated. 
We use the MMD because it is a powerful measure that could in principle detect any discrepancy between two distributions. 
A Gaussian kernel is used in the evaluation.

Additionally, we perform evaluation on another tabular dataset: Credit (\cite{dal2015calibrating}).
Our results comparing with DP-MERF as shown in \tabref{tab-other} demonstrate that PEARL is competitive with various tasks and datasets.








\begin{table}[t]
\centering
\caption{Evaluation with ROC, PRC (higher is better), range query, marginal release and MMD (lower is better) on tabular datasets at $(\epsilon, \delta) = (1, 10^{-5})$.}
 \vspace{-.5em}
\label{tab:tab-other}
\scalebox{1.0}{
\begin{tabular}{llllllll}
\toprule
Datasets& Methods & ROC&PRC&Range & Marginal & MMD   \\
\midrule
Adult & DP-MERF & 0.64& 0.54 & 0.10  & 1.37 &  $2\times 10^{-3}$ \\
 & {\bf Ours} & {\bf 0.72}& {\bf 0.62}& {\bf 0.06} & {\bf 0.71 } & ${\bf 1\times 10^{-7}}$\\ 
 \midrule
Credit & DP-MERF  &0.66&0.26 & 0.049&  0.95 & $2\times 10^{-5}$ \\
 & {\bf Ours}   & {\bf 0.84}& {\bf 0.68}& {\bf 0.026}& {\bf 0.72}  &  ${\bf 1\times 10^{-8}}$\\ 
 \bottomrule
\end{tabular}}
\vspace{-6pt}
\end{table}

\subsection {MNIST and Fashion-MNIST images}
 \figref{moremnist} and \figref{morefmnist} show more enlarged images of MNIST and Fashion-MNIST at various level of $\epsilon$ generated with PEARL.
 We also show images generated by PEARL but without critic, and DPGAN.
 Note that we are unable to generate meaningful images from DP-MERF after taking privacy into account appropriately (see \secref{exp}).

\subsection { Adult histograms}
We show more histogram plots for continuous and categorical attributes comparing our method with DP-MERF in \figref{morehisto} and \figref{morehisto2}.

\begin{table}[]
    \centering
\caption{Quantitative results for the Adult dataset evaluated at $(\epsilon, \delta) = (1, 10^{-5})$.}
    \label{tab:adult_dpcgan}
    \scalebox{0.8}{
\begin{tabular}{lcccccc}
\toprule
&   \multicolumn{2}{c}{Real data}   &  \multicolumn{2}{c}{DPCGAN}  &   \multicolumn{2}{c}{Ours} \\
\cmidrule(lr){2-3}\cmidrule(lr){4-5}\cmidrule(lr){6-7}
    &         ROC &         PRC &     ROC  &     PRC &   ROC &   PRC \\
\cmidrule(lr){1-1}\cmidrule(lr){2-3}\cmidrule(lr){4-5}\cmidrule(lr){6-7}
 LR    &  0.788 &  0.681 & $0.501 \pm 0.001$ & $0.484 \pm 0.001$ & $\bf 0.752 \pm 0.009$ & $\bf 0.641 \pm 0.015$ \\
 Gaussian NB   &  0.629 &  0.511 & $0.499 \pm 0.005$ & $0.483 \pm 0.003$  & $\bf 0.661 \pm 0.036$ & $\bf 0.537 \pm 0.028$ \\
 Bernoulli NB  &  0.769 &  0.651 & $0.482 \pm 0.159$ & $0.498 \pm 0.091$  & $\bf 0.763 \pm 0.008$ & $\bf 0.644 \pm 0.009$ \\
 Linear SVM             &  0.781 &  0.671 & $0.501 \pm 0.001$ & $0.484 \pm 0.001$ & $\bf 0.752 \pm 0.009$ & $\bf 0.640 \pm 0.015$  \\
 Decision Tree          &  0.759 &  0.646 &  $0.599 \pm 0.078$ & $0.546 \pm 0.056$  &$\bf 0.675 \pm 0.03$ & $\bf 0.582 \pm 0.028$ \\
 LDA                    &  0.782 &  0.670  & $0.501 \pm 0.002$ & $0.484 \pm 0.002$  & $\bf 0.755 \pm 0.005$ & $\bf 0.640 \pm 0.007$  \\
 Adaboost               &  0.789 &  0.682 & $0.523 \pm 0.065$ & $0.5 \pm 0.036$  & $\bf 0.709 \pm 0.031$ & $\bf 0.628 \pm 0.024$ \\
 Bagging                &  0.772 &  0.667 & $0.546 \pm 0.082$ & $0.516 \pm 0.058$  & $\bf 0.687 \pm 0.041$ & $\bf 0.601 \pm 0.039$ \\
 GBM                    &  0.800   &  0.695 & $0.544 \pm 0.101$ & $0.518 \pm 0.066$  & $\bf 0.709 \pm 0.029$ & $\bf 0.635 \pm 0.025$ \\
 MLP &  0.776 &  0.660  & $0.503 \pm 0.004$ & $0.486 \pm 0.004$  & $\bf 0.744 \pm 0.012$ & $\bf 0.635 \pm 0.015$ \\
\bottomrule
\end{tabular}}
\end{table}

\begin{figure}[h]
\subfigure[$\epsilon = \infty$]{
\includegraphics[width=0.4\columnwidth]{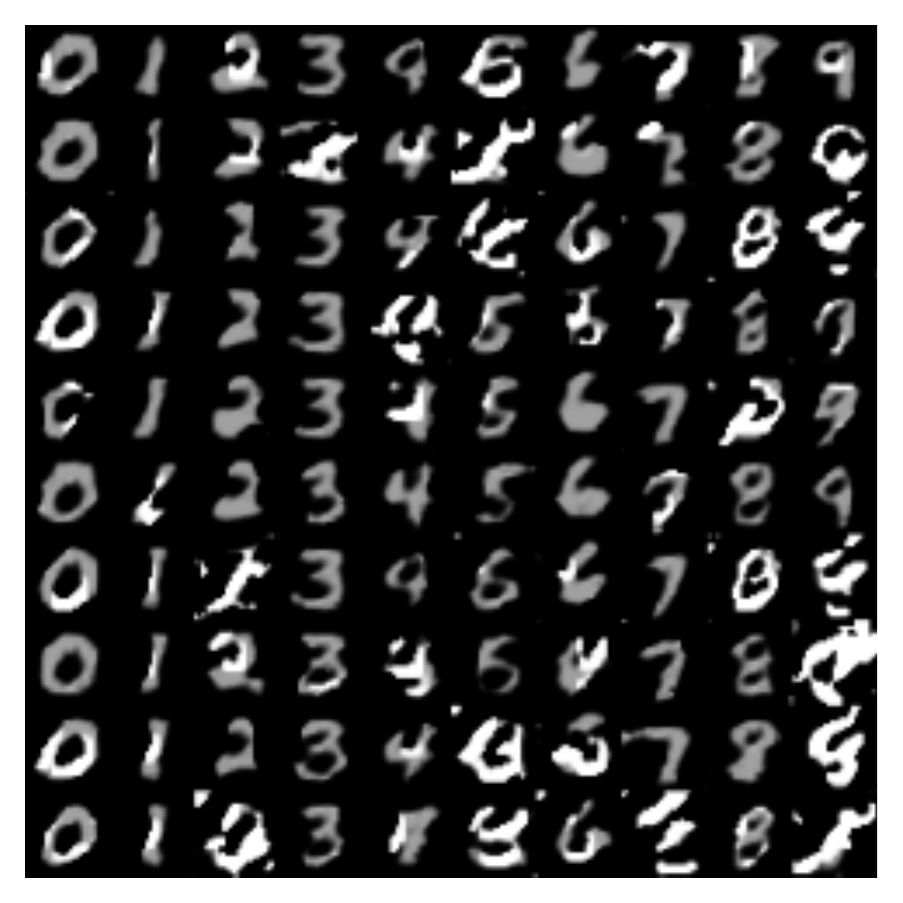}
}
\subfigure[$\epsilon = 10$]{
\includegraphics[width=0.4\columnwidth]{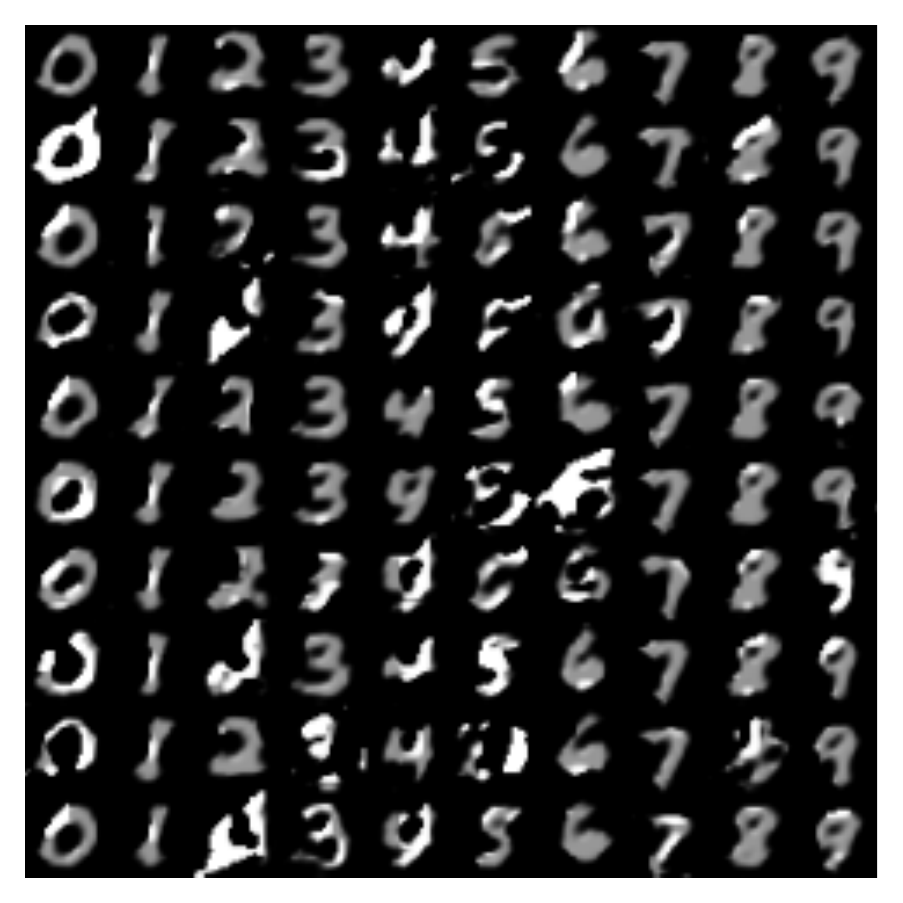}
}
\subfigure[$\epsilon = 1$]{
\includegraphics[width=0.4\columnwidth]{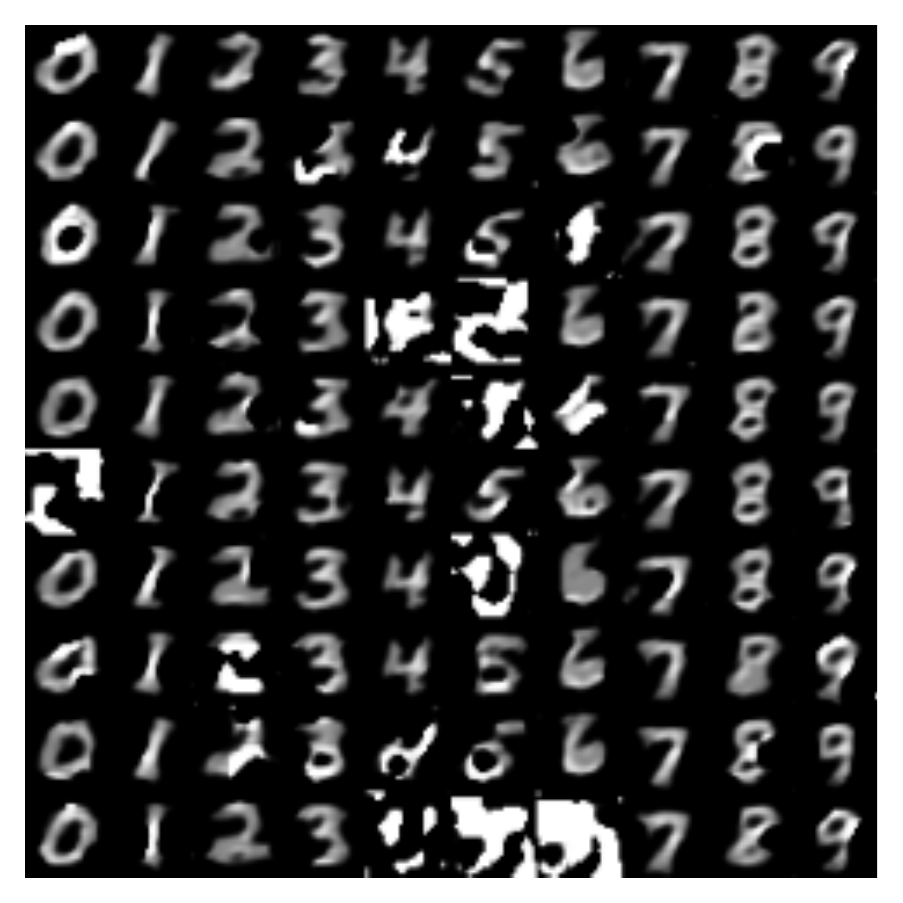}
}
\subfigure[$\epsilon = 0.2$]{
\includegraphics[width=0.4\columnwidth]{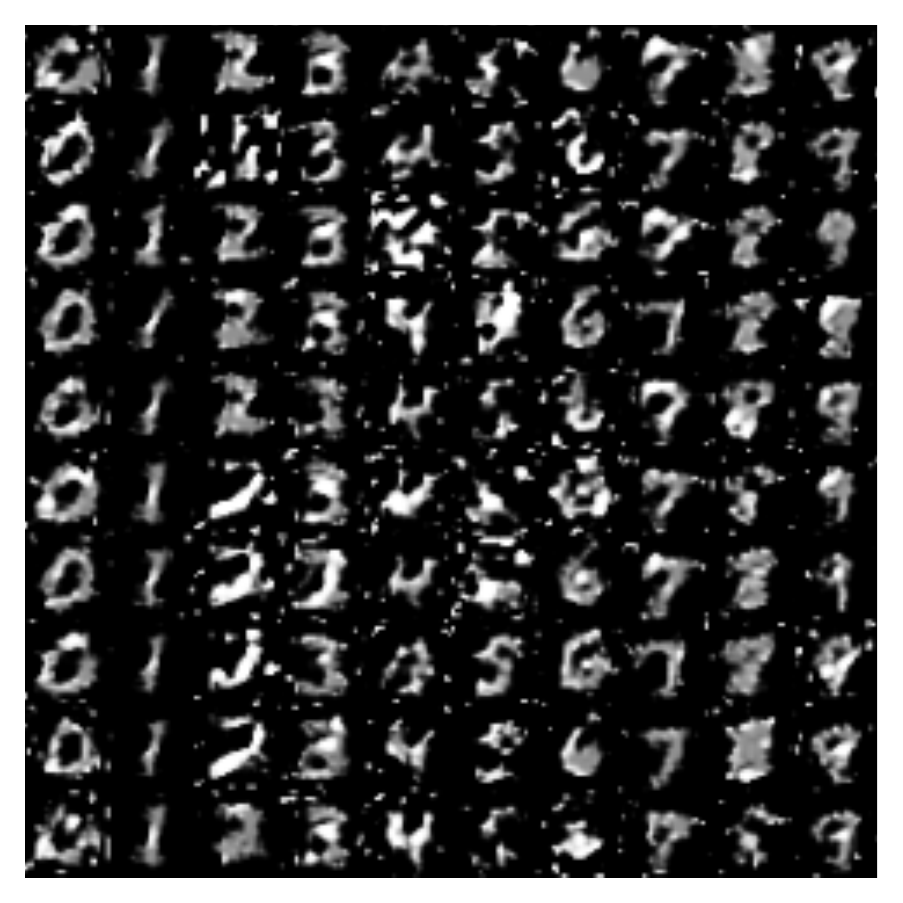}
}
 \caption{Additional generated MNIST images at various $\epsilon$.}
    \label{fig:moremnist}
\end{figure}

\begin{figure}[h]
\subfigure[$\epsilon = \infty$]{
\includegraphics[width=0.4\columnwidth]{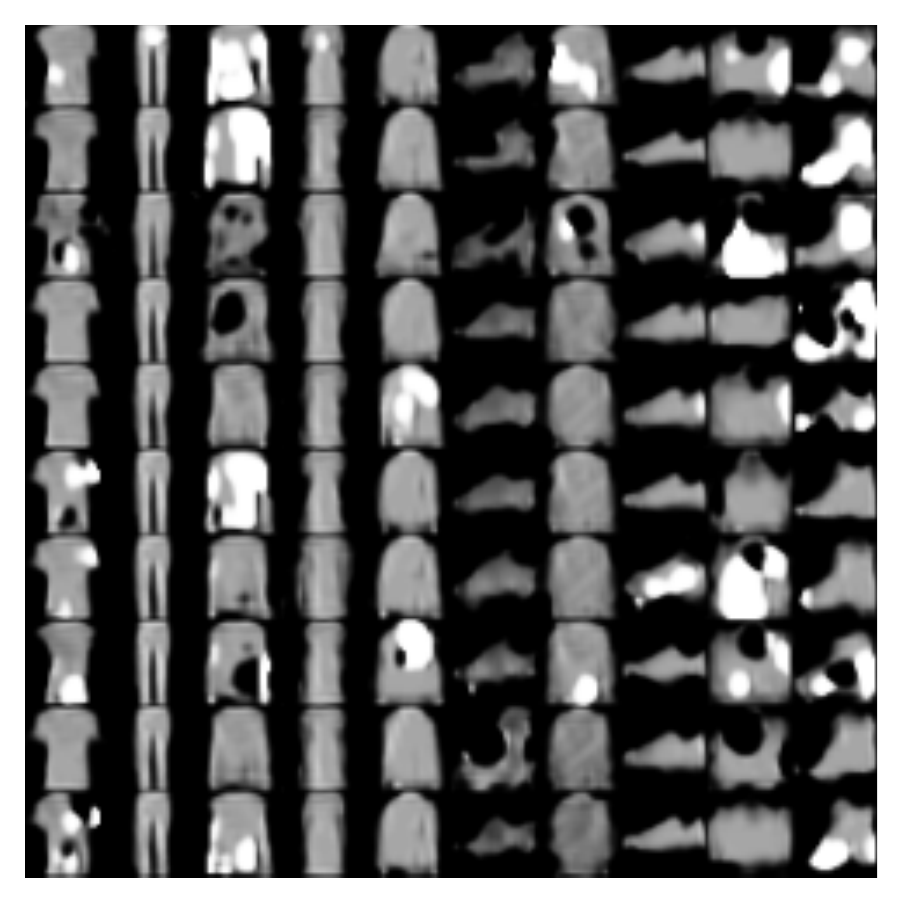}
}
\subfigure[$\epsilon = 10$]{
\includegraphics[width=0.4\columnwidth]{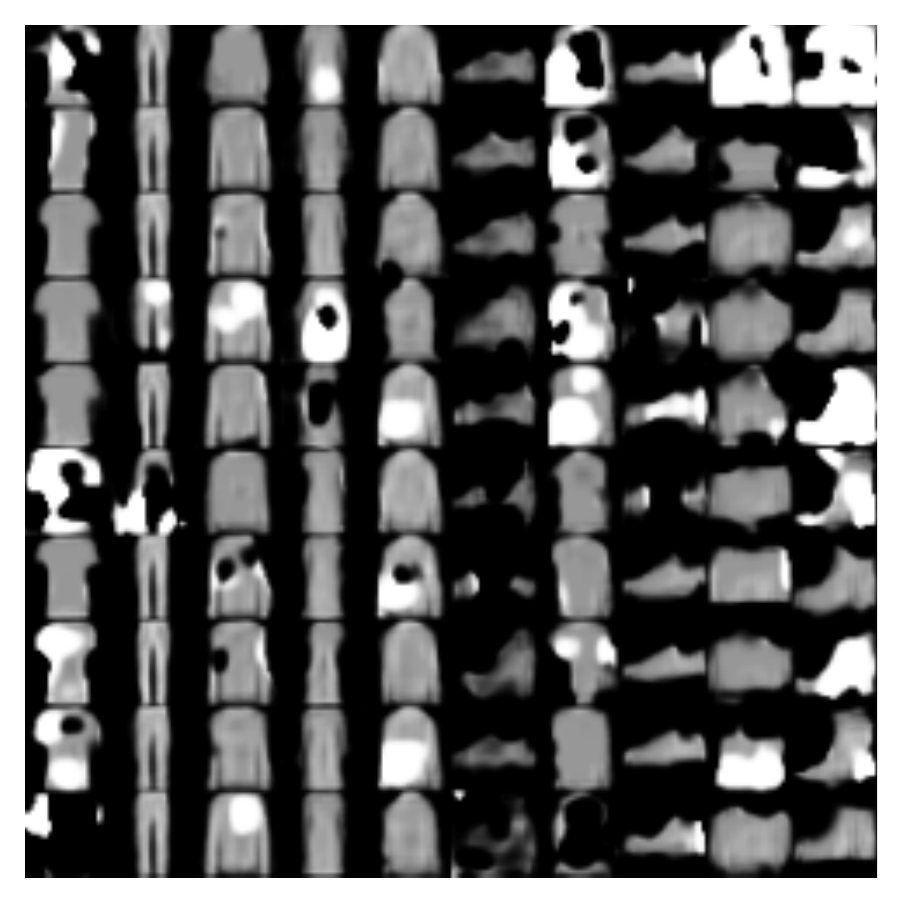}
}
\subfigure[$\epsilon = 1$]{
\includegraphics[width=0.4\columnwidth]{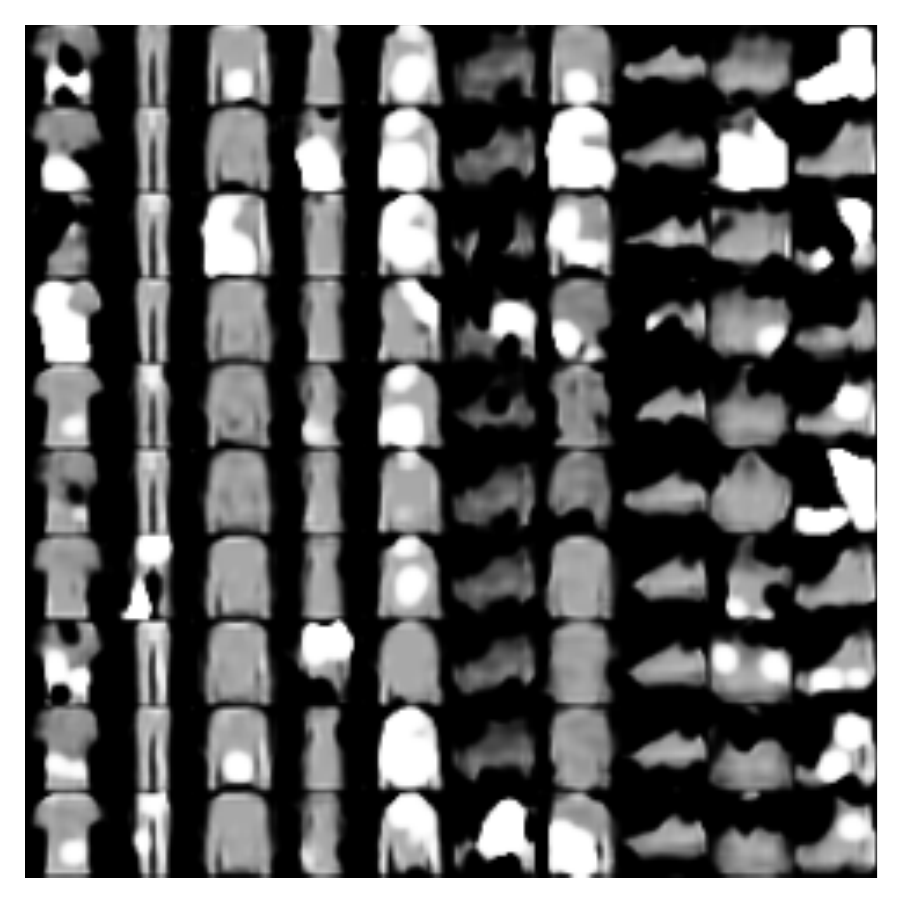}
}
\subfigure[$\epsilon = 0.2$]{
\includegraphics[width=0.4\columnwidth]{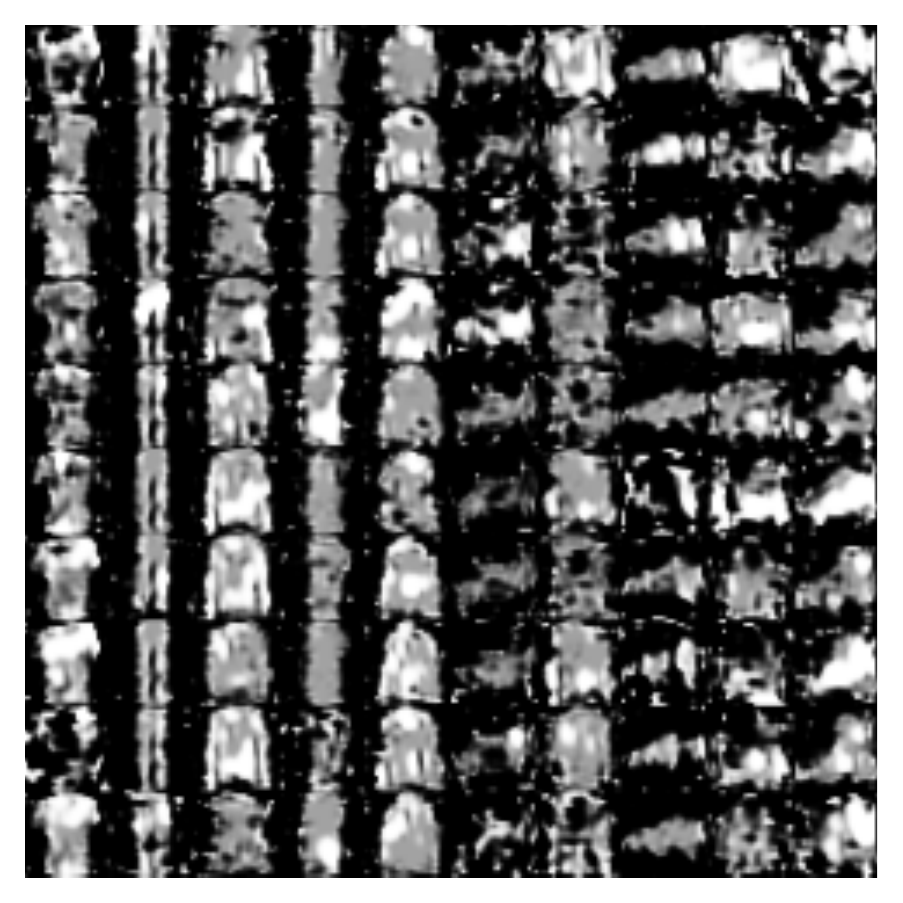}
}
 \caption{Additional generated Fashion-MNIST images at various $\epsilon$.}
    \label{fig:morefmnist}
\end{figure}
\begin{figure}[h]
\subfigure[$\epsilon =1 $, MNIST (DPGAN)]{
\includegraphics[width=0.4\columnwidth]{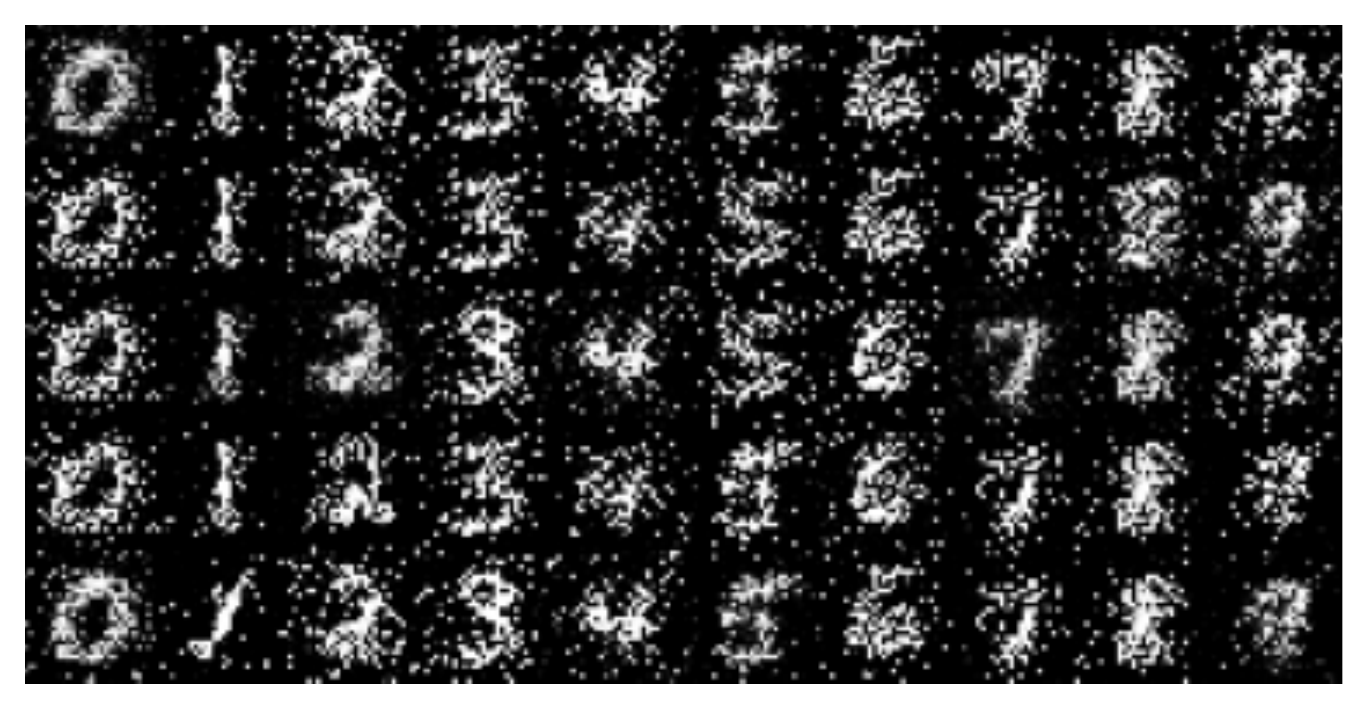}
}
\subfigure[$\epsilon = 1$, FMNIST(DPGAN)]{
\includegraphics[width=0.4\columnwidth]{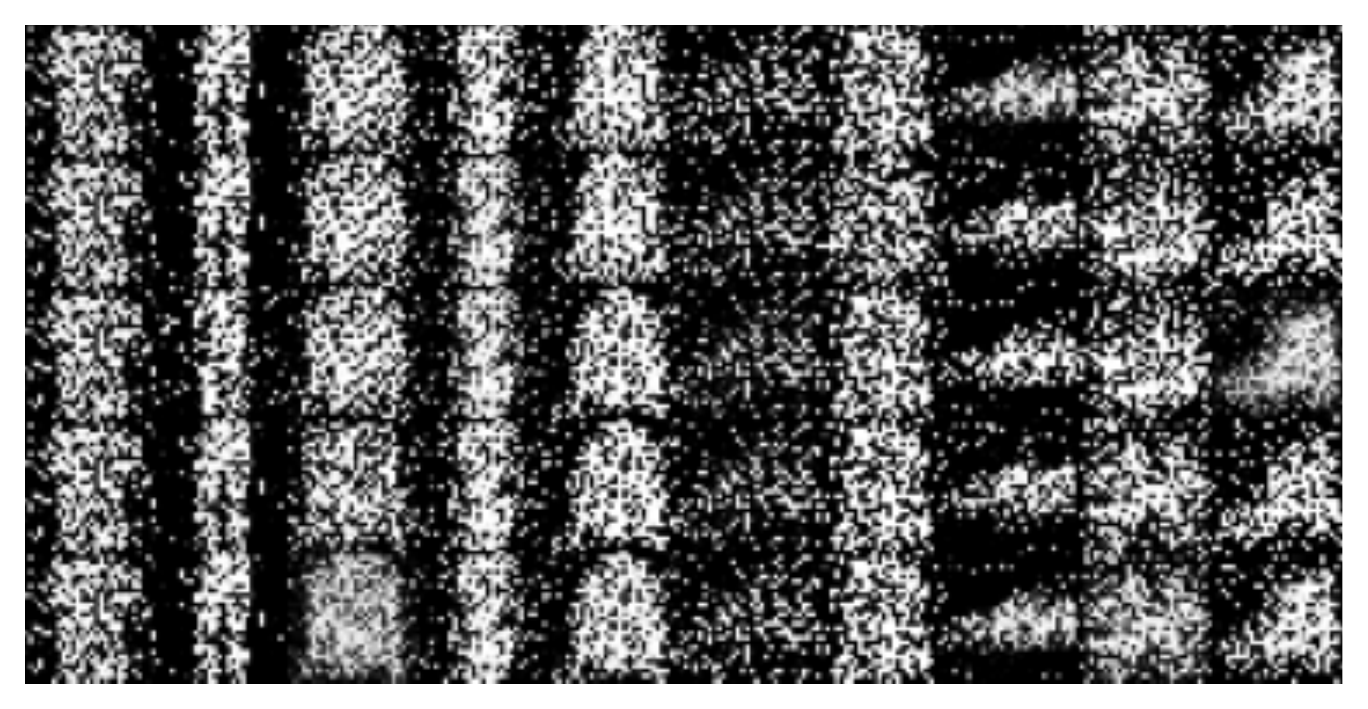}
}
 \caption{Images generated by DPGAN at $\epsilon=1$.}
\end{figure}
\begin{figure}[h]
\subfigure[$\epsilon =1 $, MNIST (Ours w/o critic)]{
\includegraphics[width=0.4\columnwidth]{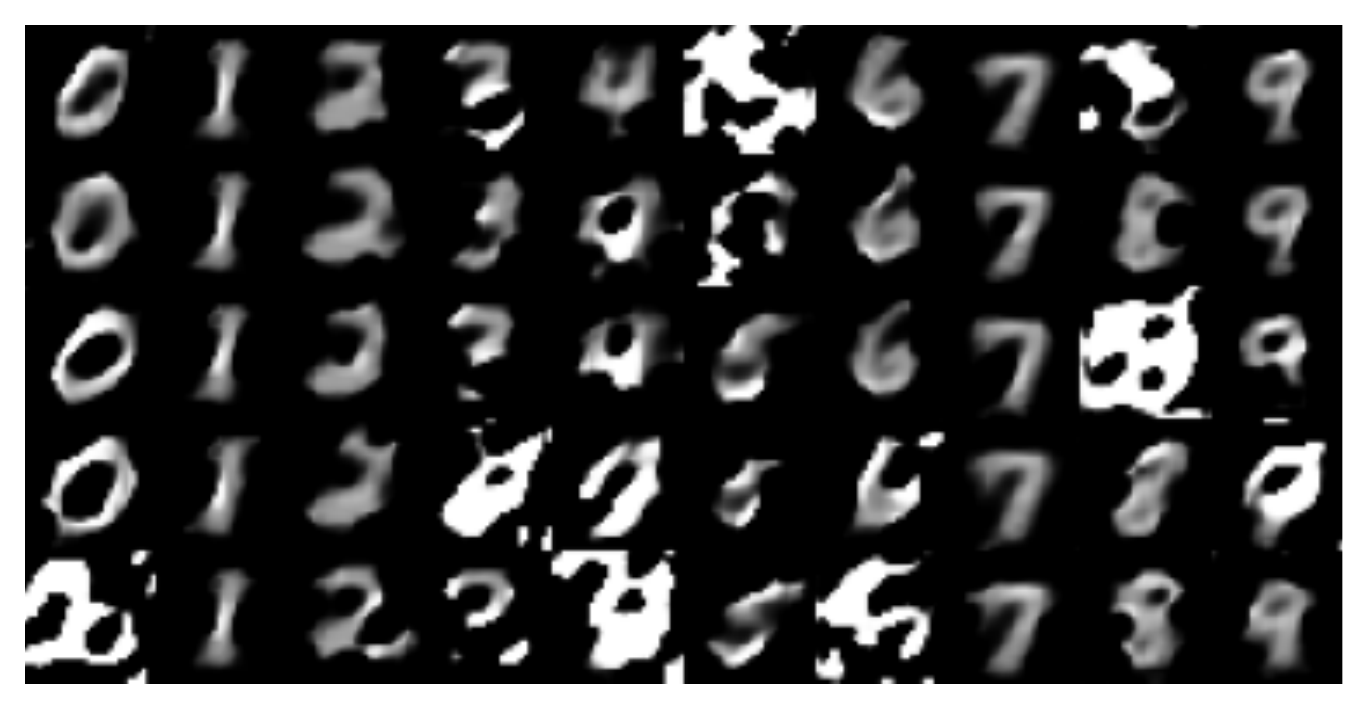}
}
\subfigure[$\epsilon = 1$, FMNIST(Ours w/o critic)]{
\includegraphics[width=0.4\columnwidth]{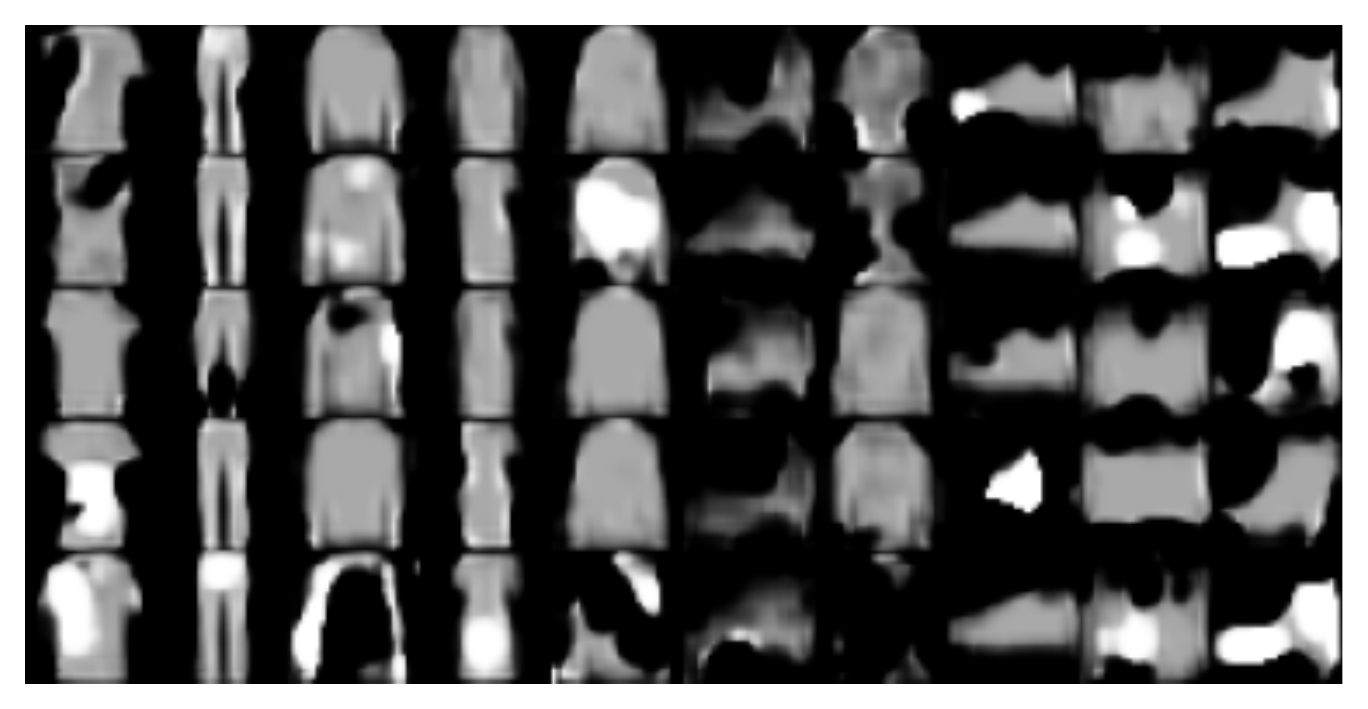}
}
 \caption{Images generated by our proposal (PEARL) but without critic at $\epsilon=1$.}
\end{figure}

\begin{figure}[h]
\subfigure[Age]{
\includegraphics[width=0.65\columnwidth]{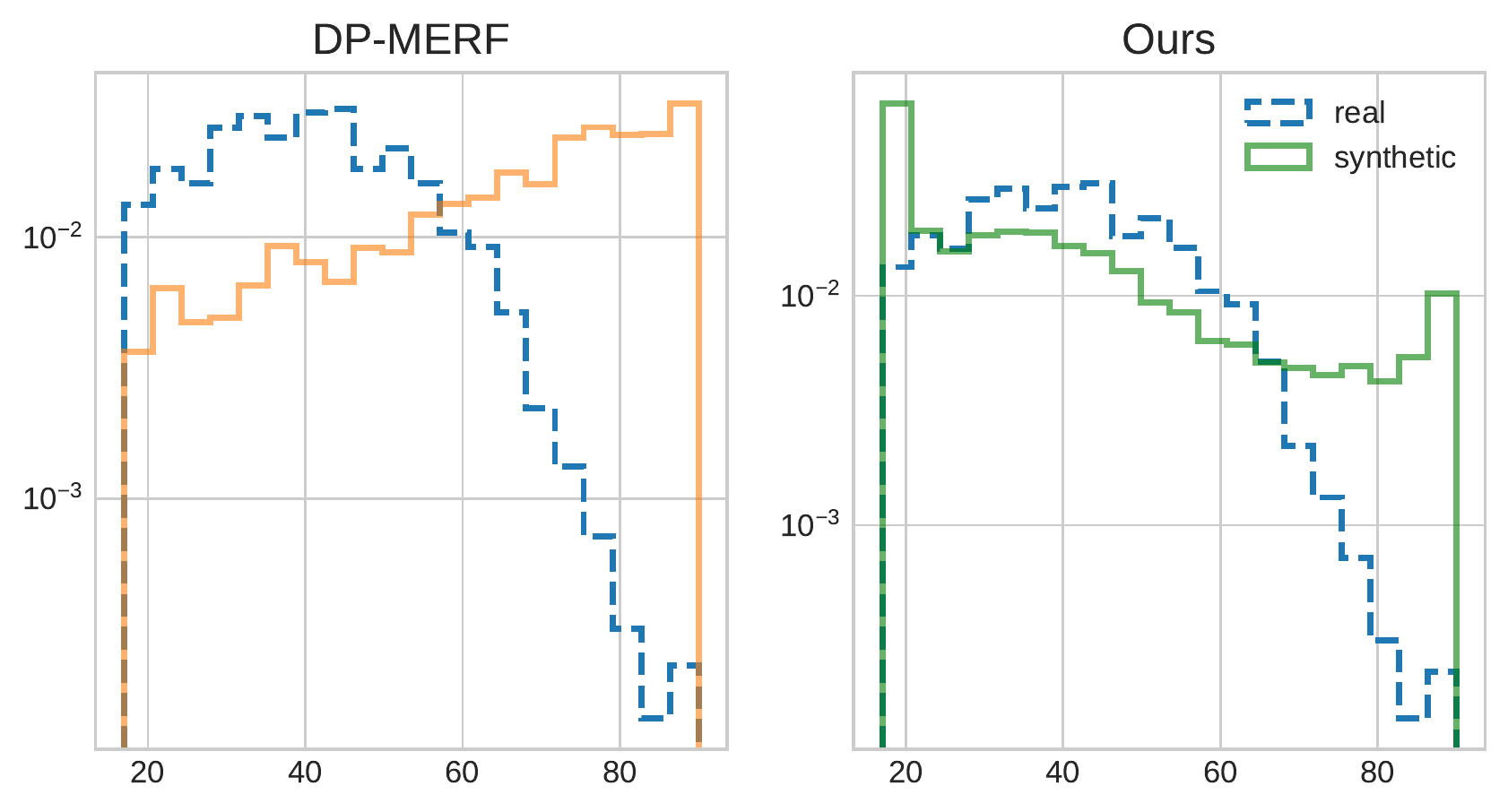}
}
\subfigure[fnlwgt]{
\includegraphics[width=0.65\columnwidth]{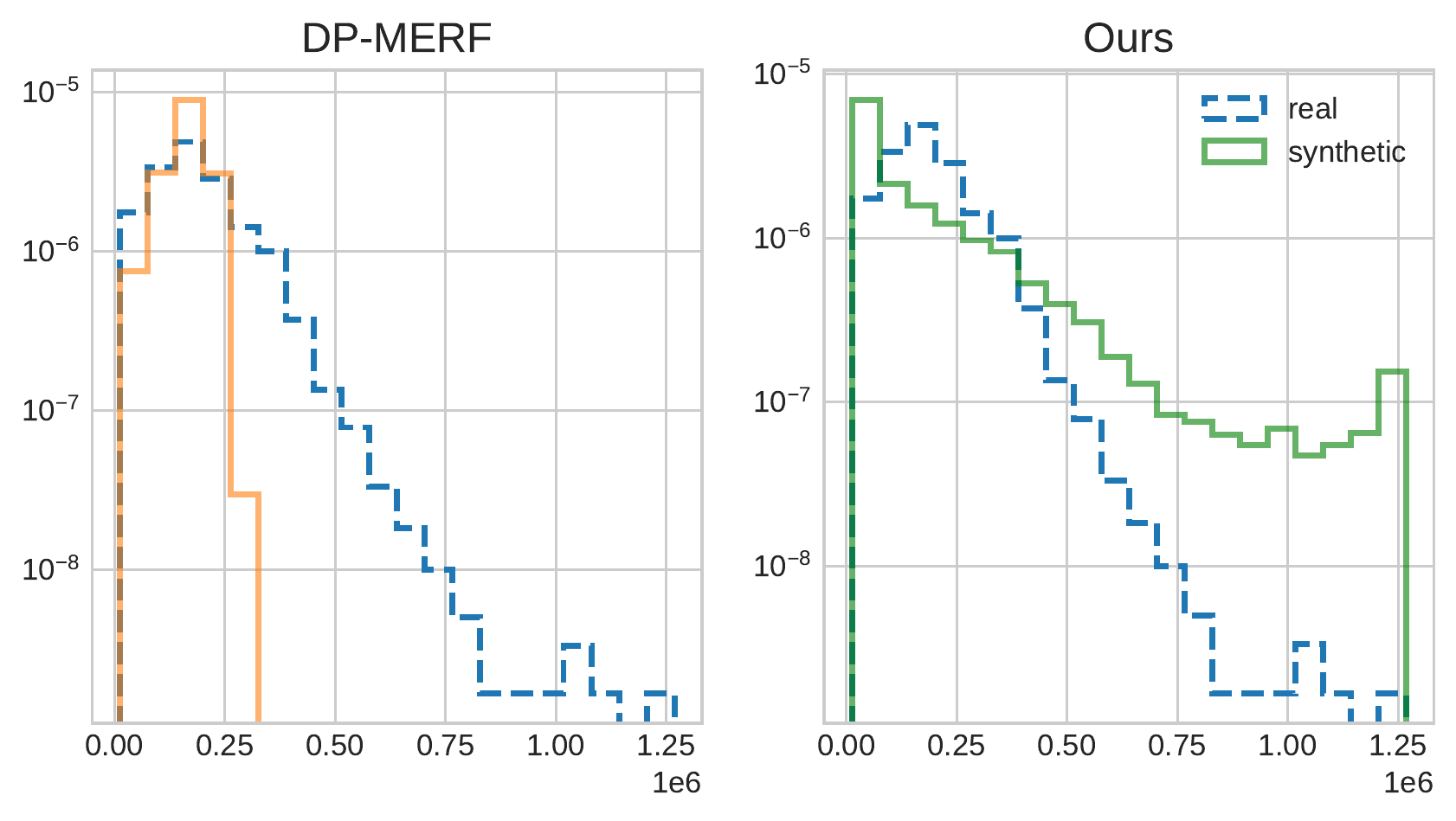}
}
\subfigure[Capital Gain]{
\includegraphics[width=0.65\columnwidth]{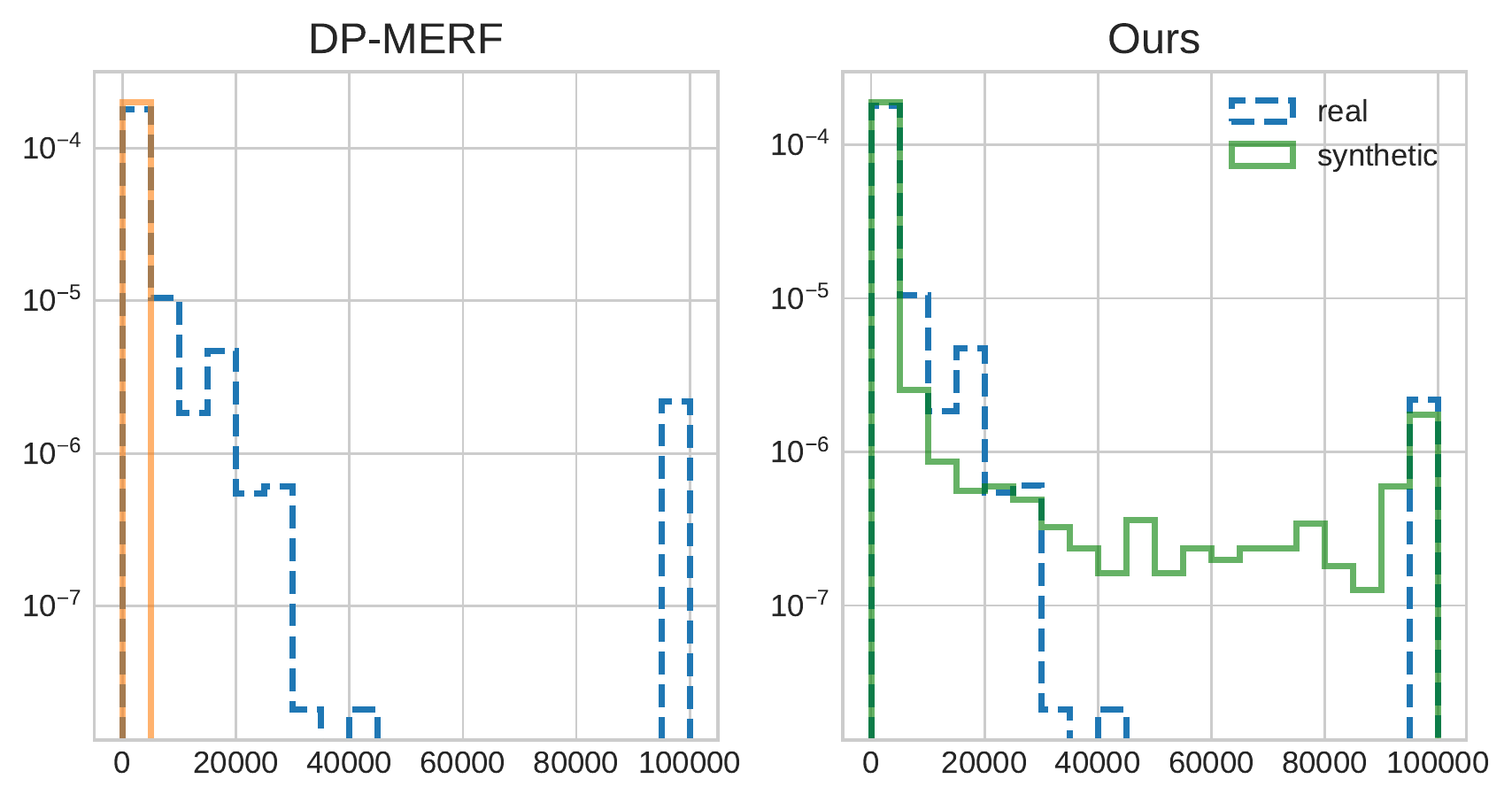}
}
\subfigure[Capital loss]{
\includegraphics[width=0.65\columnwidth]{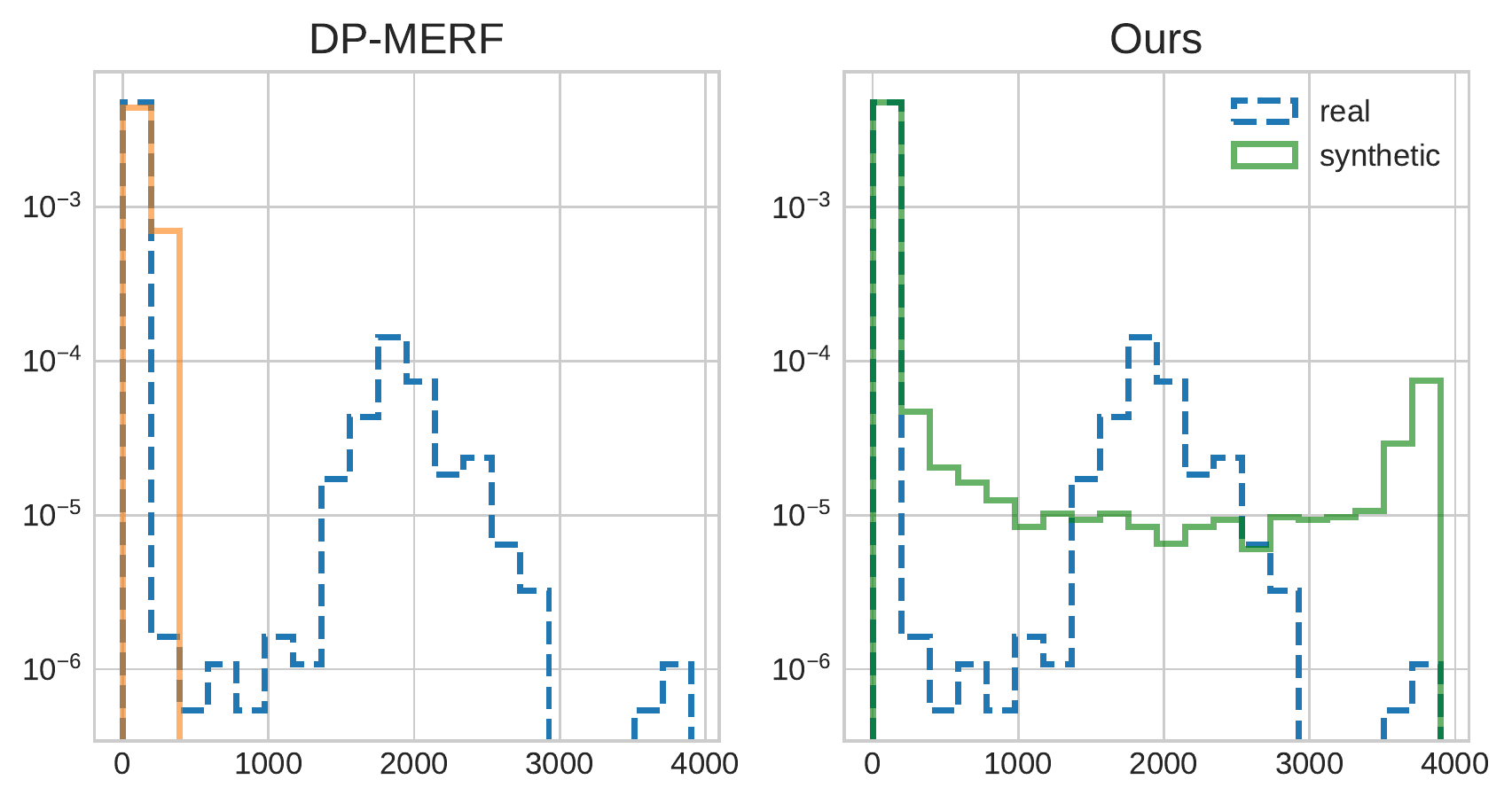}
}
 \caption{Histograms of various continuous attributes of Adult dataset comparing real and synthetic data.}
    \label{fig:morehisto}
\end{figure}

\begin{figure}[h]
\subfigure[Education]{
\includegraphics[width=0.4\columnwidth]{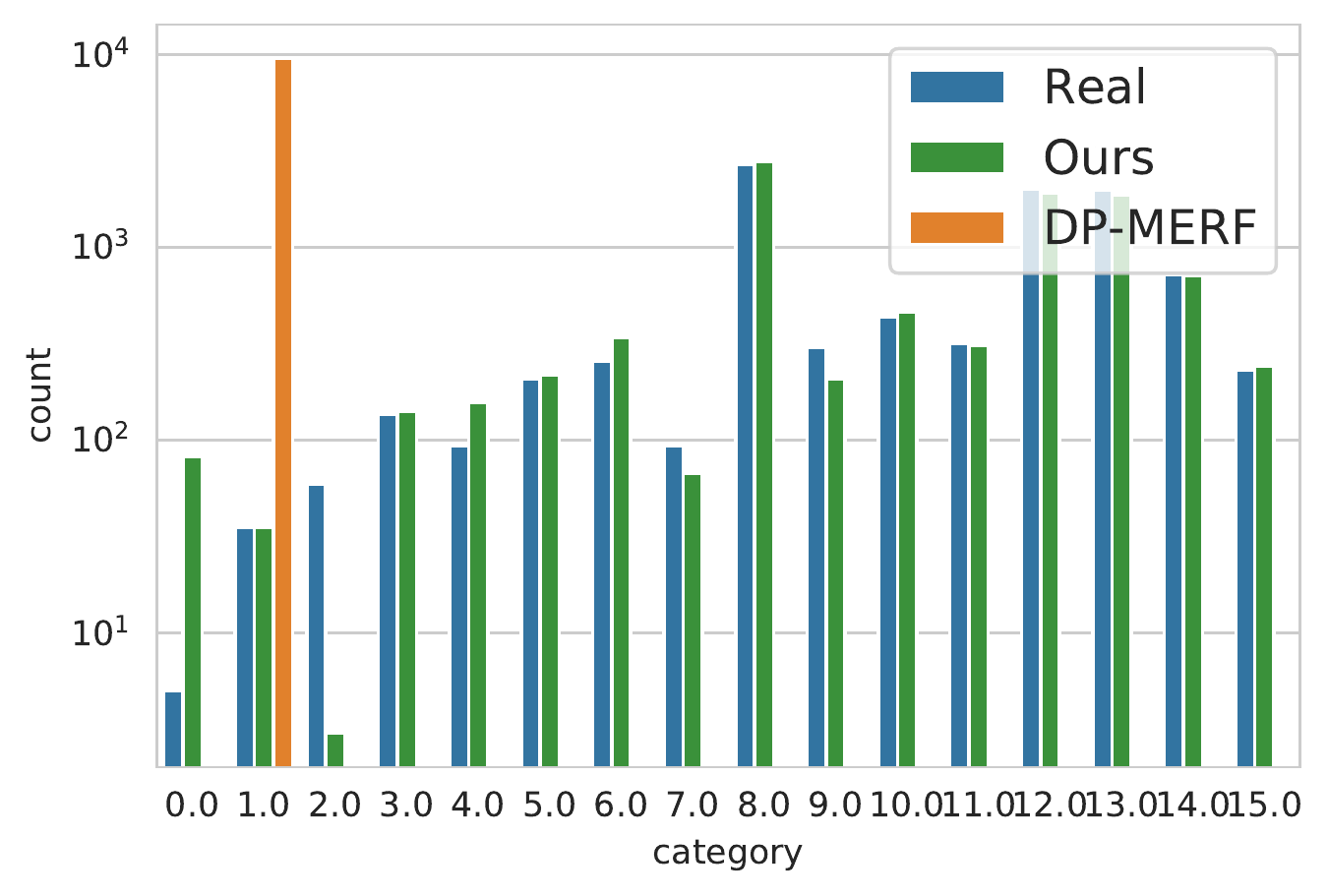}
}
\subfigure[Occupation]{
\includegraphics[width=0.4\columnwidth]{fig/adult_occupation.pdf}
}
\subfigure[Work class]{
\includegraphics[width=0.4\columnwidth]{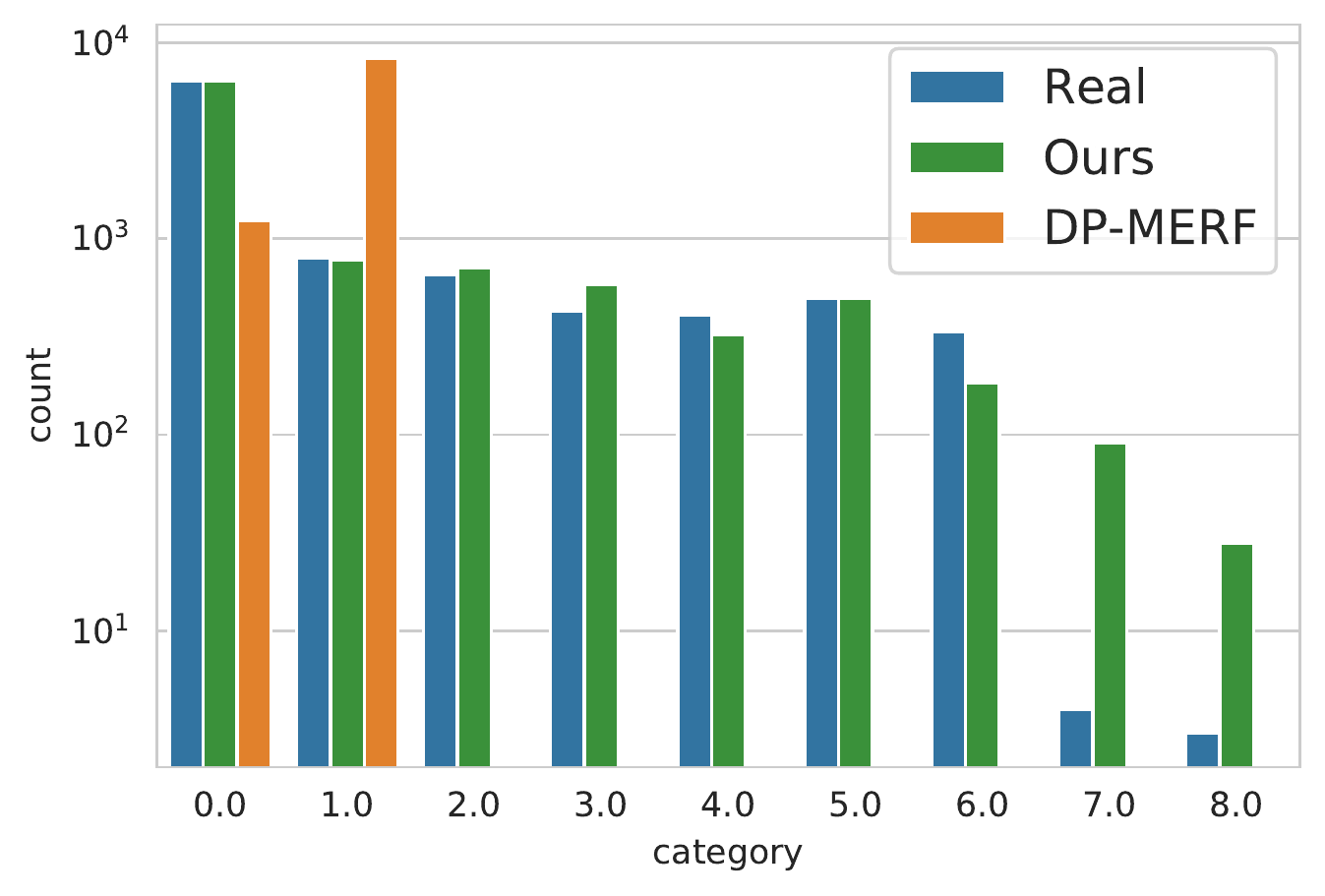}
}
\subfigure[Relationship]{
\includegraphics[width=0.4\columnwidth]{fig/adult_relationship.pdf}
}
\subfigure[Sex]{
\includegraphics[width=0.4\columnwidth]{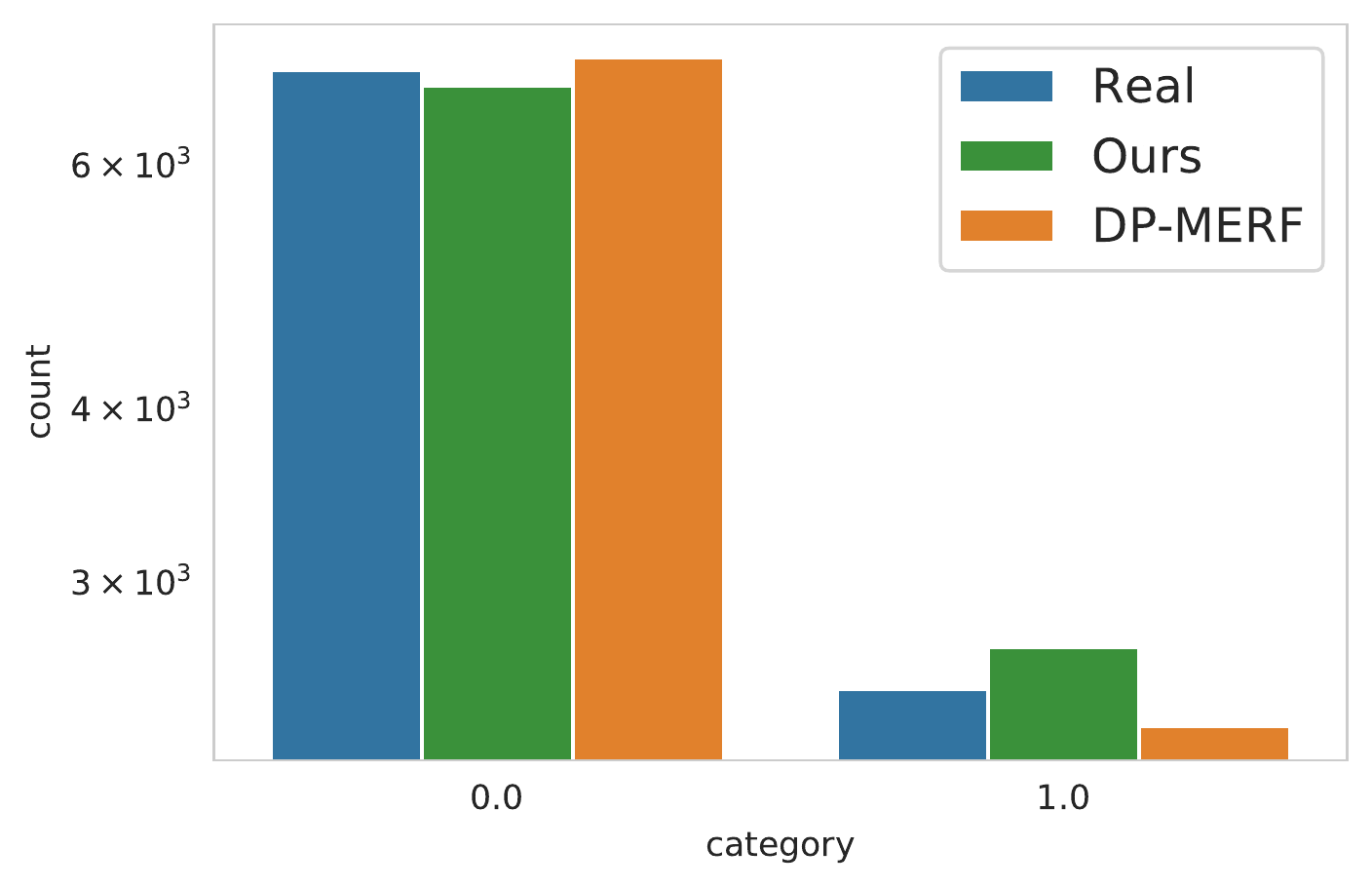}
}
\subfigure[Race]{
\includegraphics[width=0.4\columnwidth]{fig/adult_race.pdf}
}
 \caption{Histograms of various categorical attributes of Adult dataset comparing real and synthetic data.}
    \label{fig:morehisto2}
\end{figure}
\end{document}